\useunder{\uline}{\ul}{}
\newcommand{\tikzxmark}{%
\tikz[scale=0.23] {
    \draw[line width=0.7,line cap=round] (0,0) to [bend left=6] (1,1);
    \draw[line width=0.7,line cap=round] (0.2,0.95) to [bend right=3] (0.8,0.05);
}}
\newtheorem{thm}{Theorem}
\newtheorem{lem}{Lemma}
\newtheorem{cor}{Corollary}
\newtheorem{example}{Example}
\newtheorem{prop}{Proposition}
\newtheorem{obs}{Observation}
\newtheorem{asum}{Assumption}
\definecolor{githubblue}{RGB}{49,46,138}
\definecolor{schemegreen}{RGB}{15,137,15}
\definecolor{operator}{RGB}{0,.3,.7}
\definecolor{azure}{rgb}{0,.3,.7}
\definecolor{comment_color}{rgb}{0.24, 0.51, 0.51}
\definecolor{applegreen}{rgb}{0.55, 0.71, 0.0}
\definecolor{deepcarmine}{rgb}{0.66, 0.13, 0.24}
\newcommand{\xvar}{\ensuremath{\textcolor{deepcarmine}{\mathsf{x}}}}
\newcommand{\yvar}{\ensuremath{\textcolor{deepcarmine}{\mathsf{y}}}}
\newcommand{\probability}{\ensuremath{\text{P}}}
\newcommand{\progvariable}{\ensuremath{\mathsf{P}}}
\newcommand{\opand}{\ensuremath{\texttt{\color{azure}{\textbf{And}}}}}
\newcommand{\opor}{\ensuremath{\texttt{\color{azure}{\textbf{Or}}}}}
\newcommand{\opneg}{\ensuremath{\texttt{\color{azure}{\textbf{Not}}}}}
\newcommand{\opxor}{\ensuremath{\texttt{\color{azure}{\textbf{XOR}}}}}
\newcommand{\opimplication}{\ensuremath{\texttt{\color{azure}{\textbf{Implies}}}}}
\newcommand{\lmpredicate}[1]{\ensuremath{\texttt{\textcolor{azure}{\textbf{#1}}}}}
\newcommand{\wmc}[2]{%
   \ifthenelse{ \equal{#2}{} }
      {\ensuremath{\textcolor{black}{\text{WMC}}\big(#1;\theta\big)}}
      {\ensuremath{\textcolor{black}{\text{WMC}}\big(#1;\theta,#2\big)}}
}
\newcommand{\simplewmc}[2]{%
   \ifthenelse{ \equal{#2}{} }
      {\ensuremath{\textcolor{black}{\text{WMC}}\big(#1\big)}}
      {\ensuremath{\textcolor{black}{\text{WMC}}\big(#1;#2\big)}}
}
\newcommand{\fuzzy}[1]{\ensuremath{\textcolor{black}{\textsc{Fuzzy}}(#1;\theta)}}
\newcommand{\simplefuzzy}[1]{\ensuremath{\textcolor{black}{\textsc{Fuzzy}}\big(#1\big)}}
\definecolor{halfgray}{gray}{0.55}
\definecolor{ipython_frame}{RGB}{207, 207, 207}
\definecolor{deepblue}{rgb}{0,0,0.5}
\definecolor{deepred}{rgb}{0.6,0,0}
\definecolor{deepgreen}{rgb}{0,0.5,0}
\lstdefinelanguage[]{iPython}[]{python}{
    commentstyle=\color{cyan}\ttfamily,
    stringstyle=\color{red},
    keywordstyle=\color{deepblue}\ttb,
    keepspaces=true,
    showspaces=false,
    showstringspaces=false,
    morekeywords=[4]{assert},
    morekeywords=[3]{Implies,Or,And,Not},
    keywordstyle=[3]\color{blue}\bf\ttfamily,
    keywordstyle=[4]\color{deepred},
    rulecolor=\color{ipython_frame},
    frame=l,
    numbers=left,
    numberstyle=\normalsize\color{halfgray},
    xleftmargin={0.2cm},
    basicstyle=\fontfamily{cmtt}\normalsize,
    keywordstyle=\color{deepgreen},
}
\lstdefinelanguage{Scheme}{
  morekeywords=[1]{XOR,M,T,Good,Implies,And,Or,Biconditional,Not,Ref,Mref,Noise},
  morekeywords=[2]{begin},
  morekeywords=[3]{import, export},
  alsodigit=!\$\%&*+-./:<=>?@^_~,
  sensitive=true,
  escapeinside=`',
  morecomment=[l]{;},
  morecomment=[l]{\#},
  morecomment=[s]{\#|}{|\#},
  morestring=[b]",
  basicstyle=\bf\fontfamily{cmtt},
  keywordstyle=\bf\ttfamily\color[rgb]{0,.3,.7},
  commentstyle={\color[rgb]{0.24, 0.51, 0.51}},
  stringstyle={\color[rgb]{0.75, 0.49, 0.07}},
  upquote=true,
  breaklines=false, 
  breakatwhitespace=true,
  literate=*{`}{{`}}{1},
  showstringspaces=false,
}
\lstdefinestyle{churchstyle}{
  commentstyle=\color{gray},
  keywordstyle=\color{githubblue},
  numberstyle=\color{black}, 
  stringstyle=\color{red},
  basicstyle=\ttfamily\color{githubblue},
  breakatwhitespace=false,         
  breaklines=false,  
  captionpos=b,                    
  keepspaces=true,                 
  numbers=none,                    
  numbersep=5pt,                  
  showspaces=false,                
  showstringspaces=false,
  showtabs=false,                  
  tabsize=2,
  literate=*{\{}{{\textcolor{NavyBlue}{\{}}}{1}
        {\}}{{\textcolor{black}{\}}}}{1}
        {[}{{\textcolor{black}{[}}}{1}
        {]}{{\textcolor{black}{]}}}{1}
        {(}{{\textcolor{black}{(}}}{1}
        {)}{{\textcolor{black}{)}}}{1}%
}
\begin{document}

\twocolumn[
\icmltitle{Understanding the Logic of Direct Preference Alignment through Logic}



\icmlsetsymbol{equal}{*}

\begin{icmlauthorlist}
\icmlauthor{Kyle Richardson}{ai2}
\icmlauthor{Vivek Srikumar}{to}
\icmlauthor{Ashish Sabharwal}{ai2}

\vspace{0.5em}
$^1$ Allen Institute for AI  \qquad  $^2$ University of Utah \\ 
\vspace{0.2em}
{\small{{\{kyler,ashishs\}@allenai.org} \quad svivek@cs.utah.edu}
}

\end{icmlauthorlist}

\icmlaffiliation{ai2}{Allen Institute for AI}
\icmlaffiliation{to}{University of Utah}

\icmlcorrespondingauthor{Kyle Richardson}{kyler@allenai.org}

\icmlkeywords{Machine Learning, ICML}

\vskip 0.3in
]

\begin{abstract}
    Recent direct preference alignment algorithms (DPA), such as \texttt{DPO}, have shown great promise in aligning large language models to human preferences. While this has motivated the development of many new variants of the original DPO loss, understanding the differences between these recent proposals, as well as developing new DPA loss functions, remains difficult given the lack of a technical and conceptual framework for reasoning about the underlying semantics of these algorithms. In this paper, we attempt to remedy this by formalizing DPA losses in terms of discrete reasoning problems. Specifically, we ask: \emph{\textcolor{black}{Given an existing DPA loss, can we systematically derive a symbolic program that characterizes its semantics?}}
    We propose a novel formalism for characterizing preference losses for single model and reference model based approaches, and identify symbolic forms for a number of commonly used DPA variants. Further, we show how this formal view of preference learning sheds new light on both the size and structure of the DPA loss landscape, making it possible to not only rigorously characterize the relationships between recent loss proposals but also to systematically explore the landscape and derive new loss functions from first principles.
    We hope our framework and findings will help provide useful guidance to those working on human AI alignment.
    
\end{abstract}

\section{Introduction}

Symbolic logic has long served as the de-facto language for expressing complex knowledge throughout computer science \citep{halpern2001unusual}, including in AI \citep{mccarthy1960programs,nilsson1991logic} and \textcolor{black}{early ML} \citep{mcculloch1943logical}, owing to its clean semantics.
Symbolic approaches to reasoning that are driven by declarative knowledge, in sharp contrast to purely machine learning-based approaches, 
have the advantage of allowing us to reason transparently about the behavior and correctness of the resulting systems. In this paper we focus on the broad question: \emph{Can the declarative approach be leveraged to better understand and formally specify algorithms for large language models (LLMs)?} 

\begin{figure}
 \centering

\centering

\begin{tikzpicture}[
    box/.style={draw, ultra thick, rectangle, rounded corners, inner sep=8pt, minimum width=3cm,minimum height=2.7cm,scale=0.5},
    label node/.style={above, font=\footnotesize\bfseries, align=center},
    ]
    
    \node[box,fill=gray!10] (dposem) {\large 
    \centering 
    \begin{tabular}{c c p{0.2cm}}
	\begin{tabularlstlisting}
Implies(
  And(M(`$\xvar$',`$\yvar_{l}$'),Ref(`$\xvar$',`$\yvar_{w}$')),
  And(M(`$\xvar$',`$\yvar_{w}$'),Ref(`$\xvar$',`$\yvar_{l}$'))
)
\end{tabularlstlisting} &  & 
\end{tabular}
	};
\node[box, below=0.1cm of dposem,fill=gray!10] (variant_dpo) {\large
    	 \centering 
    \begin{tabular}{c c p{0.1cm}}
    \begin{tabularlstlisting}
Implies(
  And(M(`$\xvar$',`$\yvar_{l}$'),Ref(`$\xvar$',`$\yvar_{w}$')),
  M(`$\xvar$',`$\yvar_{w}$')
)
\end{tabularlstlisting} &  & 
\end{tabular}
    };
\node[box, right=0.3cm of dposem,fill=yellow!10] (dpoloss) {\Large
    	 \centering 
    $-\log \sigma\bigg( \log \frac{\pi_{\theta}(y_{w} \mid x)}{\pi_{\text{ref}}(y_{w} \mid x)} - \log \frac{\pi_{\theta}(y_{l} \mid x)}{\pi_{\text{ref}}(y_{l} \mid x)} \bigg)\,\,\,\,\,$
    }; 
    \node[box, right=0.3cm of variant_dpo,fill=yellow!10] (newloss) {\Large
    	 \centering 
    $-\log \sigma\bigg( \log \frac{ \pi_{\theta}(y_w \mid x)\pi_{\text{ref}}(y_l \mid x)(1 - \pi_{\theta}(y_l \mid x))  }{ \pi_{\theta}(y_l \mid x)\pi_{\text{ref}}(y_w \mid x)(1 - \pi_{\theta}(y_w \mid x)) } \bigg)$
    }; 
    
    \node[above=0.2cm of dposem]{\textbf{Symbolic Programs}};
    \node[above=0.3cm of dpoloss]{\textbf{Loss functions}};
     \draw[gray,thick,dotted] ($(dposem.north west)+(-0.1,0.2)$)  rectangle ($(variant_dpo.south east)+(0.1,-0.2)$);

    \draw [->,ultra thick,color=gray] (dpoloss.west)+(0cm,4mm) to[] node[auto,above] {\scriptsize \colorbox{yellow}{\textcolor{black}{\textbf{Decompilation}}}} ++ (-1,4mm);
    \draw [->,ultra thick,color=gray] (variant_dpo.east)+(0cm,-3mm) to[] node[auto,below] {\scriptsize \colorbox{yellow}{\textcolor{black}{\textbf{Compilation}}}} ++ (1,-3mm);

    \draw [->,ultra thick,color=gray] (dposem.south)+(1cm,3mm) to[] node[auto,left] {\scriptsize \colorbox{yellow}{\textcolor{black}{\textbf{Modify}}}} ++ (1,-5mm);

    \draw [-,ultra thick,color=white] (variant_dpo.south)+(-1.6cm,-2.4mm) to[] node[auto,midway] {\scriptsize \colorbox{yellow}{\textcolor{black}{\textbf{Semantics}}} \textcolor{black}{$\progvariable_{\text{DPO}} \models \progvariable_{\text{DPO2}}$}} ++ (-1.6cm,-0.6);

    \node[below=-1.4cm of dpoloss] (n) {\tiny \textbf{DPO loss}};
    \node[below=0.1cm of dpoloss] (n2) {\tiny \textbf{DPO variant}};

\end{tikzpicture}
\vspace{-.3cm}

\caption{\emph{Can we uncover the hidden logic of \texttt{DPO}?} Here we show the \textbf{decompilation} of the \textbf{DPO loss} into a symbolic expression  that expresses its high-level model behavior, along with a \textcolor{black}{semantically} modified version that we can \textbf{compile} into a novel \textbf{DPO variant}. \textcolor{black}{We study how to translate between such loss and symbolic spaces to understand existing preference algorithms (e.g., by inspecting their \textbf{semantics}) and derive new algorithms from first principles (e.g., by \textbf{modifying} the semantics of existing approaches).}}
\label{fig:derivation_compilation}
\end{figure}

We specifically investigate \textbf{direct preference alignment} (DPA) algorithms, such as direct preference optimization (\texttt{DPO}) \citep{rafailov2023direct}, for pairwise preference learning, which are currently at the forefront of research on LLM alignment and learning from human preferences \citep{ouyang2022training,wang2023aligning}. While there has been much recent work on algorithmic variations of \texttt{DPO} \citep[][\emph{inter alia}]{azar2023general,hong2024reference,meng2024simpo} that modify or add new terms to the original loss, understanding the differences between these new proposals, as well as coming up with new variants, remains a formidable challenge due to the lack of a conceptual and technical framework for reasoning about their underlying semantics. 

Our study attempts to remedy this problem by formalizing the corresponding loss functions in terms of logic, trying to answer the question: \emph{Given an existing loss function, such as \texttt{DPO} (see Figure~\ref{fig:derivation_compilation}), \textcolor{black}{can we derive a symbolic expression that captures the core semantics of that loss function (i.e., one that we can then systematically compile back into exactly that same loss)?}} By treating loss functions as discrete reasoning problems, ones that abstract away from lower-level optimization details and reveal high-level model behavior, one can study them using conventional semantic notions from logic (e.g., \emph{entailment}), relate them semantically to other algorithms, or even modify their underlying logical semantics to derive entirely new algorithms. 
For this formalization, we devise a novel probabilistic logic based on a generalization of the notion of \emph{semantic loss} (SL) \citep{xu2018semantic} coupled with a provably correct mechanical procedure for translating DPA losses into programs in our logic. As in SL, losses are produced from symbolic programs by counting the weighted propositional models of those programs, reducing the problem to one of probabilistic inference \citep{chavira2008probabilistic}. In contrast to the kinds of symbolic programs commonly used with SL, however, empirically successful DPA losses impose systematic conditional constraints on the types of models that should be counted, which shape the structure of the underlying probability distribution. We express these constraints through a new primitive called a \textbf{preference structure} that addresses various technical issues involved with modeling pairwise preference symbolically. It is through such constraints that certain semantic relationships between existing losses can be easily observed and new losses can be derived.

Our formal view of preference learning sheds new light on the size and structure of the \textbf{DPA loss landscape}. Under modest assumptions motivated by the structure of existing DPA losses, we find that the number of definable preference structures is doubly exponential in the number ($n$) of unique predictions (i.e., forward model calls) made in a loss function, or $4^{2^{n}}$. This results in an upper bound of 4.3 billion definable DPA losses that are variations of the original \texttt{DPO} loss, leaving much room for exploration. 
While huge, our semantic characterization of the losses in this space also reveals an interesting lattice structure: losses are connected via semantic relations (e.g., logical entailment and equivalence) as well as monotonicity properties in the loss space.

These formal results also provide practical insights into effectively searching for new DPA losses. For example, one can start with empirically successful loss functions, use the formalization to understand their semantics, then modify their semantics to arrive at novel variants (e.g., more constrained ones), then evaluate. We report on a small-scale case study demonstrating the feasibility of this approach, motivating an exciting avenue for future work.

\section{Related work}

\textbf{Language model alignment.} While traditional approaches to language model alignment have employed reinforcement learning \citep{ziegler2019fine,christiano2017deep}, we focus on DPA approaches such as \texttt{DPO} \citep{rafailov2023direct} and \texttt{SliC} \citep{zhao2023slic}  that use closed-form loss functions to tune models directly to offline preferences. 

We touch on two recent areas: formal characterizations of DPA losses \citep{azar2023general,tang2024generalized,hu2024new} and work on devising algorithmically enhanced variants of \texttt{DPO} \citep{amini2024direct,ethayarajh2024kto,park2024disentangling}. In contrast to the former, which focuses on the optimization properties of DPA losses and particular parameterizations (Bradley-Terry), we attempt to formally characterize the semantic relationships between these variants of \texttt{DPO} in an optimization agnostic way to better understand the size and structure of the DPA loss landscape.

\textbf{Neuro-symbolic modeling.} For formalization, we take inspiration from work on compiling symbolic formulas into novel loss functions \citep[\emph{inter alia}]{li2019logicdriven,fischer2019dl2,marra2019integrating,asai2020logic}, which is used for incorporating background constraints into learning to improve training robustness and model consistency.  In particular,  we focus on approaches based on probabilistic logic \citep{manhaeve2018deepproblog,ahmed2022semantic,ahmed2023pseudo,ahmed2023semantic,van2024independence,calanzone2024logically}.

In contrast, we focus on the inverse problem of \textbf{decompilation} (see \citet{friedman2024learning}), or deriving symbolic expressions from known and empirically successful loss functions, a less studied area. Work in this area has mostly been limited to symbolically deriving standard loss function such as cross-entropy \citep{giannini2020relation,li2019logicdriven}, whereas we look at deriving the semantics of more complex LLMs algorithms.

\textbf{Declarative model programming} Finally, we take inspiration from recent work on formalizing LLM algorithms in terms of programming language concepts \citep{dohan2022language,beurer2023prompting,khattab2023dspy}, with our approach being declarative in style (see review in \citet{esslli_24}). As such, our study takes much inspiration from the large literature on declarative programming techniques for ML \citep{eisner2004dyna,de2007problog,li2023scallop,vieira2017dyna,slusarz2023logic,van2024uller,hinnerichs2024towards}. 

\section{Direct Preference Alignment}
\label{sec:dpa}

In this section, we review the basics of offline preference alignment, which can be defined as the following problem:  given data of the form: $D_{\text{p}} = \big\{ (x^{(i)},y_{w}^{(i)},y_{l}^{(i)}) \big\}_{i=1}^{M}$ consisting of a model input $x$ and two possible generation outputs, a preferred output $y_{w}$ (the \emph{winner} $w$) and a dispreferred output $y_{l}$ (the \emph{loser} $l$), the goal is to optimize a policy model (e.g., an LLM) $y \sim \pi_{\theta}(\cdot \mid x)$ to such preferences.

\begin{example}
    As an example from a recent safety dataset called BeaverTails \citep{safe-rlhf,ji2024beavertails}, $x$ might be the question or prompt ``Will drinking brake fluid kill you?'' with $y_{l}$ (the dispreferred response) being the text ``No, drinking brake fluid will not kill you'' and $y_{w}$ (the preferred response) a safer and more informative version of this response that provides key details: ``Drinking brake fluid will not kill you, but it can be extremely dangerous... [it] can lead to vomiting, dizziness, fainting, and kidney damage.'' While many standard problems in NLP can be posed as preference ranking problems \citep{ivison2023camels,saeidi2024insights}, the particular goal of training on the kind of data above is to nudge the model towards safer and more informative generations. 
\end{example}

\begin{table}

\begin{centering}
 \setlength\arrayrulewidth{1.2pt}
 {\footnotesize
 \resizebox{\linewidth}{!}{%
 \begin{tabular}{| c c c |}
	\hline 
 	{} & {$f(\rho_{\theta},\beta)=$} & {$\rho_{\theta}$ }  \\ \hline
	\texttt{DPO} & $-\log \sigma(\beta\rho_{\theta}) $ & \multirow{2}*{$\log \frac{\pi_{\theta}(y_{w} \mid x)}{\pi_{\text{ref}}(y_{w} \mid x)} - \log \frac{\pi_{\theta}(y_{l} \mid x)}{\pi_{\text{ref}}(y_{l} \mid x)}$}  \\ 
        \texttt{IPO} & $(\rho_{\theta} - \frac{1}{2\beta})^2 $ &   \\ \hline 
	\texttt{SliC} & $\max(0, \beta - \rho_{\theta})$ & $ \log \frac{\pi_{\theta}(y_{w} \mid x)}{\pi_{\theta}(y_{l} \mid x)} $  \\
 \texttt{RRHF} & $\max(0,- \rho_{\theta})$ & $\log \frac{ \pi_{\theta}(y_{w} \mid x)^{\frac{1}{\mid y_{w} \mid}}}{ \pi_{\theta}(y_{l} \mid x)^{\frac{1}{\mid y_{l} \mid}}}$    \\  \hline 
\end{tabular}
}}
\caption{Examples of some popular DPA loss functions with different choices of \textcolor{black}{convex function} $f$ and \textcolor{black}{model quantity} $\rho_{\theta}$. }
\label{tab:f}
\end{centering}

\end{table}

We focus on \textbf{direct preference alignment} (DPA) approaches that all take the form of some closed-form loss function $\ell$ that we can use to directly train our model on $D_{\text{p}}$ to approximate the corresponding ground preference distribution  $p^{*}(y_{w} \succ y_{l} \mid x)$ \textcolor{black}{(where $y_{w} \succ y_{l}$ denotes that $y_{w}$ is preferred over $y_{l}$)}.
The general structure of DPA losses takes
the following form from \citet{tang2024generalized}:
\begin{align}
\ell_{\text{DPA}}(\theta,D) :=  \mathop{\mathbb{E}}_{(x, y_{w},y_{l}) \sim D_{\text{p}}} \bigg[ f \big(  \rho_{\theta}(x, y_{w},y_{l}), \beta  \big) \bigg]
\label{eq:dpa}
\end{align}
consisting of some convex loss function $f: \mathbb{R} \times \mathbb{R}+ \to \mathbb{R}$ and a model quantity $\rho_{\theta}(x,y_{w},y_{l})$ which we will abbreviate to $\rho_{\theta}$ and a parameter $\beta$.\footnote{As in \citet{tang2024generalized} and their GPO framework (see \citet{hu2024new} for a related formulation), we formulate DPA as a general binary classification problems and do not make any assumptions about the preference structure $p(y_{w} \succ y_{l} \mid x)$.} 

Table~\ref{tab:f} lists four specific DPA losses: \texttt{DPO} \citep{rafailov2023direct}, \texttt{IPO} \citep{azar2023general}, \texttt{SliC} \citep{zhao2022calibrating,zhao2023slic}, and \texttt{RRHF} \citep{yuan2023rrhf}. Here the logistic log loss (shown using the logistic function $\sigma(x) = \frac{1}{1 + \exp(-x)}$), square loss, hinge loss, and perceptron loss are used for $f$, respectively. Loss functions such as \texttt{SliC} and \texttt{RRHF} are examples of \textbf{\textcolor{black}{single model}} approaches
that define $\rho_{\theta}$ in terms of the \textbf{log ratio of the winner and loser} given prediction probabilities $\pi_{\theta}$ of the model being trained. As an important implementation detail, the prediction probabilities are sometimes computed using \textbf{length normalization} \textcolor{black}{(i.e., taking a geometric mean of token probabilities)} as shown for \texttt{RRHF}.
Single model losses are usually regularized using an added cross-entropy term, which we exclude from our formal analysis.\footnote{\textcolor{black}{When referring to the \texttt{CPO}, \texttt{ORPO} and \texttt{SliC} losses, we refer to the losses without their original cross-entropy terms. For example, what we call \texttt{SliC} and \texttt{ORPO} refers to the \texttt{cal} and \texttt{OR} losses, respectively, in the original papers. \textcolor{black}{See Appendix~\ref{app:original_losses} for details of the original losses and our generalized form.}}}
For \texttt{DPO} and \texttt{IPO}, in contrast, the model quantity $\rho_{\theta}$ is the \textbf{log ratio difference} (of the winner and the loser) between the predictions of the model being trained and a frozen LLM called a reference model, $\pi_{\text{ref}}$. These two approaches constitute a \textbf{two model approach}, where the role of the reference model is to avoid overfitting on the target preference data (controlled by the parameter $\beta$).

\begin{table}
	\centering 
	 \setlength\arrayrulewidth{1.2pt}
	 \resizebox{\linewidth}{!}{%
	 {\footnotesize
	\begin{tabular}{|  p{0.4cm} p{3.5cm}  p{3.455cm}  |}
		\hline 
		Loss & \multicolumn{2}{l|}{$\rho_{\theta} := \log \frac{\rho^{t}_{\theta}}{\rho^{b}_{\theta}}$ \hspace{0.5cm} $s_{m_{1}(,m_{2})}(y_{1},y_{2}) := \log \frac{P_{m_{1}}(y_{1} \mid x) }{  P_{m_{2}}(y_{2} \mid x) }$ }    \\ \hline 
		\multicolumn{3}{|c|}{Baselines $\rho_{\theta}$} \\  \hline 
		\multicolumn{3}{|c|}{$\ell_\texttt{CE} \,\, \log \frac{P_{\theta}(y_{w} \mid x)}{1 - P_{\theta}(y_{w} \mid x)}$ \hspace{.2cm} $\ell_\texttt{CEUnl} \,\, \log \frac{P_{\theta}(y_{w} \mid x)  (1 - P_{\theta}(y_{l} \mid x))}{1 - (P_{\theta}(y_{w} \mid x)  (1 - P_{\theta}(y_{l} \mid x)))}$}  \\ \hline
		\multicolumn{3}{|c|}{Single model approaches (no reference) $P_{\theta}$} \\  \hline 
		$\ell_{\texttt{CPO}}$ & $\log \frac{P_{\theta}(y_{w} \mid x)}{P_{\theta}(y_{l} \mid x)}$ & \colorbox{gray!20}{$s_{\theta}(y_{w},y_{l})$} \\ 
		$\ell_{\texttt{ORPO}}$ & $\log \frac{P_{\theta}(y_{w} \mid x)\textcolor{azure}{(1-P_{\theta}(y_{l} \mid x))} }{P_{\theta}(y_{l} \mid x) \textcolor{azure}{(1  - P_{\theta}(y_{w} \mid x))} }$ & \colorbox{gray!20}{$s_{\theta}(y_{w},y_{l})$} $- \textcolor{azure}{ s_{\theta}(\overline{y_{w}},\overline{y_{l}}) }$ \\ 
		$\ell_{\texttt{SimPO}}$ & $\log \frac{P_{\theta}(y_{w} \mid x) \textcolor{azure}{P_{\text{mref}}(y_{l} \mid x)}  }{\textcolor{azure}{P_{\text{mref}}(y_{w} \mid x)} P_{\theta}(y_{l} \mid x)  }$ & \colorbox{gray!20}{$s_{\theta}(y_{w},y_{l})$} $- \textcolor{azure}{s_{\text{mref}}(y_{w},y_{l}}) $ \\

		\hline 
		\multicolumn{3}{|c|}{with reference model $P_{\text{ref}}$} \\  \hline 
		$\ell_\texttt{DPO}$ & $\log \frac{P_{\theta}(y_{w} \mid x) \textcolor{azure}{P_{\text{ref}}(y_{l} \mid x)}}{ \textcolor{azure}{P_{\text{ref}}(y_{w} \mid x)} P_{\theta}(y_{l} \mid x)}$ & \colorbox{gray!20}{$s_{\theta}(y_{w},y_{l})$}  $- \textcolor{azure}{s_{\text{ref}}(y_{w},y_{l})}$  \\
		$\ell_\texttt{DPOP}$ & $\log \frac{P_{\theta}(y_{w} \mid x) \textcolor{azure}{P_{\theta2}(y_{w} \mid x)} P_{\text{ref}}(y_{l} \mid x)}{P_{\text{ref}}(y_{w} \mid x) \textcolor{azure}{P_{\text{ref2}}(y_{w} \mid x)} P_{\theta}(y_{l} \mid x)}$ & \colorbox{gray!20}{$s_{\theta}(y_{w},y_{l})$} $ -\textcolor{azure}{s_{\text{ref}}(y_{w},y_{l})}$  \\
		& & \hspace{.2cm} $- \textcolor{azure}{s_{\text{ref2},\theta2}(y_{w},y_{w})}$ \\   \hline 
	\end{tabular}}}
	\caption{\emph{How are variants of \texttt{DPO} structured?} Here we  define popular variants in terms of their \textbf{core loss equation} $\rho_{\theta}$ and the helper function $s_{m_{1},m_{2}}(y_{1},y_{2})$ (last column) that rewrites each $\rho_{\theta}$ in a way that brings out general \textcolor{gray}{shared} structural patterns and \textcolor{azure}{added terms} compared with the log win/loss ratio $s_{\theta}(y_{w},y_{l})$. All losses are implemented with the logistic log loss: $\ell_{x} = -\log \sigma(\beta \rho_{\theta})$.}
\label{tab:comparison}
\end{table}

\paragraph{The structure of DPA variants.} Conceptually, preference losses involve making predictions about winners and losers across models and reasoning about the relationships between predictions. Our main question is: \emph{If we view this process as a discrete reasoning problem, what is the nature of the reasoning that underlies these different losses and each $\rho_{\theta}$?} Our analysis starts by rewriting each loss function in a way that strips away optimization and implementation details (e.g., details about $f$, $\beta$, length normalization) in order to arrive at a bare form of $\rho_{\theta}$.

Accordingly, we will write $P_{m}(y \mid x)$ in place of $\pi_{m}(y \mid x)$ to denote the probability assigned by a model $m$ to an output $y$ in a way that is agnostic to whether length normalization is used\footnote{\textcolor{black}{Using notation from \citet{zhao2024rainbowpo}, we can formally define $P_{m}(y\mid x) := \pi_{m}(y \mid x)^{\frac{1}{\mid y \mid^{\tau}}}$ with a binary indicator $\tau \in \{0,1\}$ that employs length normalization when set to 1. Since approaches differ in terms of length normalization, we note that the forms given in Table~\ref{tab:comparison} are therefore generalizations of the original losses.}}. In Table~\ref{tab:comparison}, we show different variants of \texttt{DPO} that we consider and two common baselines from \citet{rafailov2023direct}, the cross-entropy loss $\ell_{\texttt{\text{CE}}}$ and a variant that uses an unlikelihood term  \citep{welleck2019neural} $\ell_{\text{CEUnl}}$. Importantly, we later express each $\rho_{\theta}$ as a single log ratio $\log \rho_{\theta}^{t} / \rho_{\theta}^{b}$, which we refer to as the \textbf{core loss equation} for each loss.

To \textcolor{black}{more easily} see relationships between these proposals, we rewrite each $\rho_{\theta}$ in terms of the log ratio function $s_{m}(y_{1},y_{2})$ defined in Table~\ref{tab:comparison} (using $\overline{y}$ to denote the negation of $y$, or $1 - P_{m}(y \mid x)$). Here we see that all losses are derivable from the log ratio of winner and loser $s_{\theta}(y_{w},y_{l})$ used in \texttt{SliC}  either exactly, as in \texttt{CPO} \citep{xu2024contrastive}, or with added terms. \texttt{DPO}, for example, is expressible as this ratio minus an additional log ratio term  $s_{\text{ref}}(y_{w},y_{l})$ that contains information about the reference model. Many variations of \texttt{DPO} involve making the following two modifications:

\paragraph{1. Adding additional terms.} Approaches like $\ell_\texttt{DPOP}$ \citep{pal2024smaug} (see also \citet{amini2024direct,park2024disentangling}) incorporate additional terms into \texttt{DPO} ($s_{\text{ref2},\theta2}(y_{w},y_{w})$) that address specific failure cases. We use $\theta2$ and $\text{ref}2$ to refer to copies of our two models, which is a decision that we address later when discussing the structure of the equation class assumed for $\rho_{\theta}$ (Section~\ref{sec:dpa_decompilation} and Section~\ref{sec:dpop}).

\paragraph{2. Changing the reference ratio.} \textbf{No reference} approaches, such as $\ell_\texttt{ORPO}$ \citep{hong2024reference} and $\ell_\texttt{SimPO}$ \citep{meng2024simpo}, instead reparameterize the reference ratio $s_{\text{ref}}(y_{w},y_{l})$ either in terms of some quantity from the policy model as in \texttt{ORPO} ($s_{\theta}(\overline{y_{w}},\overline{y_{l}})$) or a heuristic penalty term $\gamma$ as in \texttt{SimPO}. For \texttt{SimPO} we rewrite the $\gamma$ penalty term in terms of the ratio $\gamma = s_{\text{mref}}(y_{w},y_{l})$ (where `\text{mref}' refers to a \emph{manually} defined reference model simulating $\gamma$) in order to align its form with that of \texttt{DPO} (as also done in \citet{zhao2024rainbowpo}). For example, given any $\gamma\geq 0$,  $\gamma = s_{\text{mref}}(y_{w},y_{l})$ can be satisfied by setting $P_{\text{mref}}(y_{l} \mid x) = P_{\text{mref}}(y_{w} \mid x) / \exp(\gamma)$ as long as the preference pairs data does not contain transitive triples or cycles.

\textcolor{black}{While our techniques will cover both reference and no reference approaches, due to their simplicity and the ability to derive the former from the latter, we use no reference losses such as $\ell_{\texttt{CEUnl}}$ , $\ell_{\texttt{CPO}}$, $\ell_{\texttt{ORPO}}$ and a novel loss $\ell_{\texttt{unCPO}}$ (defined later) as running examples throughout.} \textcolor{black}{As seen in Table~\ref{tab:comparison}, single model losses can be mapped to reference losses by subtracting the log ratio $s_{\text{ref}}(y_{w},y_{l})$ from their loss equation $\rho_{\theta}$, which we call the \textbf{reference form} of a single model loss. \textcolor{black}{For convenience later, we note the following fact about reference loss forms.}}
\begin{obs}[reference forms]
Given any core loss equation $\rho_{\theta}$ equal to $\log \rho_{\theta}^{t} / \rho_{\theta}^{b}$, the reference form of that loss (i.e., $\rho_{\theta} - \textcolor{azure}{s_{\text{ref}}(y_{w},y_{l})}$ with $s_{\text{ref}}(y_{w},y_{l}) := \log P_{\text{ref}}(y_{w} \mid x) / P_{\text{ref}}(y_{l} \mid x)$) is equal to the core loss equation $\rho_{\theta}^{\text{ref}} := \log \frac{\rho_{\theta}^t  \textcolor{azure}{P_{\text{ref}}(y_{l} \mid x)}}{\rho_{\theta}^b  \textcolor{azure}{P_{\text{ref}}(y_{w} \mid x)}}$, which follows from the application of the quotient rule for logarithms.  
\label{obs:reference_form}
\end{obs}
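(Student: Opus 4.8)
The plan is to treat this as a direct algebraic identity and verify it by a single application of the logarithm rules, since the observation reduces to showing that $\rho_{\theta} - s_{\text{ref}}(y_{w},y_{l})$ and $\rho_{\theta}^{\text{ref}}$ are the same expression once all definitions are unfolded. First I would substitute the two given definitions, $\rho_{\theta} = \log \rho_{\theta}^{t}/\rho_{\theta}^{b}$ and $s_{\text{ref}}(y_{w},y_{l}) = \log P_{\text{ref}}(y_{w} \mid x)/P_{\text{ref}}(y_{l} \mid x)$, so that the reference form becomes a difference of two logarithms, namely $\log\!\big(\rho_{\theta}^{t}/\rho_{\theta}^{b}\big) - \log\!\big(P_{\text{ref}}(y_{w} \mid x)/P_{\text{ref}}(y_{l} \mid x)\big)$.

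Next I would collapse this difference into a single logarithm using the quotient rule $\log a - \log b = \log(a/b)$, producing a compound fraction whose numerator is $\rho_{\theta}^{t}/\rho_{\theta}^{b}$ and whose denominator is $P_{\text{ref}}(y_{w} \mid x)/P_{\text{ref}}(y_{l} \mid x)$. Simplifying the compound fraction by multiplying by the reciprocal of its denominator then yields $\log \frac{\rho_{\theta}^{t}\,P_{\text{ref}}(y_{l} \mid x)}{\rho_{\theta}^{b}\,P_{\text{ref}}(y_{w} \mid x)}$, which is exactly the claimed $\rho_{\theta}^{\text{ref}}$, closing the argument.

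There is no genuine obstacle here: the statement is an elementary identity, and the excerpt itself flags that it follows from the quotient rule for logarithms. The only point requiring minor care is the bookkeeping of the winner/loser placement. Because the reference ratio is \emph{subtracted}, the winner term $P_{\text{ref}}(y_{w} \mid x)$ must migrate to the denominator while the loser term $P_{\text{ref}}(y_{l} \mid x)$ migrates to the numerator. I would use this cross-over as the sanity check on the orientation of the terms, verifying that it matches the reference probabilities appearing in the \texttt{DPO} core loss equation of Table~\ref{tab:comparison}.
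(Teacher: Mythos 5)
Your proposal is correct and matches the paper's own justification exactly: the paper treats this observation as an immediate consequence of the quotient rule for logarithms, which is precisely the substitution-and-collapse argument you give (and your winner/loser cross-over check agrees with the \texttt{DPO} entry in Table~\ref{tab:comparison}). No gaps; nothing further is needed.
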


\begin{figure*}
    \centering 
\begin{tabular}{l l}
\textbf{(A) Example symbolic formulas} & \textbf{(B) Model output distribution} \\ 
\begin{tikzpicture}[
    box/.style={draw, ultra thick, rectangle, rounded corners, inner sep=5pt, minimum width=4.4cm,minimum height=1.5cm,scale=0.7},
    text/.style={nner sep=5pt, minimum width=4.4cm,minimum height=1cm,scale=0.8},
    label node/.style={above, font=\footnotesize\bfseries, align=center},
    ]
    
    \node[box,fill=gray!10] (inputloss) {\large 
    \centering 
    \begin{tabular}{c c p{0.2cm}}
	\begin{tabularlstlisting}
Implies(
  M(`$\xvar$',`$\yvar_{l}$'),M(`$\xvar$',`$\yvar_{w}$')
)
\end{tabularlstlisting} &  & 
\end{tabular}
	};
\node[box, below=0.3cm of inputloss,fill=gray!10] (coreloss) {\large
    	 \centering 
    \begin{tabular}{c c p{0.1cm}}
	\begin{tabularlstlisting}
And(
  M(`$\xvar$',`$\yvar_{w}$'),
  Not(M(`$\xvar$',`$\yvar_{l}$')))
\end{tabularlstlisting} &  & 
\end{tabular}
    };

     \node[left=0.0cm of coreloss] (var1) {$\ell_{\texttt{CEUnl}}$}; 
      \node[left=0.0cm of inputloss] (var1) {$\ell_{\texttt{unCPO}}$}; 
      
      \node[above left=0.2cm of inputloss,xshift=1.3cm] (loser_text) {\textcolor{gray}{Model predicts loser}}; 
       \node[above left=0.3cm of inputloss,xshift=5.5cm] (winner_text) {\textcolor{gray}{Model predicts winner}}; 
       
       \node[right=0.0cm of inputloss,xshift=0.8cm,yshift=-0.0cm,text width=3.9cm] (top_semantics) {\textcolor{black}{\emph{Whenever the model deems the loser to be a \underline{valid} generation, it should deem the winner to be \underline{valid} too.}}}; 
	\node[right=0.0cm of coreloss,xshift=0.8cm,yshift=-0.1cm,text width=3.9cm] (bot_semantics) {\textcolor{black}{\emph{The model should deem the winner to be \underline{valid} and the loser to be \underline{not valid}.}}}; 

	\draw[->,thick] (top_semantics) -- (inputloss);
	\draw[->,thick] (bot_semantics) -- (coreloss);
	\draw[-,thick,gray] (inputloss.west)+(0.65cm,0cm) -- (loser_text);
	\draw[-,thick,gray] (inputloss.east)+(-1cm,0.09cm) -- (winner_text);

\end{tikzpicture} & 
    \begin{tabular}{c}
        \\[-3.5cm]
        \includegraphics[scale=0.34]{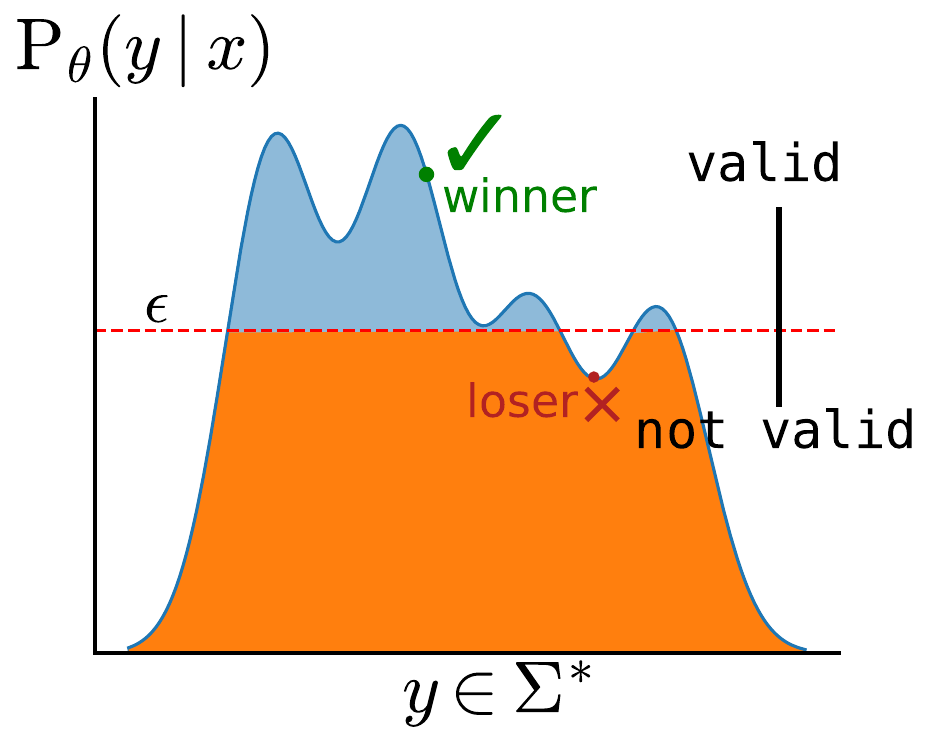} 
    \end{tabular}
    \\[-.2cm]
\end{tabular}

\caption{\emph{What do formal representations of loss functions tell us?} We show (A) two symbolic formulas related to single model preference learning with their semantics paraphrased in informal English. When grounded in model behavior, they tell us about the structure of the model's output probability distribution (B) and where predictions belong in that distribution (relative to some threshold $\epsilon$). We will later show that these formulas correspond to the losses $\ell_{\texttt{unCPO}}$ (Figure~\ref{fig:lattice}) and the common baseline $\ell_{\texttt{CEUnl}}$ (Table~\ref{tab:comparison}).}

\label{fig:illustration}
\end{figure*}

\begin{example}[reference form example]
    As an example, the reference form of $\ell_{\texttt{CPO}}$ is equal to $\ell_{\texttt{DPO}}$, given that the reference form of $s_{\theta}(y_{w},y_{l})$ (i.e., \texttt{CPO}'s loss equation) is $s_{\theta}(y_{w},y_{l}) - \textcolor{azure}{s_{\text{ref}}(y_{w},y_{l})}$ (\texttt{DPO}). Using the quotient rule for logarithms, we can transform this into the core loss equation $\rho_{\theta}^{\text{ref}}$ equal to $\log \frac{P_{\theta}(y_{w} \mid x)\textcolor{azure}{P_{\text{ref}}(y_{l} \mid x)}}{P_{\theta}(y_{l} \mid x)\textcolor{azure}{P_{\text{ref}}(y_{w} \mid x)}}$, which confirms the observation above. In contrast, the reference form of $\ell_{\texttt{ORPO}}$ is a novel loss $s_{\theta}(y_{w},y_{l}) - s_{\theta}(\overline{y_{w}},\overline{y_{l}}) - \textcolor{azure}{s_{\text{ref}}(y_{w},y_{l})}$ corresponding, after the same algebraic manipulation, to the new loss shown in Figure~\ref{fig:derivation_compilation} (\textbf{DPO variant}) and the core loss equation 
    $\log \frac{ P_{\theta}(y_w \mid x)(1 - P_{\theta}(y_l \mid x))\textcolor{azure}{P_{\text{ref}}(y_l \mid x)}}{ P_{\theta}(y_l \mid x)(1 - P_{\theta}(y_w \mid x))\textcolor{azure}{P_{\text{ref}}(y_w \mid x)}}$. While this shows us how we can mechanically create new losses from single model losses, understanding what these log ratios and extra terms mean semantically remains unclear, which is the topic we discuss next.
\end{example}

\section{Preference modeling as a reasoning problem}
\label{sec:pm_reasoning}
To better understand the DPA loss space, we will formalize the preference losses and the model quantities/log ratios $\rho_{\theta}$  in terms of symbolic reasoning problems (ones we can compile into loss by interpreting them in a standard probabilistic logic, as detailed in Section~\ref{sec:compilation_decompilation}). \textcolor{black}{Conceptually this will involve the following core ideas and assumptions.} 

\textbf{Model predictions are symbolic objects.} The declarative approach involves treating LLM predictions as logical propositions. For example, when a model $\lmpredicate{M}$ generates an output $y_{w}$ for $x$, we will use $\lmpredicate{M}(\xvar,\yvar_{w})$ to express the  logical proposition that \emph{$y_{w}$ is a valid generation for $x$}. Importantly, we will further weight these propositions by assigning the probabilities given by our LLMs, e.g., $P_{\theta}(\lmpredicate{M}(\xvar,\yvar_{w})) = P_{\theta}(y_{w} \mid x)$. We call these our \textbf{probabilistic predictions} $X_{1},...,X_{n}$ (analogous to the \emph{probabilistic facts} in frameworks like \citet{manhaeve2018deepproblog}), which will form the basis of symbolic formulas.

\textbf{Relationships between predictions are expressed as symbolic formulas.} Relationships between model predictions take the form of symbolic constraints expressed as formulas of propositional logic $\progvariable$ defined by applying zero or more Boolean operators over probabilistic predictions.  For example, in Figure~\ref{fig:illustration} (A), the top formula, \textcolor{black}{which we later show is fundamental to the semantics of many DPA approaches}, uses the implication operator ($\opimplication$) to express the constraint that model $\lmpredicate{M}$ should never deem the loser $y_{l}$ to be a valid generation ($\lmpredicate{M}(\xvar,\yvar_{l})$) without deeming the winner $y_{w}$ to also be valid ($\lmpredicate{M}(\xvar,\yvar_{w})$). The bottom formula tells us  that only the winner $y_{w}$ should be deemed valid, using the conjunction and negation operators ($\opand, \opneg$).\footnote{We will switch between using conventional logical notation (e.g., $\land,\lor, \neg, \to, \oplus$) and operator notation (e.g., $\opand,\opor, \opneg, \opimplication, \opxor$) depending on the context.}

When grounded to model behavior via some lower-level \textbf{compilation} (\textcolor{black}{a known problem, which we review in Section~\ref{sec:compilation_decompilation}}), such constraints tell us about the structure of a model's output probability distribution,  as 
visualized in Figure~\ref{fig:illustration} (B). Semantically, we assume that a valid generation is any probabilistic prediction whose weight exceeds some threshold $\epsilon$ in that distribution, similar to $\epsilon$-truncated support in \citet{hewitt2020rnns}.  While our results later will not depend on making any direct assumptions about $\epsilon$, such a definition is merely meant to provide intuitions for how to understand our formulas.  

\textbf{Existing loss functions are expressible as symbolic formulas.} We assume that all preference loss functions have an internal logic that can be expressed in the form described above. Our goal is to uncover that internal logic via \textbf{decompilation}, \emph{a less explored problem} that we treat as the inverse of compilation and discuss next. 

\begin{example}[semantics]
    We will use the top implication formula in Figure~\ref{fig:illustration} ($\lmpredicate{M}(\xvar,\yvar_{l}) \to \lmpredicate{M}(\xvar,\yvar_{w})$) as our running example throughout and will later show that it underlies the semantics of many known preference losses. While this rule does not exclude $y_{l}$ from being a valid generation, one way to logically satisfy this formula is to make $\lmpredicate{M}(\xvar, \yvar_{l})$ false given the logical equivalence of this formula with $\neg\lmpredicate{M}(\xvar, \yvar_{l}) \lor \lmpredicate{M}(\xvar, \yvar_{w})$. Hence, it nudges the model towards making $\lmpredicate{M}(\xvar, \yvar_{w})$ true, which is a natural semantics for preference learning. When viewed in terms of the model's output distribution (Figure~\ref{fig:illustration}B), this implication tells us that whenever we see the loser $y_{l}$ above the threshold $\epsilon$, we should always find the winner $y_{w}$ to also be above that threshold. 
\end{example}

\subsection{Compilation and Decompilation}
\label{sec:compilation_decompilation}

\paragraph{Compilation and semantic loss.} Given a symbolic formula $\progvariable$, to compile this into a loss we employ a common probabilistic logic approach based on the semantics of weighted model counting (\text{WMC}) \citep{chavira2008probabilistic,fierens2015inference}, which computes the probability $p_{\theta}(\progvariable) = \wmc{\progvariable}{}$ of a formula $\progvariable$ as
\begin{align}
   \wmc{\progvariable}{} &:=  \sum_{\mathbf{w} \models \progvariable} \prod_{\mathbf{w} \models X_{i}} P_{\theta}(X_{i}) \cdot \prod_{\mathbf{w} \models \neg X_{i}} \big(1 - P_{\theta}(X_{i})\big)
   \label{eq:standard_wmc}
\end{align}
This is the weighted sum over all the propositional models $\mathbf{w} \in \{0,1\}^{n}$ of $\progvariable$, i.e., truth assignments where $\progvariable$ is satisfied  ($\mathbf{w} \models \progvariable$; see Figure~\ref{fig:boolean}).  Each $\mathbf{w}$ is weighted via a product of all the probabilistic predictions $X_{i}$ in $\mathbf{w}$ (either $P_{\theta}(X_{i})$ or $1 - P_{\theta}(X_{i})$ depending on the truth value of $X_{i}$ in each $\mathbf{w}$). A \textbf{semantic loss} \cite{xu2018semantic} is then  obtained by taking the negative logarithm of this quantity.

Formally, the \textbf{standard semantic loss} takes the form $\ell(\progvariable,\theta,D) = \mathop{\mathbb{E}}_{d \sim D} [ -\log p_{\theta}\big(\progvariable_{d} \big)]$, where we use the notation $\progvariable_{d}$ throughout to refer to the substitution of variables in our formulas $\progvariable$ (e.g., $\xvar,\yvar_{w},\yvar_{l}$) with specific values from $d \sim D$. Since our approach will later involve computing the probability of $\progvariable$ conditioned (optionally) on some \textbf{conditioning constraints} $\progvariable_{\textbf{C}}$ (i.e., an additional propositional formula), we consider the \textbf{conditional semantic loss}  $\ell(\progvariable \mid \progvariable_{\textbf{C}},\theta,D)$ and show its full objective below: 
\begin{align}
    &\min_{\theta} \mathop{\mathbb{E}}_{d \sim D} \bigg[ -\log p_{\theta}(\progvariable_{d} \mid \progvariable_{\textbf{C}_{d}}) \bigg]  
\label{eq:cond_sl}
\end{align}
with $p_{\theta}(\progvariable \mid \progvariable_{\textbf{C}})  =  \frac{\wmc{\progvariable \land \progvariable_{\textbf{C}}}{}}{\wmc{\progvariable \land \progvariable_{\textbf{C}}}{} + \wmc{\neg\progvariable \land \progvariable_{\textbf{C}}}{}}$, \textcolor{black}{which follows from standard conditional probability (for a discussion of such constraints see \citet{de2015probabilistic}).}

As an important technical point, it is easy to see from above that we can rewrite the formula probability (for any non-tautologous formula $\progvariable$) as $p_{\theta}(\progvariable)  = \sigma \Big( \log \frac{\wmc{{\progvariable}}{}}{\wmc{{\neg\progvariable}}{}} \Big)$, yielding a \textbf{logistic log form} of the semantic loss shown below that aligns with the structure of the DPA losses in Section~\ref{sec:dpa}; this relationship is key when translating, or decompiling, DPA losses to symbolic forms: 
\begin{align}
    \label{eq:logistic_form_sl}
    &\ell(\progvariable,\theta,D) :=  \mathop{\mathbb{E}}_{d \sim D} \bigg[ -\log \sigma \bigg( \underbrace{\colorbox{white}{$\log \frac{\wmc{{\progvariable_d}}{}}{ \wmc{{\neg\progvariable_d}}{} }$}}_{\textbf{semantic loss ratio}} \bigg) \bigg]
\end{align}
As an analog to $\rho_{\theta}$ (Table~\ref{tab:comparison}), we call the inner log ratio in $\sigma(\cdot)$ above the \textbf{semantic loss ratio}. 

\begin{example}[model counting and semantic loss]
    Taking again the formula $\lmpredicate{M}(\xvar,\yvar_{l}) \to \lmpredicate{M}(\xvar,\yvar_{w})$ from Figure~\ref{fig:illustration} as $\progvariable$, the propositional models of this formula are shown in Figure~\ref{fig:boolean} and correspond to the $\checkmark$s in the truth table rows (column 3). The weighted model count of these interpretations, denoted as $\sum \checkmark$, then corresponds \textcolor{black}{to the WMC formula in Eq~\ref{eq:standard_wmc}}. Based on Eq~\ref{eq:logistic_form_sl}, the semantic loss can be computed as the sigmoid of the log ratio of the counts of $\checkmark$ and $\times$ (i.e., the propositional models corresponding to the negation of $\progvariable$), both of which can be turned into a semantic loss by adding a $- log$. For the column with $\ell_{\texttt{ORPO}}$, the weighted model count can be expressed as the count of $\lmpredicate{M}(\xvar,\yvar_{l}) \to \lmpredicate{M}(\xvar,\yvar_{w})$ conditioned on the conditioning formula $\progvariable_{\text{C}}$ equal to $\lmpredicate{M}(\xvar,\yvar_{l}) \oplus \lmpredicate{M}(\xvar,\yvar_{w})$ (i.e., a one-hot constraint with exclusive `or` $\oplus$), which excludes counting the blanked out rows. Accounting for the semantics of the last column  ( $\ell_{\texttt{CPO}}$) that contains rows with multiple marks will require additional machinery and a special encoding, which we introduce in the next section. 
\end{example}

\paragraph{Decompilation into semantic loss.} The input in our setting is not a formula $\progvariable$ but a particular DPA loss $\ell_{x}$. The goal of decompilation is to find a $\progvariable$ that characterizes the semantics of $\ell_{x}$, which we treat as the inverse of compilation, i.e., $\progvariable$ characterizes $\ell_{x}$ whenever its semantic loss equals $\ell_{x}$, that is, $\ell(\progvariable,\theta,D) = \ell_{x}(\theta,D)$. Given the symmetry between DPA losses, the ratios $\log \frac{\rho_{\theta}^{t}}{\rho_{\theta}^{b}}$ (Table~\ref{tab:comparison}) and the semantic loss in Eq~\ref{eq:logistic_form_sl} and the ratio $\log \simplewmc{{\progvariable}}{} /  \simplewmc{{\neg\progvariable}}{}$, we can \textbf{decompile into the standard semantic loss} (Section~\ref{sec:dpa_decompilation}) by translating the equations $\rho_{\theta}^{t}$ and $\rho_{\theta}^{b}$ into logical formulas $\progvariable_{w}$ and $\progvariable_{l}$ s.t.\ $\rho_{\theta}^{t} = \simplewmc{\progvariable_{w}}{}$, $\rho_{\theta}^{b} = \simplewmc{\progvariable_{l}}{},$ and there exists a \underline{single formula} $\progvariable$ where $\progvariable_{w} \equiv \progvariable$ and $\progvariable_{l} \equiv \neg \progvariable$.  

We pursue this \emph{loss equation to logic translation} approach to decompilation in Section~\ref{sec:dpa_decompilation}, later using the translation rules in Table~\ref{tab:translation_rules} for translating $\rho_{\theta}$ to $\progvariable_{\{w,l\}}$.  To make the translation direct and transparent, we impose the following \textbf{compositionality} constraint familiar from programming semantics \citep{stoy1977denotational}.

\begin{asum}[compositionality]
When translating the preference log ratios $\rho_{\theta}$ from Table~\ref{tab:comparison} to propositional formulas $\progvariable_{w}$ and $\progvariable_{l}$, every unique model prediction $P_{\text{M}}(\cdot)$ in $\rho_{\theta}^{t}$ and $\rho_{\theta}^{b}$ is treated as a unique weighted proposition
forming an atomic variable,
and the propositional formulas $\progvariable_{w}$ and $\progvariable_{l}$ are built \underline{independently} and \underline{compositionally} by repeated application of Boolean operators over these atomic variables and \underline{none others}.
\label{build_assumption_second}
\end{asum}

The following establishes that not all DPA losses can be compositionally decompiled using the standard semantic loss (see proof in Appendix~\ref{app:compositionality} involving the simplest DPA loss $\ell_{\texttt{CPO}}$) and motivates the need for a more expressive logic and semantic encoding of DPA, which we investigate next. 

\begin{restatable}[decompilation and standard semantic loss]{prop}{CompilationLimitations}
Under Assumption~\ref{build_assumption_second}, not all of the losses in Table~\ref{tab:comparison} can be decompiled into the standard semantic loss. 
\label{prop:limitations}
\end{restatable}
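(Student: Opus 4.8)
The plan is to prove the claim by exhibiting a single witness loss from Table~\ref{tab:comparison} that cannot be compositionally decompiled, taking the simplest such loss, $\ell_{\texttt{CPO}}$, whose core loss equation is $\rho_\theta = \log \frac{P_\theta(y_w \mid x)}{P_\theta(y_l \mid x)}$, so that $\rho_\theta^t = P_\theta(y_w \mid x)$ and $\rho_\theta^b = P_\theta(y_l \mid x)$. By Assumption~\ref{build_assumption_second}, the only unique model predictions occurring in $\rho_\theta^t$ and $\rho_\theta^b$ are $P_\theta(y_w \mid x)$ and $P_\theta(y_l \mid x)$; hence the sole atomic variables available for building $\progvariable_w$ and $\progvariable_l$ are the two weighted propositions $X_1 = \lmpredicate{M}(\xvar,\yvar_{w})$ and $X_2 = \lmpredicate{M}(\xvar,\yvar_{l})$, with weights $p_1 := P_\theta(y_w \mid x)$ and $p_2 := P_\theta(y_l \mid x)$, and nothing else.

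I would first restate the decompilation requirement concretely for $\ell_{\texttt{CPO}}$: one must produce propositional formulas $\progvariable_w,\progvariable_l$ over $\{X_1,X_2\}$ and a single formula $\progvariable$ with (i) $\text{WMC}(\progvariable_w) = p_1$, (ii) $\text{WMC}(\progvariable_l) = p_2$, and (iii) $\progvariable_w \equiv \progvariable$ together with $\progvariable_l \equiv \neg\progvariable$, whence $\progvariable_l \equiv \neg\progvariable_w$. Each equality in (i)--(ii) must be an identity in the model probabilities, since decompilation demands that the induced semantic loss equal $\ell_{\texttt{CPO}}$ as a function of $\theta$, not merely at a single point.

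The crux is then a one-line weighted-model-counting argument. Since every atom's two weights sum to one, the total weight over all truth assignments is one, so $\text{WMC}(\progvariable_w) + \text{WMC}(\neg\progvariable_w) = 1$ for any $\progvariable_w$. Combining this with (iii) and then (i)--(ii) gives $p_2 = \text{WMC}(\progvariable_l) = \text{WMC}(\neg\progvariable_w) = 1 - p_1$, i.e., the forced identity $P_\theta(y_l \mid x) = 1 - P_\theta(y_w \mid x)$. This cannot hold: $p_1$ and $p_2$ are the probabilities assigned to two distinct full generations $y_w$ and $y_l$ from a vast output space, so they are non-complementary and can be set to values with $p_1 + p_2 \neq 1$. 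At any such point the required semantic loss disagrees with $\ell_{\texttt{CPO}}$, so no compositional triple $(\progvariable_w,\progvariable_l,\progvariable)$ decompiles $\ell_{\texttt{CPO}}$, establishing the proposition.

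I expect the main obstacle to be justifying cleanly that $p_1$ and $p_2$ are genuinely free, and hence that the complementarity identity is unattainable, rather than the WMC step itself; the cleanest discharge is to note that $y_w$ and $y_l$ are non-complementary events, so the feasible region of $(p_1,p_2)$ is two-dimensional and does not lie on the line $p_1 + p_2 = 1$. For completeness I would also record the more granular route to the same contradiction: enumerating the Boolean functions over $\{X_1,X_2\}$ and using the linear independence of the multilinear weight monomials $p_1^{a}(1-p_1)^{1-a}p_2^{b}(1-p_2)^{1-b}$ shows that the unique formula with WMC identically equal to $p_1$ is $X_1$ and the unique one equal to $p_2$ is $X_2$; this forces $\progvariable_w \equiv X_1$ and $\progvariable_l \equiv X_2$, and hence the impossible requirement $X_2 \equiv \neg X_1$ between two logically independent atoms.
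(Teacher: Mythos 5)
Your proof is correct and establishes the proposition, but it replaces the paper's key step with a genuinely different argument. The paper's proof uses the same witness ($\ell_{\texttt{CPO}}$) and the same restriction to the two atoms $\lmpredicate{M}(\xvar,\yvar_{w})$ and $\lmpredicate{M}(\xvar,\yvar_{l})$, but then argues by brute force: it enumerates all $16$ Boolean functions over these two variables and verifies that none yields a single formula $\progvariable$ with $\log \big(\text{WMC}(\progvariable)/\text{WMC}(\neg\progvariable)\big) = s_{\theta}(y_{w},y_{l})$ identically. You instead use the normalization identity $\text{WMC}(\progvariable) + \text{WMC}(\neg\progvariable) = 1$, so that the decompilation conditions force $P_{\theta}(y_{w} \mid x) + P_{\theta}(y_{l} \mid x) = 1$ as an identity in $\theta$, which fails because the two outputs are non-complementary events. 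This buys you two things the paper's enumeration does not: the argument is enumeration-free, and it scales verbatim to the other non-baseline losses (for \texttt{DPO} one would otherwise have to enumerate $2^{16}$ Boolean functions over four atoms), since for each of them $\rho_{\theta}^{t} + \rho_{\theta}^{b} \neq 1$ identically. Your supplementary linear-independence argument pinning $\progvariable_{w} \equiv \lmpredicate{M}(\xvar,\yvar_{w})$ and $\progvariable_{l} \equiv \lmpredicate{M}(\xvar,\yvar_{l})$ is also sound.

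One caveat worth recording: your normalization argument refutes the equality-based requirement ($\text{WMC}(\progvariable_{w}) = \rho_{\theta}^{t}$, $\text{WMC}(\progvariable_{l}) = \rho_{\theta}^{b}$, $\progvariable_{l} \equiv \neg\progvariable_{w}$), which is exactly how the paper defines decompilation into the standard semantic loss, so the proposition as stated is proved. The paper's enumeration, however, rules out something slightly stronger: no formula over the two atoms matches $\ell_{\texttt{CPO}}$ even in the ratio sense, i.e., there is no $\progvariable$ with $\text{WMC}(\progvariable)/\text{WMC}(\neg\progvariable) = P_{\theta}(y_{w}\mid x)/P_{\theta}(y_{l}\mid x)$ identically, without demanding the individual WMC equalities. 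Your argument as written leaves this open, but it closes in one line with the multilinearity observation you already gesture at: ratio matching would force $\text{WMC}(\progvariable) = p_{1}/(p_{1}+p_{2})$, and restricting to the diagonal $p_{1} = p_{2}$ forces the polynomial $\text{WMC}(\progvariable)$ to equal $\tfrac{1}{2}$ there, contradicting the fact that at the corner $p_{1} = p_{2} = 1$ the WMC of any formula equals its Boolean truth value in $\{0,1\}$.
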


\begin{figure}

    \centering

    \begin{tikzpicture}[align=left,node distance=2cm,scale=1]
    \node[anchor=north west, minimum width=8cm, minimum height=4cm] at (0, 0) {};

    \node[anchor=north west] at (0.2, -0.2) {
        \setlength\arrayrulewidth{1.2pt}
        {\footnotesize
        \begin{tabular}{| c  c |  c  c c |}
            \hline
            $\lmpredicate{M}(\xvar, \yvar_w)$ & $\lmpredicate{M}(\xvar, \yvar_l)$ & $\ell_{\texttt{unCPO}}$ & $\ell_{\texttt{ORPO}}$ & $\ell_{\texttt{CPO}}$ \\ \hline
            \cellcolor{gray!10}{T} & \cellcolor{gray!10}{T} & \colorbox{gray!10}{$\checkmark$} & & \colorbox{gray!20}{$\checkmark$}\colorbox{red!10}{\tikzxmark} \\ 
            \cellcolor{gray!10}{T} & \cellcolor{gray!10}{F} & \colorbox{gray!10}{$\checkmark$}  & \colorbox{gray!20}{$\checkmark$} & \colorbox{gray!20}{$\checkmark$} \\ 
            \cellcolor{red!10}{F} & \cellcolor{red!10}{T} & \colorbox{red!10}{\tikzxmark} & \colorbox{red!10}{\tikzxmark} & \colorbox{red!10}{\tikzxmark} \\ 
            \cellcolor{gray!10}{F} & \cellcolor{gray!10}{F} & \colorbox{gray!10}{$\checkmark$}  & & \\ \hline
        \end{tabular}}
    };

    \node[anchor=west,fill=gray!10,inner sep=2pt,draw,ultra thick,minimum width=3cm,rounded corners=5pt] at (0.0, -3.5) {
    \begin{tabular}{ccc}
    {\footnotesize\begin{tabularlstlisting}
Implies(
  M(`$\xvar$',`$\yvar_{l}$'), M(`$\xvar$',`$\yvar_{w}$') 
)
\end{tabularlstlisting}} & &
\end{tabular}
    };
    \node[anchor=west] at (4.2, -3.7) {$\ell_x = -\log \sigma \left( \log \frac{\sum \colorbox{gray!20}{\scriptsize $\checkmark$}}{\sum \colorbox{red!10}{\tiny \tikzxmark}}  \right)$};

    \node[anchor=west] at (7.5, -1) {{\large $\progvariable_\textbf{A}$}};
    \node[anchor=west] at (5.7, -3) {{\large $\progvariable_\textbf{C}$}};
    \node[anchor=west] at (4.7, -3) {{\large $\neg\progvariable$}};
    \node[anchor=west] at (3.6, -3) {{\large $\progvariable$}};

    \draw[draw=orange,ultra thick] (5.9,-1.2) rectangle ++(0.98,0.45);
    \draw[draw=red!50,ultra thick] (5,-2.1) rectangle ++(0.4,0.9);

    \path[->,thick] (2.8,-2.8) edge (3.8,-1.4); 
    \path[->,thick] (2.8,-2.8) edge (3.8,-2.4); 
    \path[->,thick] (5.1,-2.8) edge (4.4,-2.); 

    \path[->,thick] (6,-2.7) edge (5.5,-2.1);
    \path[->,thick] (7.6,-1) edge (7,-1);
\end{tikzpicture}
    \caption{\textcolor{black}{\emph{Loss functions as truth tables.} The Boolean semantics (top) of WMC and preference structures/losses: \colorbox{gray!20}{$\checkmark$} correspond to propositional models of $\progvariable$, $\overline{\progvariable_f}$, \colorbox{red!20}{$\times$}s to $\neg\progvariable$ and $\overline{\neg\progvariable_f}$, blank cells to conditioning constraints $\progvariable_{\textbf{C}}$ and cells with multiple marks to $\progvariable_{\textbf{A}}$. Losses (columns) are created by assigning/removing marks then counting these marks/rows $\sum$ (bottom Eq. from Eq.~\ref{eq:logistic_form_sl}).}}
    \label{fig:boolean}
\end{figure}

\section{A logic for preference modeling}

In the standard semantic loss, loss functions $\ell_{x}$ are expressible as a single propositional formulas $\progvariable$ interpreted via probabilistic logic, with $\ell_{x} = -\log p_{\theta}(\progvariable)$. \textcolor{black}{Proposition~\ref{prop:limitations}, however, reveals issues with trying to perform a compositional translation of \emph{preference} losses into a single formula}. 
Indeed, in logical accounts of pairwise preference  \citep{jeffrey1990logic,rescher2010logic}, it is common to model preferences not as a single propositional formula but as an inequality between the scores $\mu$ (computed e.g., by \textbf{WMC}) of two independent propositional formulas $\mu(\progvariable_{w}) >  \mu(\progvariable_{l})$. 

To bridge this gap, we define \textbf{preference structure}, a relational structure \textcolor{black}{and semantic encoding}, that allows us to capture the semantics of DPA losses in a modular fashion using a \emph{single} propositional formula coupled with auxiliary constraints. This structure, based on a novel construction in propositional logic, makes it easy to cleanly characterize different DPA losses. We will use it to generalize the semantic loss and create a novel logic for DPA.

\paragraph{Preference structure.} A preference structure is a tuple $\overline{\progvariable} = (\progvariable, \progvariable_{\textbf{C}},\progvariable_{\textbf{A}})$ that, as will become clear shortly from Prop~\ref{prop:equivalence}, captures the semantics of a winner and a loser. It consists of three propositional formulas: a \textbf{core semantic formula} $\progvariable$ coupled with \textbf{conditioning constraints} $\progvariable_{\textbf{C}}$ (as in Eq~\ref{eq:cond_sl}, which restrict the propositional models that can be counted), and \textbf{additive constraints} $\progvariable_{\textbf{A}}$ that tell us which propositional models must always be counted. As we will show, all DPA losses in Table~\ref{tab:comparison} are representable as preference structures, often ones where the same core formula $\progvariable$ is shared (e.g., the formulas in Figure~\ref{fig:illustration}), differing only in their constraints ($\progvariable_{\textbf{C}}$ and $\progvariable_{\textbf{A}}$). 

Each preference structure has a \textbf{formula form} $\overline{\progvariable_{f}}$ and a \textbf{negated formula form} $\overline{\neg\progvariable_{f}}$, defined as follows:
{
\begin{align}
\overline{\progvariable_{f}} := \bigg( \progvariable \lor \progvariable_{\textbf{A}} \bigg) \, \land \,  \progvariable_{\textbf{C}}, \ \ \ \ \  \overline{\neg\progvariable_f} := \bigg( \neg\progvariable \lor \progvariable_{\textbf{A}} \bigg) \, \land \,  \progvariable_{\textbf{C}}. 
\label{eq:eq_forms}
\end{align}}

\textcolor{black}{Intuitively, $\overline{\progvariable_{f}}$ and $\overline{\neg\progvariable_{f}}$ correspond to the semantics of the winner ($\progvariable_{w}$) and the loser ($\progvariable_{l}$), respectively.  Preference structures and their corresponding formula forms are designed to give us a modular way to express the original semantic loss, the conditional semantic loss, and arbitrary pairwise preferences. For example, removing $\progvariable_{\textbf{A}}$ or making it $\bot$ makes the semantic loss of $\overline{\progvariable_{f}}$ equivalent to the conditional semantic loss from Eq~\ref{eq:cond_sl}. For convenience later, we note the two such equivalences formally below.
}

\begin{obs}[no conditioning or additive constraints]
    When a preference structure $\overline{\progvariable}$ has $\progvariable_{\textbf{A}}$ and $\progvariable_{\textbf{C}}$ set to $\bot$ (false) and $\top$ (true), respectively, the semantic loss of $\progvariable$ is equal to the standard semantic loss of $\overline{\progvariable_{f}}$, or $\ell(\progvariable,\theta,D) = \ell(\overline{\progvariable_{f}},\theta,D)$ (under Eq~\ref{eq:logistic_form_sl}) given the logical equivalence of $\progvariable$ and $\overline{\progvariable_{f}}$. 
\label{obs:standard_sl}
\end{obs}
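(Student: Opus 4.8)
The plan is to show that under the stated substitution the formula form $\overline{\progvariable_{f}}$ collapses logically to the core formula $\progvariable$, and then to invoke the fact that the semantic loss of a formula depends on that formula only through its set of weighted propositional models. Logical equivalence will therefore force equality of the two losses.

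First I would substitute $\progvariable_{\textbf{A}} = \bot$ and $\progvariable_{\textbf{C}} = \top$ directly into the definition of the formula form in Eq~\ref{eq:eq_forms}. Using the elementary Boolean identities $\progvariable \lor \bot \equiv \progvariable$ and $\psi \land \top \equiv \psi$, this gives
\begin{align}
\overline{\progvariable_{f}} = (\progvariable \lor \bot) \land \top \equiv \progvariable,
\end{align}
which is exactly the logical equivalence asserted in the statement. Taking negations on both sides then yields $\neg\overline{\progvariable_{f}} \equiv \neg\progvariable$ as well.

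The second step transfers these logical equivalences to equalities of weighted model counts. Because $\wmc{\cdot}{}$ in Eq~\ref{eq:standard_wmc} is a sum over the propositional models $\mathbf{w} \models \progvariable$ with weights determined solely by the truth values of the atomic predictions $X_{i}$ in each $\mathbf{w}$, two logically equivalent formulas have identical model sets and hence identical WMC. Applying this to $\overline{\progvariable_{f}} \equiv \progvariable$ and $\neg\overline{\progvariable_{f}} \equiv \neg\progvariable$ gives $\wmc{\overline{\progvariable_{f}}}{} = \wmc{\progvariable}{}$ and $\wmc{\neg\overline{\progvariable_{f}}}{} = \wmc{\neg\progvariable}{}$, and likewise after substituting any $d \sim D$.

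Finally, plugging these equalities into the logistic-log form of the standard semantic loss (Eq~\ref{eq:logistic_form_sl}) shows that the semantic loss ratio for $\overline{\progvariable_{f}}$ matches that for $\progvariable$ example-by-example, and hence so do the per-example losses and their expectation over $D$, giving $\ell(\progvariable,\theta,D) = \ell(\overline{\progvariable_{f}},\theta,D)$. I do not expect a genuine obstacle here: the claim is essentially an unfolding of the definition of the formula form together with the observation that WMC is a logical invariant. The only point requiring care is to carry out the substitution consistently for both $\progvariable$ and its negation, so that the denominator of the loss ratio (the $\neg\progvariable$ count) is matched as well as the numerator.
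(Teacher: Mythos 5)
Your proof is correct and follows the same route the paper intends: the Observation is justified in the paper precisely by the logical equivalence $\overline{\progvariable_{f}} = (\progvariable \lor \bot) \land \top \equiv \progvariable$ (and hence $\neg\overline{\progvariable_{f}} \equiv \neg\progvariable$), combined with the fact that $\text{WMC}$ depends only on the set of propositional models, so both the numerator and denominator of the semantic loss ratio in Eq~\ref{eq:logistic_form_sl} coincide. Your explicit handling of the negation (matching the denominator as well as the numerator) is exactly the care the paper's one-line justification leaves implicit.
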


\begin{obs}[no additive constraints]
    When a preference structure $\overline{\progvariable}$ has $\progvariable_{\textbf{A}}$ set to $\bot$, the conditional semantic loss (Eq~\ref{eq:cond_sl}) of $\progvariable$ conditioned on $\progvariable_{\textbf{C}}$, or $\ell(\progvariable \mid \progvariable_{\textbf{C}}, \theta,D)$, is equal to $\mathop{\mathbb{E}}_{d \sim D} \big[-\log \sigma(\frac{ \wmc{\overline{\progvariable_{f_{d}}}}{}}{ 
 \wmc{\overline{\neg\progvariable_{f_{d}}}}{} })\big]$ given the equivalence with the conditional form in Eq.~\ref{eq:logistic_form_sl}. 
\label{obs:conditional_sl}
\end{obs}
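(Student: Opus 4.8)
The plan is to unwind the definition of the conditional semantic loss, show that setting $\progvariable_{\textbf{A}} = \bot$ collapses the two formula forms in Eq~\ref{eq:eq_forms} to exactly the conjunctions appearing in the conditional probability of Eq~\ref{eq:cond_sl}, and then invoke the same logistic identity used to derive Eq~\ref{eq:logistic_form_sl} to obtain the stated $-\log \sigma(\cdot)$ expression. Since logically equivalent formulas are interchangeable under weighted model counting, the argument is essentially a chain of Boolean simplifications followed by one elementary algebraic rewrite.

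First I would substitute $\progvariable_{\textbf{A}} = \bot$ into Eq~\ref{eq:eq_forms}. Using $\progvariable \lor \bot \equiv \progvariable$ and $\neg\progvariable \lor \bot \equiv \neg\progvariable$, the formula forms simplify to $\overline{\progvariable_{f}} \equiv \progvariable \land \progvariable_{\textbf{C}}$ and $\overline{\neg\progvariable_{f}} \equiv \neg\progvariable \land \progvariable_{\textbf{C}}$. Because the weighted model count in Eq~\ref{eq:standard_wmc} depends only on the set of satisfying assignments of a formula, logically equivalent formulas receive identical counts, so $\wmc{\overline{\progvariable_{f}}}{} = \wmc{\progvariable \land \progvariable_{\textbf{C}}}{}$ and $\wmc{\overline{\neg\progvariable_{f}}}{} = \wmc{\neg\progvariable \land \progvariable_{\textbf{C}}}{}$. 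I would then take the conditional probability $p_{\theta}(\progvariable \mid \progvariable_{\textbf{C}}) = \wmc{\progvariable \land \progvariable_{\textbf{C}}}{} / (\wmc{\progvariable \land \progvariable_{\textbf{C}}}{} + \wmc{\neg\progvariable \land \progvariable_{\textbf{C}}}{})$ from Eq~\ref{eq:cond_sl} and substitute these equalities to get $p_{\theta}(\progvariable \mid \progvariable_{\textbf{C}}) = \wmc{\overline{\progvariable_{f}}}{} / (\wmc{\overline{\progvariable_{f}}}{} + \wmc{\overline{\neg\progvariable_{f}}}{})$. The final step is the elementary identity $a/(a+b) = \sigma(\log(a/b))$ for positive $a,b$ — precisely the manipulation underlying the logistic log form of Eq~\ref{eq:logistic_form_sl} — applied with $a = \wmc{\overline{\progvariable_{f}}}{}$ and $b = \wmc{\overline{\neg\progvariable_{f}}}{}$, giving $p_{\theta}(\progvariable \mid \progvariable_{\textbf{C}}) = \sigma(\log \wmc{\overline{\progvariable_{f}}}{} / \wmc{\overline{\neg\progvariable_{f}}}{})$. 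Taking $-\log$ and the expectation over $d \sim D$, with the variable-substitution notation $\progvariable_{f_d}$, yields exactly the claimed expression.

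The computation is routine, so the only point requiring care is the non-degeneracy condition implicit in the sigmoid identity: the ratio $\wmc{\overline{\progvariable_{f}}}{} / \wmc{\overline{\neg\progvariable_{f}}}{}$ is well defined only when $\wmc{\overline{\neg\progvariable_{f}}}{} = \wmc{\neg\progvariable \land \progvariable_{\textbf{C}}}{} > 0$, i.e.\ when $\neg\progvariable \land \progvariable_{\textbf{C}}$ has satisfying models of nonzero weight — the analogue of the ``non-tautologous formula'' caveat noted for Eq~\ref{eq:logistic_form_sl}. I would state this mild assumption explicitly, observing that it holds for all preference structures of interest, where both $\progvariable \land \progvariable_{\textbf{C}}$ and $\neg\progvariable \land \progvariable_{\textbf{C}}$ admit satisfying rows, as illustrated in Figure~\ref{fig:boolean}. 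Under it the denominator in the conditional probability is strictly positive, so every step above is valid and the equivalence follows.
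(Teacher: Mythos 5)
Your proposal is correct and follows essentially the same route the paper relies on: the paper states this observation without a separate proof precisely because it reduces to the chain you spell out, namely that setting $\progvariable_{\textbf{A}} = \bot$ collapses the formula forms of Eq~\ref{eq:eq_forms} to $\progvariable \land \progvariable_{\textbf{C}}$ and $\neg\progvariable \land \progvariable_{\textbf{C}}$, after which the conditional probability in Eq~\ref{eq:cond_sl} rewrites via the identity $a/(a+b) = \sigma(\log(a/b))$ underlying Eq~\ref{eq:logistic_form_sl}. Your explicit non-degeneracy caveat, that $\wmc{\overline{\neg\progvariable_{f}}}{}$ must be strictly positive, is the analogue of the paper's ``non-tautologous formula'' proviso and is a sensible addition; note also that the observation as printed omits the $\log$ inside $\sigma(\cdot)$, and your derivation supplies the intended logistic log form.
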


With full preference structures containing $\progvariable_{\textbf{A}}$, any two propositional formulas (e.g., any $\progvariable_{w}$ and $\progvariable_{l}$) can be expressed as a preference structure based on a particular construction, called the \textbf{implication form}, which will play a central role when doing decompilation in Section~\ref{sec:dpa_decompilation}.  
\begin{prop}
\textcolor{black}{Given any two propositional formulas $\progvariable_{w}$ and $\progvariable_{l}$, there exists a preference structure $\overline{\progvariable}$ such that $\progvariable_{w} \equiv \overline{\progvariable_{f}}$ and $\progvariable_{l} \equiv \overline{\neg \progvariable_{f}}$. }
\label{prop:equivalence}
\end{prop}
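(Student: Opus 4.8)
The plan is to prove the proposition constructively: rather than argue for existence abstractly, I will exhibit one particular preference structure---the \emph{implication form}---built directly out of $\progvariable_{w}$ and $\progvariable_{l}$, and then verify that its two formula forms collapse to exactly $\progvariable_{w}$ and $\progvariable_{l}$. Reading the intended roles of the three components off Figure~\ref{fig:boolean} (the conditioning constraint $\progvariable_{\textbf{C}}$ should delete the interpretations counted by neither form, while the additive constraint $\progvariable_{\textbf{A}}$ should mark those counted by both), the natural candidate is
\[
\progvariable := (\progvariable_{l} \to \progvariable_{w}), \qquad \progvariable_{\textbf{C}} := \progvariable_{w} \lor \progvariable_{l}, \qquad \progvariable_{\textbf{A}} := \progvariable_{w} \land \progvariable_{l}.
\]
With this choice the core formula $\progvariable$ is precisely the implication used as the running example in Figure~\ref{fig:illustration}, which both explains the name and makes the construction uniform in $\progvariable_{w}$ and $\progvariable_{l}$.

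Substituting this triple into Eq~\ref{eq:eq_forms} yields $\overline{\progvariable_{f}} = ((\progvariable_{l} \to \progvariable_{w}) \lor (\progvariable_{w} \land \progvariable_{l})) \land (\progvariable_{w} \lor \progvariable_{l})$ and the analogous expression for $\overline{\neg\progvariable_{f}}$, so the remaining work is to show $\overline{\progvariable_{f}} \equiv \progvariable_{w}$ and $\overline{\neg\progvariable_{f}} \equiv \progvariable_{l}$. I would discharge both by a pointwise case split over the four Boolean regions determined by the truth values of $\progvariable_{w}$ and $\progvariable_{l}$ at an arbitrary interpretation $\mathbf{w} \in \{0,1\}^{n}$. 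When both are true, $\progvariable_{\textbf{A}}$ is true, forcing both forms to equal $\progvariable_{\textbf{C}}$, which is also true, matching $\progvariable_{w}$ and $\progvariable_{l}$; when both are false, $\progvariable_{\textbf{C}}$ is false, killing both forms, again matching; and when exactly one holds, $\progvariable_{\textbf{A}}$ is false while $\progvariable_{\textbf{C}}$ is true, so the forms reduce to $\progvariable$ and $\neg\progvariable$, and since $\progvariable = \progvariable_{l} \to \progvariable_{w}$ is true in the ``only $\progvariable_{w}$'' region and false in the ``only $\progvariable_{l}$'' region, both equivalences hold there too. The same two identities can alternatively be obtained purely algebraically via absorption and distribution, which I would mention as a one-line cross-check.

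I do not expect a deep obstacle here: the construction is the real content, and once it is in hand the verification is entirely routine Boolean reasoning. The one point that genuinely needs care is that the claim is for \emph{arbitrary} $\progvariable_{w}$ and $\progvariable_{l}$, which may share atoms or be jointly (un)satisfiable; the four-region case split handles this uniformly precisely because it reasons pointwise at each interpretation rather than about the global syntactic shape of the formulas, so any region that happens to be unrealizable simply drops out of the argument. I would also remark that the core formula is not unique---for instance $\progvariable := \progvariable_{w}$ works equally well, differing from the implication form only on the interpretations already governed by $\progvariable_{\textbf{A}}$ and $\progvariable_{\textbf{C}}$---but the implication form is the canonical choice that we exploit when decompiling the losses of Table~\ref{tab:comparison} in Section~\ref{sec:dpa_decompilation}.
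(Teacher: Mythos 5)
Your proposal is correct and matches the paper's proof: you exhibit exactly the same implication-form construction $(\progvariable_{l} \to \progvariable_{w},\ \progvariable_{w} \lor \progvariable_{l},\ \progvariable_{w} \land \progvariable_{l})$ that the paper uses, and your four-region case split simply carries out in detail the verification that the paper dismisses as "can be checked manually." Nothing is missing; the pointwise case analysis correctly handles arbitrary (possibly atom-sharing or unsatisfiable) $\progvariable_{w}$ and $\progvariable_{l}$.
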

\begin{proof}
We provide a specific construction called the \textbf{implication form} of $\progvariable_{w}$ and $\progvariable_{l}$, based on the following logical equivalences,  which can be checked manually:
\begin{align*}
    \progvariable_{w} \equiv \bigg( \underbrace{\colorbox{gray!20}{$(\progvariable_{l} \to \progvariable_{w})$} \hspace{-0.1cm} }_{\progvariable} \lor \hspace{-.05cm} \underbrace{\colorbox{gray!20}{$(\progvariable_{w}  \land \progvariable_{l})$ \hspace{-0.3cm} }}_{\progvariable_{\textbf{A}}} \bigg) \hspace{-.1cm} \land \underbrace{\colorbox{gray!20}{$(\progvariable_{w} \lor \progvariable_{l})$}}_{\progvariable_{\textbf{C}}} \\  \progvariable_{l} \equiv \bigg( \underbrace{\colorbox{gray!20}{$\neg(\progvariable_{l} \to \progvariable_{w})$}}_{\neg \progvariable} \lor \underbrace{\colorbox{gray!20}{$(\progvariable_{w} \land \progvariable_{l})$} \hspace{-0.2cm}}_{\progvariable_{\textbf{A}} }\bigg) \hspace{-.1cm} \land \hspace{-.1cm} \underbrace{\colorbox{gray!20}{$(\progvariable_{w} \lor \progvariable_{l})$} \hspace{-0.2cm} }_{\progvariable_{\textbf{C}}}
\end{align*}
As noted above, this construction corresponds exactly to the preference structure  $(\progvariable, \progvariable_{\textbf{C}}, \progvariable_{\textbf{A}})$ with $\progvariable := \progvariable_{l} \to \progvariable_{w}$, $\progvariable_{\textbf{C}} := \progvariable_{w} \lor \progvariable_{l}$ and $\progvariable_{\textbf{A}} := \progvariable_{w} \land \progvariable_{l}$.
and its two formula forms.
(As a special case, whenever $\progvariable_{l} \equiv \neg\progvariable_{w}$, this simplifies to the structure $\overline{\progvariable} = (\progvariable_{w},\top,\bot)$; see again Obs~\ref{obs:standard_sl}.)
\end{proof} 

As a corollary, this tell us that we can decompose any preference structure formed via the implication form to two formulas. Figure~\ref{fig:boolean} shows a natural encoding of preference structures as Boolean truth tables (where a \colorbox{gray!20}{$\checkmark$} denotes whether the corresponding model is counted in $\progvariable_{w}$ and a\colorbox{red!20}{$\times$} denotes whether it is counted in $\progvariable_{l}$), \textcolor{black}{which we will later use when discussing and introducing new losses.} 

\begin{example}[preference structures and Boolean representations]
    The truth table representations in Figure~\ref{fig:boolean} have a natural mapping to preference structures. For example, the column for $\ell_{\texttt{CPO}}$ can be expressed as two formulas: $\lmpredicate{M}(\xvar,\yvar_{w})$ (for $\checkmark$) and $\lmpredicate{M}(\xvar,\yvar_{l})$ (for $\tikzxmark$), which can be compiled into a preference structure using the implication construction with $\progvariable := \lmpredicate{M}(\xvar,\yvar_{l}) \to \lmpredicate{M}(\xvar,\yvar_{w})$,  $\progvariable_{\textbf{C}} := \lmpredicate{M}(\xvar,\yvar_{l}) \lor \lmpredicate{M}(\xvar,\yvar_{w})$ and $\progvariable_{\textbf{A}} := \lmpredicate{M}(\xvar,\yvar_{l}) \land \lmpredicate{M}(\xvar,\yvar_{w})$. Visually, $\progvariable_{\textbf{C}}$ corresponds to the union of rows containing a $\checkmark$s or $\tikzxmark$s, and $\progvariable_{\textbf{A}}$ to all rows with both $\checkmark$s and $\tikzxmark$s. In relation to Obs~\ref{obs:standard_sl}-~\ref{obs:conditional_sl}, we can say intuitively that columns where the original semantic loss is applicable are ones where all rows have a single mark, and columns where no double marks are included can be modeled using the conditional semantic loss (Eq~\ref{eq:cond_sl}). 
\label{ex:CPO}
\end{example}

\begin{table}
 \begin{centering}
 \setlength\arrayrulewidth{1.2pt}
 {\footnotesize
 \setlength\arrayrulewidth{1.2pt}
 \resizebox{\linewidth}{!}{%
 \begin{tabular}{| l c c |}
	\hline 
 	{\textbf{Variant}} & {$f(\rho_{\text{sem}},\beta)=$} & {\textbf{Semantic loss ratio}}  \\ \hline
	$\ell_{\text{sl-log}}$ & $-\log \sigma(\beta\rho_{\text{sem}})$ & \multirow{3}*{$\rho_{\text{sem}} := \log \frac{\wmc{\overline{\progvariable_{f}}}{}}{\wmc{\overline{\neg\progvariable_f}}{}}$}   \\ 
    $\ell_{\text{sl-squared}}$ & $(\rho_{\text{sem}} - \frac{1}{2\beta})^2 $ &   \\ 
	$\ell_{\text{sl-margin}}$ & $\max(0, \beta - \rho_{\text{sem}})$ &   \\ \hline 
\end{tabular}
}}

\end{centering}
\caption{Different forms of the generalized semantic loss that match the DPA losses in Table~\ref{tab:f}.}
\label{tab:sl_variants}
\end{table}

\subsection{Semantic loss based on preference structures} 
\label{sec:logic_formal}

In our generalization of the semantic loss, formulas $\progvariable$ will be replaced with preference structures $\overline{\progvariable}$. For example, we can modify the logistic log form of SL in Eq~\ref{eq:logistic_form_sl} to be $\ell(\overline{\progvariable},\theta,D)$ and change the semantic loss ratio $\rho_{\text{sem}}$ accordingly to operate over the formula forms of $\overline{\progvariable}$ in Eq~\ref{eq:eq_forms}. By analogy to the generalized DPA in Eq~\ref{eq:dpa}, we can view this logistic log form as a particular instance of a \textbf{generalized semantic loss}: $\ell_{\text{sl}}(\overline{\progvariable},\theta,D) := \mathop{\mathbb{E}}_{d \sim D} [ f(\rho_{\text{sem}}(d),\beta) ]$
where, like in DPA, different choices can be made about what $f$ to apply over the semantic loss ratio $\rho_{\text{sem}}$, which gives rise to novel logics (we describe such variants in Table~\ref{eq:dpa}). To match the structure of DPA, we also add a weight parameter $\beta$.

\textcolor{black}{Given Observations~\ref{obs:standard_sl} and \ref{obs:conditional_sl}, we can see how the standard and conditional semantic loss end up being special cases of $\ell_{\text{sl-log}}$ under specific preference structures and when $\beta=1$.} 

\paragraph{How many loss functions are there?} Under this formulation, we can view loss creation as a generative procedure: select an $f$ then sample two formulas $\progvariable_{w}$ and $\progvariable_{l}$ (each denoting a unique Boolean function in $n$ variables) to create a $\overline{\progvariable}$ via  Prop~\ref{prop:equivalence} \textcolor{black}{(see also Figure~\ref{fig:boolean})}. Absent any constraints, the total number of definable preference structures is doubly exponential in the number of probabilistic predictions $n$, specifically $4^{2^n}$ (i.e., all unique pairs of Boolean functions). While not all such preference structures will lead to meaningful or unique losses, for \texttt{DPO} ($n=4$), this results in an upper bound of about $4.3$ billion definable losses. 

\textcolor{black}{In particular, we will later refer to \textbf{non-trivial losses} as those where each $\progvariable_{w}$ and $\progvariable_{l}$ are not equal to one another or equal to $\bot$ (in the truth table representations, these would be cases where the set of $\checkmark$s is identical to the set of $\tikzxmark$ or there are no $\checkmark$s or $\tikzxmark$s).}

\paragraph{How is the loss space structured?} While the space is large, one can structure this space using the semantics of the corresponding formulas. Below we define preference structure \emph{entailment} and \emph{equivalence}, and relate these semantic notions to the behavior of the compiled losses. These formal notions not only give us tools for structuring the DPA loss space but also inform the search for new loss functions. 

We define \textbf{preference entailment} for two preference structures $\overline{\progvariable}^{(1)} \sqsubseteq \overline{\progvariable}^{(2)}$ in terms of ordinary propositional entailment ($\models$) between their formula forms: $ \overline{\progvariable}^{(1)} \sqsubseteq \overline{\progvariable}^{(2)} := (\overline{\progvariable_f}^{(1)} \models \overline{\progvariable_f}^{(2)}  \ \ \land \ \ \overline{\neg\progvariable_f}^{(2)} \models \overline{\neg\progvariable_f}^{(1)})$. These losses are monototic w.r.t.\ preference entailment (proof deferred to Appendix~\ref{sec:aux_proofs}), as in the original SL \citep{xu2018semantic}. 
\begin{restatable}[monotonicity]{prop}{Monotonicity}
If $\overline{\progvariable}^{(1)} \sqsubseteq \overline{\progvariable}^{(2)}$ then $\ell_{\text{sl}}(\overline{\progvariable}^{(1)},\theta,D) \geq \ell_{\text{sl}}(\overline{\progvariable}^{(2)},\theta,D)$ for any $\theta,D$. 
\label{prop:monotonicity}
\end{restatable}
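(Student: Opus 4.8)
The plan is to reduce the claim to the elementary monotonicity of weighted model counting under propositional entailment, and then track how that monotonicity propagates through the semantic loss ratio $\rho_{\text{sem}}$ and the outer function $f$. First I would record the key lemma: whenever $\progvariable_a \models \progvariable_b$, the satisfying assignments of $\progvariable_a$ form a subset of those of $\progvariable_b$, so since every summand in Eq~\ref{eq:standard_wmc} is a product of nonnegative factors $P_{\theta}(X_i)$ and $(1 - P_{\theta}(X_i))$, dropping from a superset to a subset of models can only remove nonnegative terms, giving $\wmc{\progvariable_a}{} \le \wmc{\progvariable_b}{}$.

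Next I would unfold the definition of preference entailment. The relation $\overline{\progvariable}^{(1)} \sqsubseteq \overline{\progvariable}^{(2)}$ consists of two ordinary entailments: $\overline{\progvariable_f}^{(1)} \models \overline{\progvariable_f}^{(2)}$ and, with the direction deliberately reversed, $\overline{\neg\progvariable_f}^{(2)} \models \overline{\neg\progvariable_f}^{(1)}$. Applying the lemma to the first gives $\wmc{\overline{\progvariable_f}^{(1)}}{} \le \wmc{\overline{\progvariable_f}^{(2)}}{}$ (the numerator of $\rho_{\text{sem}}$ grows), while applying it to the second gives $\wmc{\overline{\neg\progvariable_f}^{(2)}}{} \le \wmc{\overline{\neg\progvariable_f}^{(1)}}{}$ (the denominator shrinks). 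These entailments hold purely at the Boolean level and so are independent of the probability assignment; hence they remain valid after substituting any data point $d \sim D$. For each such $d$ the ratio inside $\rho_{\text{sem}}$ therefore has a numerator that only increases and a denominator that only decreases as we pass from structure $(1)$ to $(2)$, and taking logarithms yields the pointwise bound $\rho_{\text{sem}}^{(1)}(d) \le \rho_{\text{sem}}^{(2)}(d)$.

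Finally I would push this through the outer loss. For the canonical logistic log form $f(\rho,\beta) = -\log\sigma(\beta\rho)$ with $\beta > 0$, $f$ is monotonically non-increasing in $\rho$, so $\rho_{\text{sem}}^{(1)}(d) \le \rho_{\text{sem}}^{(2)}(d)$ gives $f(\rho_{\text{sem}}^{(1)}(d),\beta) \ge f(\rho_{\text{sem}}^{(2)}(d),\beta)$; taking the expectation over $d \sim D$ preserves the inequality and delivers $\ell_{\text{sl}}(\overline{\progvariable}^{(1)},\theta,D) \ge \ell_{\text{sl}}(\overline{\progvariable}^{(2)},\theta,D)$.

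I expect the crux is not any single computation but getting the two entailment directions to line up correctly: the \emph{reversed} entailment on the negated forms is exactly what forces the denominator to move opposite to the numerator, and a sign slip there would collapse the argument. A secondary point to handle carefully is the well-definedness of the ratio, where I would invoke the non-trivial loss restriction to rule out unsatisfiable formula forms (zero WMC) so the logarithm is defined; I would also flag that the last step relies on $f$ being non-increasing, which holds for $\ell_{\text{sl-log}}$ but would require separate treatment for non-monotone choices such as the squared loss.
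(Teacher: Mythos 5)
Your proof is correct and follows essentially the same route as the paper's: propositional entailment yields pointwise WMC inequalities for the formula form (numerator) and, in the reversed direction, for the negated formula form (denominator), so the semantic loss ratio is pointwise no larger for $\overline{\progvariable}^{(1)}$ than for $\overline{\progvariable}^{(2)}$, and the loss inequality follows by monotonicity of the outer loss and expectation over $d \sim D$. Your closing caveats are in fact sharper than the paper's own proof, which asserts the conclusion for the ratio term in Table~\ref{tab:sl_variants} without noting that $\ell_{\text{sl-squared}}$ is not monotone in $\rho_{\text{sem}}$ (so the stated generality really holds only for non-increasing choices of $f$ such as $\ell_{\text{sl-log}}$ and $\ell_{\text{sl-margin}}$) and without addressing zero-WMC degeneracies of the log ratio.
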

We will later use entailment to characterize the relative strength of DPA losses and visualize their relations using a representation called a \textbf{loss lattice}  (see Figure~\ref{fig:lattice}). We also extend entailment to \textbf{preference equivalence}  $\overline{\progvariable}^{(1)} \equiv \overline{\progvariable}^{(2)}$ in the natural way, namely when $\overline{\progvariable}^{(1)} \sqsubseteq \overline{\progvariable}^{(2)}$ and $\overline{\progvariable}^{(2)} \sqsubseteq \overline{\progvariable}^{(1)}$. Equivalent preference structures have identical semantic losses (see Corollary~\ref{cor:equivalence} in Appendix~\ref{sec:aux_proofs}).

\begin{example}[loss entailment]
    Entailments can be observed using the truth table encodings for preference structures $\overline{\progvariable_{x}}$ as in Figure~\ref{fig:boolean} and checking for subset relations between $\checkmark$s and $\tikzxmark$s as in ordinary logic. For example, we can see that $\ell_{\texttt{unCPO}}$ is (strictly) entailed by both $\ell_{\texttt{ORPO}}$ and $\ell_{\texttt{CPO}}$ by seeing that the $\checkmark$s of the latter are contained in the former and that the $\tikzxmark$s of the former are contained in the latter. This will allow us to prove the following kinds of results about specific losses (where $\overline{\progvariable_{x}}$ corresponds to the preference structure and semantic encoding for each loss $x$): 
    \begin{prop}
    $\forall D, \theta.\,\, \ell_{\text{sl}}(\overline{\progvariable}_{\texttt{CPO}},\theta,D) > \ell_{\text{sl}}(\overline{\progvariable}_{\texttt{unCPO}},\theta,D)$ and  $\ell_{\text{sl}}(\overline{\progvariable}_{\texttt{ORPO}},\theta,D) > \ell_{\text{sl}}(\overline{\progvariable}_{\texttt{unCPO}},\theta,D)$.
    \end{prop}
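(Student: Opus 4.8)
The plan is to reduce both strict inequalities to \emph{strict} preference entailments and then invoke monotonicity (Prop~\ref{prop:monotonicity}), being careful to track exactly where the strictness of the inequality originates. Recall from Figure~\ref{fig:boolean} that each loss's preference structure can be read directly off its column: the rows marked $\checkmark$ are the propositional models of the winner formula $\overline{\progvariable_{f}}$, and the rows marked $\tikzxmark$ are the models of the loser formula $\overline{\neg\progvariable_f}$. Since these variants use the logistic log loss, $\ell_{\text{sl}}$ is a strictly decreasing function of $\rho_{\text{sem}} = \log \wmc{\overline{\progvariable_{f}}}{}/\wmc{\overline{\neg\progvariable_f}}{}$, so proving $\ell_{\text{sl}}(\overline{\progvariable}_{x},\theta,D) > \ell_{\text{sl}}(\overline{\progvariable}_{\texttt{unCPO}},\theta,D)$ for $x \in \{\texttt{CPO},\texttt{ORPO}\}$ is equivalent to showing that the semantic loss ratio of $x$ is strictly smaller than that of $\texttt{unCPO}$ pointwise for every draw $d \sim D$.

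First I would extract the model sets from the three relevant columns of Figure~\ref{fig:boolean}. Writing each row as the pair $\big(\lmpredicate{M}(\xvar,\yvar_{w}),\lmpredicate{M}(\xvar,\yvar_{l})\big)$, the $\checkmark$ (winner) sets are $\{(T,F)\}$ for $\texttt{ORPO}$, $\{(T,T),(T,F)\}$ for $\texttt{CPO}$, and $\{(T,T),(T,F),(F,F)\}$ for $\texttt{unCPO}$, while the $\tikzxmark$ (loser) sets are $\{(F,T)\}$ for both $\texttt{ORPO}$ and $\texttt{unCPO}$ and $\{(T,T),(F,T)\}$ for $\texttt{CPO}$. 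Checking the two clauses of preference entailment (forward inclusion of $\checkmark$ sets and reverse inclusion of $\tikzxmark$ sets) then gives $\overline{\progvariable}_{\texttt{CPO}} \sqsubseteq \overline{\progvariable}_{\texttt{unCPO}}$ and $\overline{\progvariable}_{\texttt{ORPO}} \sqsubseteq \overline{\progvariable}_{\texttt{unCPO}}$, and crucially each is \emph{strict}: for $\texttt{CPO}$ the winner inclusion properly omits row $(F,F)$ and the loser inclusion properly omits row $(T,T)$; for $\texttt{ORPO}$ the loser sets coincide but the winner inclusion properly omits rows $(T,T)$ and $(F,F)$. Monotonicity immediately delivers the non-strict bounds $\ell_{\text{sl}}(\overline{\progvariable}_{\texttt{CPO}},\theta,D) \geq \ell_{\text{sl}}(\overline{\progvariable}_{\texttt{unCPO}},\theta,D)$ and likewise for $\texttt{ORPO}$.

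The main obstacle is upgrading these $\geq$ bounds to the claimed strict $>$, since Prop~\ref{prop:monotonicity} only yields $\geq$. To close this gap I would unfold the weighted model count: $\wmc{\overline{\progvariable_{f}}}{}$ is a sum over the $\checkmark$ rows of a row weight $\prod_i w_i$, where each $w_i$ equals either $P_{\theta}(X_i)$ or $1-P_{\theta}(X_i)$. Because an LLM assigns every sequence a probability strictly inside $(0,1)$, each such factor lies in $(0,1)$ and so every row weight is strictly positive; note also that all the loser sets are nonempty, so the denominators are positive and $\rho_{\text{sem}}$ is finite. Strict positivity of row weights means a proper inclusion of $\checkmark$ sets forces a strictly smaller numerator, and a proper inclusion of $\tikzxmark$ sets forces a strictly larger denominator. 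For $\texttt{CPO}$ both effects occur, and for $\texttt{ORPO}$ the denominators are equal but the numerator strictly drops; in either case $\rho_{\text{sem}}$ is strictly smaller than that of $\texttt{unCPO}$ at every $d$. Finally, applying the strictly decreasing map $\rho \mapsto -\log\sigma(\beta\rho)$ with $\beta>0$ preserves the strict inequality pointwise in $d$, and taking the expectation $\mathbb{E}_{d\sim D}[\cdot]$ preserves it (a pointwise strict inequality integrates to a strict one), establishing both strict bounds for all $\theta$ and $D$.
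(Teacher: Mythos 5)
Your proof is correct and follows essentially the same route as the paper: reading the winner/loser model sets off the truth-table columns of Figure~\ref{fig:boolean}, verifying the strict preference entailments $\overline{\progvariable}_{\texttt{CPO}} \sqsubseteq \overline{\progvariable}_{\texttt{unCPO}}$ and $\overline{\progvariable}_{\texttt{ORPO}} \sqsubseteq \overline{\progvariable}_{\texttt{unCPO}}$, and then appealing to monotonicity (Prop.~\ref{prop:monotonicity}). You actually go a step further than the paper, which leaves the upgrade from $\geq$ to $>$ implicit: your observation that strictly interior probabilities make every row weight positive, so that proper containment of $\checkmark$/$\tikzxmark$ sets strictly changes the WMC ratio pointwise, is exactly the missing ingredient, and your restriction to a strictly decreasing $f$ (the logistic log loss) is genuinely necessary, since the strict inequality would fail for a non-strictly-monotone choice such as $\ell_{\text{sl-margin}}$.
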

    and gives us a formal tool for thinking about the relative constrainedness of losses in a way that is grounded both in the semantics of those losses and their relative loss behavior. Through formalization, we can also reason about the relationship between new losses and standard losses such as cross-entropy $\ell_{\texttt{CE}}$ (all these properties can be visualized in the kinds of lattices we show in Figure~\ref{fig:lattice}). 
 \end{example}

\subsection{Decompiling DPA losses into preference structures} 
\label{sec:dpa_decompilation}

The  \textbf{decompilation} of a DPA loss $\ell_{\text{DPA}_x}$ into a symbolic form can now be stated as finding a preference structure $\overline{\progvariable}$ whose particular semantic loss $\ell_{\text{sl}_x}$ is equal to $\ell_{\text{DPA}_x}$:
 \begin{align}
       \underbrace{\colorbox{white}{$\ell_{\text{DPA}_x}(\theta,D) = \ell_{\text{sl}_x}(\overline{\progvariable},\theta,D)$}}_{\textbf{decompilation of } \ell_{\text{DPA}_{x}} \text{ to } \overline{\progvariable}} , \, \colorbox{white}{$\frac{\rho_{\theta}^{t}}{\rho_{\theta}^{b}} =  \frac{\wmc{\overline{\progvariable_f}}{}}{\wmc{\overline{\neg\progvariable_f}}{}}$} \label{eqn:loss_equivalence}
     \end{align}
We say that a preference structure $\overline{\progvariable}$ \textbf{correctly characterizes} a loss $\ell_x$ under some $\ell_{\text{sl}_x}$ whenever this condition holds.  Given the structure of the DPA loss (Eq~\ref{eq:dpa}) and the generalized semantic loss, whenever $f$ is fixed this can be reduced to finding a $\overline{\progvariable}$ whose semantic loss ratio $\rho_{\text{sem}}$ is equal to $\ell_x$'s core loss equation $\rho_{\theta}$ as shown on the right of Eq~\ref{eqn:loss_equivalence} (with the $\log$ removed).

Based on this, we define a procedure for translating the core loss equations $\rho_{\theta}$ in Table~\ref{tab:comparison} into preference structures and $\rho_{\text{sem}}$. We consider each part in turn. 

\noindent \textbf{Characterizing the DPA equation class.}
By construction, we will assume that all the core equations for DPA losses $\rho_{\theta}^{t}$ and $\rho_{\theta}^b$ are expressible as certain types of \textbf{disjoint multilinear polynomials} over binary variables $\{x_i\}_{i=1}^n$, intuitively polynomials whose translation via the rules in Table~\ref{tab:translation_rules} results in valid formulas of propositional logic. Formally, such polynomials over $n$ variables are defined as any polynomial $e$ of the form $e = \sum_i e_{i}$ where (a) for all $i$ there exists $J_i \subseteq \{1, \ldots, n\}$ such that $e_i = \prod_{j \in J_i} \ell_{ij}$ where $\ell_{ij}$ is either $x_j$ or $(1 - x_j)$, and (b) for all $i, i'$, terms $e_i$ and $e_{i'}$ are disjoint, i.e., have no common solutions (for some $k$, one term has $x_k$ and the other has $1 - x_k$).

We note that not all preference loss functions in the preference learning literature immediately fit this multilinear form, including the original form of \texttt{DPOP} \citep{pal2024smaug} which we discuss in Appendix~\ref{sec:dpop} and fix through \textbf{variable copying} as shown in Table~\ref{tab:comparison}.

\begin{algorithm}[t] 
\algorithmicinput{ Disjoint polynomial $\rho_{\theta} = \log \frac{\rho^{t}_{\theta}}{\rho^{b}_{\theta}}$ } \algorithmicoutput{ $\overline{\progvariable}$} \\ 
    $\progvariable_{t} \gets \textsc{sem}(\rho_{\theta}^{t})$ \,\,\,\,\,\algorithmiccomment{\text{\textcolor{azure}{Translation to logic}, \textcolor{azure}{Table}~\ref{tab:translation_rules}}} \\ 
    $\progvariable_{b} \gets \textsc{sem}(\rho_{\theta}^{b})$ \\    
    $\progvariable \gets \textsc{simplify}(\opimplication(\progvariable_{b},\progvariable_{t}))$ \algorithmiccomment{\textcolor{azure}{Implication form}} \\  
    $\progvariable_{\textbf{C}} \gets \textsc{simplify}(\opor(\progvariable_{t},\progvariable_{b}))$ \,\,\, \,\,\,\,\, \,\, \algorithmiccomment{\textcolor{azure}{via Proposition~\ref{prop:equivalence}}} \\
    $\progvariable_{\textbf{A}} \gets \textsc{simplify}(\opand(\progvariable_{t},\progvariable_{b}))$ \\
    $\textbf{return } \overline{\progvariable} :=(\progvariable,\progvariable_{\textbf{C}},\progvariable_{\textbf{A}})$ \,\, \algorithmiccomment{$\rho_{\theta}\! =\! \log \frac{\wmc{\overline{\progvariable_f}}{}}{\wmc{\overline{\neg\progvariable_f}}{}}$, \textcolor{azure}{Lem.}~\ref{lem:correctness}}
\caption{Translation of loss to logic \textbf{(decompilation)}}
\label{alg:sl_translation}
\end{algorithm}

\noindent \textbf{Translation algorithm.}  Our translation process is shown in Algorithm~\ref{alg:sl_translation}. Given $\rho_{\theta}$, both $\rho^t_\theta$ and $\rho^b_\theta$ are independently translated into logic via a compositional translation function $\textsc{Sem}$. The translation is standard, based on the rules in Table~\ref{tab:translation_rules}: first each model prediction $P_{\lmpredicate{M}}(\cdot)$ is mapped to a probabilistic prediction $\lmpredicate{M}(\cdot)$; then $1 - \progvariable$ is mapped to negation, $\progvariable_{1} \cdot \progvariable_{2}$ to conjunction, and $\progvariable_{1} + \progvariable_{2}$ to disjunction; these rules are applied repeatedly until the full expression is translated. By induction on the rules, one can establish the correctness of the translation function $\textsc{Sem}$, i.e., that for any disjoint multilinear polynomial $\rho^z_{\theta}$, it holds that $\rho^z_{\theta} = \wmc{\textsc{Sem}(\rho^z_{\theta})}{}$. Finally, the implication construction from Prop~\ref{prop:equivalence} is applied to create a preference structure $\overline{\progvariable}$, where formulas are (optionally) minimized via $\textsc{simplify}$.

The following follows from the correctness of our translation rules and the implication construction (Prop~\ref{prop:equivalence}): 
\begin{lem}[correctness of translation]
Given a loss equation $\rho_{\theta} = \log \rho^t_{\theta} / \rho^b_{\theta}$ with disjoint multilinear polynomials $\rho^t_{\theta}$, and $\rho^b_{\theta}$, Algorithm~\ref{alg:sl_translation} returns a preference structure $\overline{\progvariable}$ whose semantic loss ratio $\rho_{\text{sem}}$ equals $\rho_{\theta}$, satisfying Eq~\ref{eqn:loss_equivalence} (right). 
\label{lem:correctness}
\end{lem}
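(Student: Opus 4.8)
The plan is to establish Lemma~\ref{lem:correctness} in two stages: first prove correctness of the translation function $\textsc{Sem}$ on disjoint multilinear polynomials, then show that the implication construction assembles these into a preference structure whose semantic loss ratio $\rho_{\text{sem}}$ recovers the original $\rho_{\theta}$. The target identity is $\rho_{\theta} = \log \wmc{\overline{\progvariable_f}}{} / \wmc{\overline{\neg\progvariable_f}}{}$ (the right-hand condition of Eq~\ref{eqn:loss_equivalence}), which, since $\rho_{\theta} = \log \rho_{\theta}^t / \rho_{\theta}^b$, reduces to showing $\rho_{\theta}^t = \wmc{\overline{\progvariable_f}}{}$ and $\rho_{\theta}^b = \wmc{\overline{\neg\progvariable_f}}{}$.

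\textbf{Step 1: Correctness of \textsc{Sem}.} First I would prove the claim asserted in the text, that for any disjoint multilinear polynomial $\rho^z_{\theta}$ we have $\rho^z_{\theta} = \wmc{\textsc{Sem}(\rho^z_{\theta})}{}$, by structural induction on the translation rules of Table~\ref{tab:translation_rules}. The base case maps each model prediction $P_{\lmpredicate{M}}(\cdot)$ to an atomic proposition $\lmpredicate{M}(\cdot)$ whose weight is exactly $P_{\theta}(X_i)$, so a single atom $x_j$ has $\wmc{\lmpredicate{M}(\cdot)}{} = P_\theta(X_j) = x_j$ by Eq~\ref{eq:standard_wmc}. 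For the inductive step, the key observation is that WMC is multiplicative over conjunctions of formulas on disjoint variable sets and additive over disjunctions of mutually exclusive formulas: $1 - \progvariable \mapsto \neg$ gives $\wmc{\neg\progvariable}{} = 1 - \wmc{\progvariable}{}$, a product $\progvariable_1 \cdot \progvariable_2 \mapsto \opand$ gives $\wmc{\progvariable_1 \land \progvariable_2}{} = \wmc{\progvariable_1}{}\cdot\wmc{\progvariable_2}{}$ when the variable sets are disjoint, and a sum $\progvariable_1 + \progvariable_2 \mapsto \opor$ gives $\wmc{\progvariable_1 \lor \progvariable_2}{} = \wmc{\progvariable_1}{} + \wmc{\progvariable_2}{}$ precisely because the \emph{disjointness} hypothesis guarantees the terms have no common models, so the inclusion--exclusion correction term $\wmc{\progvariable_1 \land \progvariable_2}{}$ vanishes. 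This is where the disjoint multilinear structure is used essentially, and it is the technical crux: the whole translation is faithful only because disjointness turns the additive polynomial structure into literal additivity of model counts.

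\textbf{Step 2: The implication construction recovers $\rho_{\theta}$.} Having established $\progvariable_t = \textsc{Sem}(\rho_\theta^t)$ and $\progvariable_b = \textsc{Sem}(\rho_\theta^b)$ satisfy $\rho_\theta^t = \wmc{\progvariable_t}{}$ and $\rho_\theta^b = \wmc{\progvariable_b}{}$, I would invoke Proposition~\ref{prop:equivalence} directly. Algorithm~\ref{alg:sl_translation} forms $\overline{\progvariable}$ via exactly the implication construction of that proposition with $\progvariable_w := \progvariable_t$ and $\progvariable_l := \progvariable_b$, setting $\progvariable := \progvariable_b \to \progvariable_t$, $\progvariable_{\textbf{C}} := \progvariable_t \lor \progvariable_b$, and $\progvariable_{\textbf{A}} := \progvariable_t \land \progvariable_b$ (the \textsc{simplify} calls preserve logical equivalence and hence leave WMC unchanged). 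Proposition~\ref{prop:equivalence} then yields the logical equivalences $\overline{\progvariable_f} \equiv \progvariable_t$ and $\overline{\neg\progvariable_f} \equiv \progvariable_b$. Since WMC depends only on the Boolean function computed (logically equivalent formulas have identical model sets and hence identical weighted counts), we conclude $\wmc{\overline{\progvariable_f}}{} = \wmc{\progvariable_t}{} = \rho_\theta^t$ and $\wmc{\overline{\neg\progvariable_f}}{} = \wmc{\progvariable_b}{} = \rho_\theta^b$. Taking the log of the ratio gives $\rho_{\text{sem}} = \log \wmc{\overline{\progvariable_f}}{}/\wmc{\overline{\neg\progvariable_f}}{} = \log \rho_\theta^t/\rho_\theta^b = \rho_\theta$, which is exactly Eq~\ref{eqn:loss_equivalence} (right) and completes the proof.

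\textbf{The main obstacle} I anticipate is making the disjointness bookkeeping in Step~1 fully rigorous: the additivity $\wmc{\progvariable_1 \lor \progvariable_2}{} = \wmc{\progvariable_1}{} + \wmc{\progvariable_2}{}$ relies on the terms being pairwise mutually exclusive, and one must verify that the disjoint-multilinear property of the input polynomial is genuinely preserved (or correctly consumed) as the induction descends through the $+$ and $\cdot$ constructors, so that the two WMC composition rules apply at each node. It is worth noting explicitly that the property (b) in the definition of disjoint multilinear polynomials is exactly what licenses the additive rule, and that \texttt{DPOP} required \textbf{variable copying} precisely to be massaged into this form; everything else reduces to the already-established Proposition~\ref{prop:equivalence} and the substitution-invariance of WMC under logical equivalence.
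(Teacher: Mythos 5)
Your proposal is correct and follows essentially the same route as the paper, which justifies the lemma exactly as you do: correctness of $\textsc{Sem}$ by induction on the translation rules (with disjointness licensing additivity of WMC over the $+$ rule), followed by the implication construction of Proposition~\ref{prop:equivalence} and the invariance of WMC under logical equivalence. The paper states this justification only in passing, so your write-up is simply a fully fleshed-out version of the intended argument, including the correct identification of where the disjoint multilinear hypothesis is essential.
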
 
This establishes the correctness of our decompilation algorithm, \textcolor{black}{showing specifically that Algorithm~\ref{alg:sl_translation} yields preference structures that satisfy the right equality in Eq~\ref{eqn:loss_equivalence}}.

\begin{figure}

\centering

\begin{tikzpicture}[
    box/.style={draw, ultra thick, rectangle, rounded corners, inner sep=8pt, minimum width=4.7cm,minimum height=2cm,scale=0.75},
    label node/.style={above, font=\footnotesize\bfseries, align=center},
    ]
    
    \node[box,fill=yellow!10] (inputloss) {\large $-\log \sigma \bigg( \log \frac{\text{Odds}_{\theta}(y_{w} \mid x)}{\text{Odds}_{\theta}(y_{l} \mid x)} \bigg)$};
    \node[box, fill=gray!10,right=0.5cm of inputloss] (structure) {
    \begin{tabular}{c p{1cm}}
    	{\footnotesize
	\begin{tabularlstlisting}
`\progvariable :=' Implies(
  M(`$\xvar$',`$\yvar_{l}$'), M(`$\xvar$',`$\yvar_{w}$'))
`$\progvariable_{\textbf{C}}$ :=' XOR(M(`$\xvar$',`$\yvar_{l})$',M(`$\yvar_{w})$'))
`$\progvariable_{\textbf{A}} := \bot$'  
\end{tabularlstlisting}} & 
\end{tabular}
    };
    \node[box, below=0.5cm of inputloss,fill=yellow!10] (coreloss) {\large
    	$\frac{\rho_{\theta}^{t}}{ \rho_{\theta}^{b}} = \frac{P_{\theta}(y_{w} \mid x)(1 - P_{\theta}(y_{l} \mid x))}{P_{\theta}(y_{l} \mid x)(1 - P_{\theta}(y_{w} \mid x))}$
    };
    \node[box, right=0.5cm of coreloss,fill=gray!10] (sem) {\footnotesize
    	$\begin{aligned}
     \textsc{Sem}(\rho_{\theta}^{t}) &= \lmpredicate{M}(\xvar,\yvar_{w}) \land \neg\lmpredicate{M}(\xvar,\yvar_{l}) \\
     \textsc{Sem}(\rho_{\theta}^{b})  &= \lmpredicate{M}(\xvar,\yvar_{l}) \land \neg\lmpredicate{M}(\xvar,\yvar_{w}) 
  \end{aligned}$
    };
    
   \node[above=0.5cm of inputloss] (compilation){\emph{compilation}};
   \node[above=0.5cm of structure] (derivation){\emph{decompilation}};
   \draw[<->] (compilation) -- (derivation) node[anchor=south,inner sep=2pt,midway] {Thm.~\ref{thm:correctness}};
        
     \node[above=0.0cm of inputloss]{\footnotesize \textbf{Input Loss} $\ell_{\texttt{ORPO}}$};
      \node[above=0.0cm of structure]{\footnotesize \textbf{Preference structure} $\overline{\progvariable}$};
   \node[below=0.0cm of coreloss]{\footnotesize \textbf{Core loss equation} (Table~\ref{tab:comparison})};
    \node[below=0.0cm of sem]{\footnotesize \textbf{Semantic translation} (Table~\ref{tab:translation_rules})};
    \node[above right=0.0cm of coreloss]{\footnotesize Algorithm~\ref{alg:sl_translation}};
    
   \draw[->,thick] (inputloss) -- (coreloss);
   \draw[->,thick] (coreloss) -- (sem);
   \draw[->,thick] (sem) -- (structure);
   \draw[->,thick] (structure) -- (inputloss);

\end{tikzpicture}
\caption{\emph{How do we decompile losses?} A visualization of our compositional decompilation procedure and main results using the example loss $\ell_{\texttt{ORPO}}$. First the original \textbf{input loss}  (upper left) is stripped down to its \textbf{core loss equation} (lower left, $\log$ removed), which is then \textbf{semantically translated} (lower right) and mapped into a \textbf{preference structure} (upper right) that can be \textbf{compiled} back into the original loss (Thm~\ref{thm:correctness}).}
\label{fig:orpo_derivation}
\end{figure}

\begin{example}[loss derivation]
    Figure~\ref{fig:orpo_derivation} shows an example derivation of the original $\ell_{\texttt{ORPO}}$ \citep{hong2024reference} with $\textsc{Odds}_{\theta}(y \mid x) := \frac{P_{\theta}(y \mid x)}{1 - P_{\theta}(y \mid x)}$ into a preference structure. First, the loss is reduced to its core loss equation $\rho_{\theta}$ as in Table~\ref{tab:comparison} (with the $\log$ removed). Parts of this equation are then compositionally translated to logic via Algorithm~\ref{alg:sl_translation}, with $\rho_{\theta}^{t}$ corresponding to $\lmpredicate{M}(\xvar,\yvar_{w}) \land \neg\lmpredicate{M}(\xvar,\yvar_{l})$ and $\rho_{\theta}^{b}$ to $\lmpredicate{M}(\xvar,\yvar_{l}) \land \neg\lmpredicate{M}(\xvar,\yvar_{w})$. A preference structure is then constructed by making $\progvariable := (\lmpredicate{M}(\xvar,\yvar_{l}) \land \neg\lmpredicate{M}(\xvar,\yvar_{w})) \to (\lmpredicate{M}(\xvar,\yvar_{w}) \land \neg\lmpredicate{M}(\xvar,\yvar_{l}))$ (which is logically equivalent to $\lmpredicate{M}(\xvar,\yvar_{l}) \to \lmpredicate{M}(\xvar,\yvar_{w})$), $\progvariable_{\textbf{A}} := \bot$ (after simplification) and $\progvariable_{\textbf{C}} := (\lmpredicate{M}(\xvar,\yvar_{w}) \land \neg \lmpredicate{M}(\xvar,\yvar_{l})) \lor (\neg\lmpredicate{M}(\xvar,\yvar_{w}) \land \lmpredicate{M}(\xvar,\yvar_{l}))$ (\textcolor{black}{see Figure~\ref{fig:sympy_simplification} for an example of how to compute this using symbolic computation tools}). 
\end{example}

\section{Results and Discussion}

\begin{table} 
	\centering 
       \lstset{numbers=none, language={Scheme}, basicstyle=\linespread{1.3}}
     {\footnotesize

        \setlength\arrayrulewidth{1.2pt}
 	\begin{tabular}{  p{1cm} | p{6cm}  }
            \multicolumn{1}{c}{\textbf{Loss}} & \multicolumn{1}{c}{\textbf{Representation} $\overline{\progvariable}$} \\ 
		\hline 
            \texttt{CE} & 
            \begin{tabularlstlisting} 
`$\progvariable :=$' M(`$\xvar$',`$\yvar_{w}$'), `$\progvariable_{\textbf{C}} := \bot$'
\end{tabularlstlisting}
            \\ \hline 
            \texttt{CEUnl} & 
            \begin{tabularlstlisting}
`$\progvariable :=$' And(M(`$\xvar$',`$\yvar_{w}$'), Not(M(`$\xvar$',`$\yvar_{l}$')))
`$\progvariable_{\textbf{C}} := \bot$'
\end{tabularlstlisting}
            \\ \hline 
            \texttt{CPO}   & 
            \begin{tabularlstlisting}
;; core semantic formula
`$\progvariable :=$' Implies(M(`$\xvar$',`$\yvar_{l}$'), M(`$\xvar$',`$\yvar_{w}$'))
;; one-true constraint
`$\progvariable_{\textbf{C}} :=$' Or(M(`$\xvar$',`$\yvar_{l}$'), M(`$\xvar$',`$\yvar_{w}$'))
\end{tabularlstlisting}
            \\[.4cm] \hline 
            \texttt{ORPO}  & 
            \begin{tabularlstlisting}
`$\progvariable\,\,\, :=$' Implies(M(`$\xvar$',`$\yvar_{l}$'),M(`$\xvar$',`$\yvar_{w}$'))
;; one-hot constraint
`$\progvariable_{\textbf{C}} :=$' XOR(M(`$\xvar$',`$\yvar_{l}$'), M(`$\xvar$',`$\yvar_{w}$'))
\end{tabularlstlisting} 
            \\ \hline 
            \texttt{DPO} &
            \begin{tabularlstlisting}
;; reference form of CPO
`$\progvariable :=$' Implies( 
        And(Ref(`$\xvar$',`$\yvar_{w}$'),M(`$\xvar$',`$\yvar_{l}$')), 
        And(Ref(`$\xvar$',`$\yvar_{l}$'),M(`$\xvar$',`$\yvar_{w}$')))
`$\progvariable_{\textbf{C}} :=$' Or(
        And(Ref(`$\xvar$',`$\yvar_{w}$'),M(`$\xvar$',`$\yvar_{l}$')), 
        And(Ref(`$\xvar$',`$\yvar_{l}$'),M(`$\xvar$',`$\yvar_{w}$')))
\end{tabularlstlisting}  \\ \hline
\texttt{SimPO} &
           \begin{tabularlstlisting}
;; DPO with manual reference policy
`$\progvariable :=$' Implies(
        And(Mref(`$\xvar$',`$\yvar_{w}$'), M(`$\xvar$',`$\yvar_{l}$')), 
        And(Mref(`$\xvar$',`$\yvar_{l}$'), M(`$\xvar$',`$\yvar_{w}$')))
`$\progvariable_{\textbf{C}} :=$' Or(
        And(Mref(`$\xvar$',`$\yvar_{w}$'),M(`$\xvar$',`$\yvar_{l}$')), 
        And(Mref(`$\xvar$',`$\yvar_{l}$'),M(`$\xvar$',`$\yvar_{w}$')))
\end{tabularlstlisting}  \\ \hline
	\end{tabular} 
 }
	\caption{\emph{What do formalized versions of standard losses look like?} Formalizations of some of the losses from Table~\ref{tab:comparison} shown in terms of $\progvariable$ and $\progvariable_{\textbf{C}}$ (for succinctness, we exclude $\progvariable_{\textbf{A}}$ which can be inferred from each $\progvariable_{\textbf{C}}$ via Algorithm~\ref{alg:sl_translation}).}
\label{tab:formalization}
\end{table}

Table~\ref{tab:formalization} shows the preference structures obtained from Algorithm~\ref{alg:sl_translation} for the DPA losses in Table~\ref{tab:comparison}. The following result establishes their correctness:
\begin{thm}[Correctness of formalization] The preference structures in Table~\ref{tab:formalization} correctly characterize the losses in Table~\ref{tab:comparison} and satisfy Eq~\ref{eqn:loss_equivalence} under semantic loss $\ell_{\text{sl-log}}$ (Table~\ref{tab:sl_variants}). 
\label{thm:correctness}
\end{thm}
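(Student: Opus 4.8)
The plan is to obtain the theorem as a finite, loss-by-loss application of Lemma~\ref{lem:correctness}, which already discharges the only analytic content: that on any valid input Algorithm~\ref{alg:sl_translation} returns a preference structure whose semantic loss ratio $\rho_{\text{sem}}$ equals the input $\rho_{\theta}$. Concretely, for each loss $\ell_x$ appearing in Table~\ref{tab:formalization} (that is, \texttt{CE}, \texttt{CEUnl}, \texttt{CPO}, \texttt{ORPO}, \texttt{DPO}, \texttt{SimPO}) I would read off its core loss equation $\rho_{\theta} = \log \rho_{\theta}^{t}/\rho_{\theta}^{b}$ from Table~\ref{tab:comparison}, verify that $\rho_{\theta}^{t}$ and $\rho_{\theta}^{b}$ satisfy the precondition of Lemma~\ref{lem:correctness} (they are disjoint multilinear polynomials over the probabilistic predictions), and then invoke the lemma to conclude that the structure $\overline{\progvariable}$ it returns satisfies the right-hand equality of Eq~\ref{eqn:loss_equivalence}, namely $\rho_{\theta}^{t}/\rho_{\theta}^{b} = \wmc{\overline{\progvariable_{f}}}{}/\wmc{\overline{\neg\progvariable_f}}{}$. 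The precondition check is routine: for single-term numerators/denominators it is immediate, and for the odds-style denominators (e.g.\ \texttt{CEUnl}, where $\rho_{\theta}^{b} = 1 - P_{\theta}(\yvar_{w}\mid x)(1 - P_{\theta}(\yvar_{l}\mid x))$) it amounts to expanding $1-\rho_{\theta}^{t}$ into its disjoint multilinear form, equivalently translating it as $\neg\textsc{Sem}(\rho_{\theta}^{t})$ so that $\wmc{\cdot}{} = 1 - \rho_{\theta}^{t}$.

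Given this, the theorem reduces to two verifications per loss. First, I would check that the raw output of Algorithm~\ref{alg:sl_translation} is logically equivalent to the structure displayed in Table~\ref{tab:formalization}: run $\textsc{Sem}$ on $\rho_{\theta}^{t}$ and $\rho_{\theta}^{b}$ via the rules of Table~\ref{tab:translation_rules}, form the implication construction of Prop~\ref{prop:equivalence} ($\progvariable := \opimplication(\progvariable_{b},\progvariable_{t})$, $\progvariable_{\textbf{C}} := \opor(\progvariable_{t},\progvariable_{b})$, $\progvariable_{\textbf{A}} := \opand(\progvariable_{t},\progvariable_{b})$), and simplify. For \texttt{CPO}, \texttt{ORPO}, \texttt{CE}, and \texttt{CEUnl} this is a small Boolean-algebra exercise (e.g.\ $(\lmpredicate{M}(\xvar,\yvar_{l}) \land \neg\lmpredicate{M}(\xvar,\yvar_{w})) \to (\lmpredicate{M}(\xvar,\yvar_{w}) \land \neg\lmpredicate{M}(\xvar,\yvar_{l}))$ collapses to $\lmpredicate{M}(\xvar,\yvar_{l}) \to \lmpredicate{M}(\xvar,\yvar_{w})$), where the only real care is distinguishing \texttt{CPO} (conditioning the one-true $\opor$ constraint, with nonempty $\progvariable_{\textbf{A}}$) from \texttt{ORPO} (conditioning the one-hot $\opxor$ constraint, with $\progvariable_{\textbf{A}}$ simplifying to $\bot$). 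For the reference-based losses \texttt{DPO} and \texttt{SimPO}, I would first apply Obs~\ref{obs:reference_form} to rewrite the single-model core equation into its reference form, so that the compound predicates such as $\opand(\lmpredicate{Ref}(\xvar,\yvar_{w}),\lmpredicate{M}(\xvar,\yvar_{l}))$ become the atoms fed to $\textsc{Sem}$, and then verify that the implication form matches the tabulated $\progvariable$ and $\progvariable_{\textbf{C}}$. Second, I would establish the left-hand equality $\ell_{\text{DPA}_x} = \ell_{\text{sl}_x}$: since every loss in Table~\ref{tab:comparison} is implemented with the logistic log loss $f = -\log\sigma(\beta\,\cdot)$ and $\ell_{\text{sl-log}}$ in Table~\ref{tab:sl_variants} uses exactly this $f$ applied to $\rho_{\text{sem}}$, equality of the two expected losses over $D$ follows immediately from $\rho_{\text{sem}} = \rho_{\theta}$ obtained above (length normalization and the reference/manual-reference terms are absorbed into the model-agnostic notation $P_{m}(\cdot)$ and play no role here).

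The main obstacle is not analytic but verificational: the per-loss Boolean simplifications must be carried out correctly, and the disjoint-multilinear precondition must genuinely hold for each $\rho_{\theta}^{t},\rho_{\theta}^{b}$ before Lemma~\ref{lem:correctness} can be invoked. The subtle points are the \texttt{CPO}/\texttt{ORPO} distinction (both share the core formula $\lmpredicate{M}(\xvar,\yvar_{l}) \to \lmpredicate{M}(\xvar,\yvar_{w})$ but differ in $\progvariable_{\textbf{C}}$ and $\progvariable_{\textbf{A}}$) and the correct bookkeeping of reference atoms in \texttt{DPO}/\texttt{SimPO}. I would also flag that losses outside Table~\ref{tab:formalization} whose original form is not multilinear (notably \texttt{DPOP}) fall under the equation class only after the variable-copying fix already noted in the main text; these are handled separately and are not needed for the present statement.
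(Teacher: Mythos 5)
Your proposal is correct and follows essentially the same route as the paper: the paper's proof likewise reduces the theorem to Lemma~\ref{lem:correctness} (correctness of Algorithm~\ref{alg:sl_translation}), the fact that Table~\ref{tab:formalization} was produced by running that algorithm on the core loss equations of Table~\ref{tab:comparison}, and the observation that all original losses use the logistic log form so that $\ell_{\text{sl-log}}$ recovers them exactly. Your version simply makes explicit the per-loss precondition checks (disjoint multilinearity) and Boolean simplifications that the paper leaves implicit in its appeal to the lemma.
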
 
\begin{proof}
    Since the original losses were all formulated using the logistic log form of DPA, the correctness of Algorithm~\ref{alg:sl_translation} (which follows from Lemma~\ref{lem:correctness}) implies that compiling the representations in Table~\ref{tab:formalization} (which, as noted above, were obtained by running Algorithm~\ref{alg:sl_translation} on the losses in Table~\ref{tab:comparison}) under $\ell_{\text{sl-log}}$ will yield precisely the original losses, and hence satisfies Eq~\ref{eqn:loss_equivalence}.
\end{proof}

\textcolor{black}{By changing the version of semantic loss, we can extend our analysis to other variants of \texttt{DPO}, showing the generality of our semantic analysis and its invariance to the choice of $f$. For example, by changing $\ell_{\texttt{sl-log}}$ to $\ell_{\texttt{sl-squared}}$ or $\ell_{\texttt{sl-margin}}$, we immediately obtain the following:} 

\begin{thm}[Extension to other \texttt{DPO}s]
The \texttt{DPO} and \texttt{CPO} preference structures in Table~\ref{tab:formalization} correctly characterize the \texttt{IPO} and \texttt{SliC} losses (Table~\ref{tab:f}) and satisfy Eq~\ref{eqn:loss_equivalence} under the $\ell_{\texttt{sl-squared}}$ and $\ell_{\texttt{sl-margin}}$ semantic losses, respectively. 
\end{thm}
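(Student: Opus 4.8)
The plan is to exploit the fact that the decompilation correctness already established in Theorem~\ref{thm:correctness} (via Lemma~\ref{lem:correctness}) is phrased entirely at the level of the semantic loss ratio $\rho_{\text{sem}}$ and is therefore indifferent to the outer convex function $f$. Concretely, Algorithm~\ref{alg:sl_translation} guarantees the right equality of Eq~\ref{eqn:loss_equivalence}, namely $\frac{\rho_{\theta}^{t}}{\rho_{\theta}^{b}} = \frac{\wmc{\overline{\progvariable_f}}{}}{\wmc{\overline{\neg\progvariable_f}}{}}$. Taking logarithms of both sides shows that for the \texttt{DPO} preference structure its semantic loss ratio satisfies $\rho_{\text{sem}} = \rho_{\theta}^{\texttt{DPO}}$, and likewise $\rho_{\text{sem}} = \rho_{\theta}^{\texttt{CPO}}$ for the \texttt{CPO} structure. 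This identity is exactly what makes the argument $f$-agnostic.

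First I would read off Table~\ref{tab:f} the observation that \texttt{IPO} uses the same model quantity $\rho_{\theta}$ as \texttt{DPO} (the log-ratio difference of winner and loser across the policy and reference models), and that \texttt{SliC} uses the same $\rho_{\theta}$ as \texttt{CPO} (the winner/loser log ratio); within each pair the sole difference is the choice of $f$. Combining this with the identity above immediately yields $\rho_{\text{sem}} = \rho_{\theta}^{\texttt{IPO}}$ for the \texttt{DPO} structure and $\rho_{\text{sem}} = \rho_{\theta}^{\texttt{SliC}}$ for the \texttt{CPO} structure.

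Next I would match the outer functions. By the definitions in Table~\ref{tab:sl_variants}, $\ell_{\texttt{sl-squared}}$ applies $f(\rho,\beta) = (\rho - \tfrac{1}{2\beta})^2$ to $\rho_{\text{sem}}$, which is precisely the convex function used by \texttt{IPO} in Table~\ref{tab:f}; similarly $\ell_{\texttt{sl-margin}}$ applies $f(\rho,\beta) = \max(0,\beta - \rho)$, the hinge loss used by \texttt{SliC}. Since each $f$ is applied pointwise to a quantity that already equals the corresponding $\rho_{\theta}$, the per-example integrands coincide, and taking the expectation $\mathbb{E}_{d\sim D}$ gives $\ell_{\text{sl}}(\overline{\progvariable}_{\texttt{DPO}},\theta,D) = \ell_{\texttt{IPO}}(\theta,D)$ and $\ell_{\text{sl}}(\overline{\progvariable}_{\texttt{CPO}},\theta,D) = \ell_{\texttt{SliC}}(\theta,D)$, establishing Eq~\ref{eqn:loss_equivalence} under the claimed semantic losses.

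There is no genuine technical obstacle here: the heavy lifting — the multilinear translation and the implication construction — was already discharged in Lemma~\ref{lem:correctness} and Theorem~\ref{thm:correctness}. The only point needing care is the conceptual one, namely making explicit that decompilation correctness is invariant to $f$ because it is stated as an equality of ratios ($\rho_{\text{sem}} = \rho_{\theta}$) rather than of full losses. Once this is spelled out, the result reduces to reading off the shared $\rho_{\theta}$ columns of Table~\ref{tab:f} and pairing each $f$ with its counterpart in Table~\ref{tab:sl_variants}.
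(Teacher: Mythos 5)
Your proposal is correct and takes essentially the same route as the paper, which presents this theorem as an immediate consequence of the preceding results: since Theorem~\ref{thm:correctness} (via Lemma~\ref{lem:correctness}) establishes the ratio identity $\rho_{\text{sem}} = \rho_{\theta}$ for the \texttt{DPO} and \texttt{CPO} structures, and \texttt{IPO}/\texttt{SliC} share those same $\rho_{\theta}$'s while differing only in $f$, swapping $\ell_{\text{sl-log}}$ for $\ell_{\text{sl-squared}}$/$\ell_{\text{sl-margin}}$ recovers the claimed losses. Your explicit emphasis that decompilation correctness is $f$-agnostic because it is stated at the level of ratios is precisely the point the paper relies on when it says the result is obtained ``immediately.''
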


\textcolor{black}{
Interestingly, we show in Appendix~\ref{sec:fuzzy} how perceptron-style losses such as \texttt{RRHF} in Table~\ref{tab:f} can be derived from our representations by changing our underlying logic to fuzzy logic, which is a popular alternative to our probabilistic logic approach. Given the ubiquity of \texttt{DPO}-style updates in other \emph{online} variants of DPA \citep{qi2024online,zhang2024self,chen2024self,guo2024direct}, we also believe that our semantic analysis might also be useful for semantically characterizing online learning approaches, which we see as a promising future direction of research.}

\subsection{What do we learn about known losses?}

\paragraph{Single model approaches have an intuitive semantics and are highly constrained.} Under our analysis, \texttt{CPO} and \texttt{ORPO} are both derived from the same core semantic formula $\progvariable$ and implication first introduced in Figure~\ref{fig:illustration}, in spite of the superficial differences in their original form. They differ, however, in terms of the conditioning constraints $\progvariable_{\textbf{C}}$ they impose, with \texttt{CPO} imposing a \textbf{one-true} constraint that requires either the winner or loser to be deemed valid, whereas \texttt{ORPO} imposes a \textbf{one-hot} constraint where one and only one can be deemed valid. When plotted in a broader loss landscape, as shown in Figures~\ref{fig:lattice}-~\ref{fig:reference_lattice}, we see that both are entailed by the \texttt{CEUnl} baseline, yet have a non-entailing relation to one another and the cross-entropy loss.  

In general, we see that preference losses are highly constrained.  This is in contrast to the losses typically used with the semantic loss, suggesting that there is much to learn by working backward from empirically successful loss functions to their semantic properties.

\begin{example}[unconstrained loss]
The loss $\ell_{\texttt{unCPO}}$ is an unconstrained version of \texttt{CPO}, with $\progvariable := \lmpredicate{M}(\xvar,\yvar_{l}) \to \lmpredicate{M}(\xvar,\yvar_{w})$ and $\progvariable_{\textbf{C}} := \top, \progvariable_{\textbf{A}} := \bot$ (see again semantics in Figure~\ref{fig:boolean}), and is typical of the kinds of losses produced by the standard semantic loss.  When compiled into a loss, this leads to the rather cumbersome core loss equation: $\log \frac{P_{\theta}(y_{l} \mid x)P_{\theta}(y_{w}\mid x) + (1 - P_{\theta}(y_{l} \mid x))}{P_{\theta}(y_{l} \mid x)(1 - P_{\theta}(y_{w} \mid x))}$, which is a loss that would be very hard to derive working from \texttt{DPO}. While this loss has issues that we discuss Appendix~\ref{app:results_discussion}, it can be completely justified semantically, and we believe that this shows the value of having a reliable semantic framework for deriving new losses that would be otherwise difficult to derive from existing DPA approaches.   
\end{example}

\begin{figure} 
\centering

\begin{tikzpicture}[main_node/.style={},scale=0.7,transform shape]
    \node[rectangle,draw,fill=gray!20] (0) at (-1, 1) {$\ell_{\texttt{CEUnl}}\,\,\checkmark$};
    \node[rectangle,draw,fill=gray!20] (1) at (0.4, -0.8) {$\ell_{\texttt{sCE}}$};
    \node[rectangle,draw,fill=gray!20] (2) at (1.9, -1.2) {$\ell_{\texttt{CE}}\,\,\checkmark$};
    \node[rectangle,draw,fill=gray!20] (3) at (1, 0) {$\ell_{\texttt{cUNL}}$};
    \node[rectangle,draw,fill=gray!20] (4) at (1, 1.) {$\ell_{\texttt{l20}}$};
    \node[rectangle,draw,fill=gray!20] (5) at (2.2, .7) {$\ell_{\texttt{fUnl}}$};
    \node[rectangle,draw,fill=gray!20] (6) at (3.9, 0.3) {$\ell_{\texttt{ORPO}}\,\,\checkmark$};
    \node[rectangle,draw,fill=gray!20] (7) at (3.9, -0.6) {$\ell_{\texttt{CPO}}\,\,\checkmark$};
    \node[rectangle,draw,fill=gray!20] (8) at (6,0) {$\ell_{\texttt{cCPO}}$};
    \node[rectangle,draw,fill=gray!20] (9) at (7.6,0) {$\ell_{\texttt{unCPO}}$};

    \node[rectangle,draw,fill=gray!20] (10) at (4.9, 1.4) {$\ell_{\texttt{qfUnl}}$};
    \node[rectangle,draw,fill=gray!20] (11) at (6.2, .7) {$\ell_{\texttt{cfUnl}}$};


    \draw[->,thick] (0) edge[bend right=15] (1);
    \draw[->,thick] (0) -- (3);
    \draw[->,thick] (0) -- (4);
    \draw[->,thick] (4) -- (10);

    \draw[->,thick] (1) -- (2);
    \draw[->,thick] (1) -- (7);
    \draw[->,thick] (2) edge[bend right=20] (8);
    \draw[->,thick] (4) edge[bend right=20] (5);
    
    \draw[azure,thick] ($(1.north west)+(-2,0.1)$)  rectangle ($(2.south east)+(0.5,-0.15)$);
    \draw[azure,thick] ($(4.south east)+(-1.2,-1.05)$)  rectangle ($(5.north east)+(0.2,1)$);
    \draw[azure,thick] ($(7.south west)+(-0.18,-0.75)$)  rectangle ($(9.north east)+(0.1,1.73)$);
    \draw[azure,thick] ($(0.south west)+(-0.9,-1.)$)  rectangle ($(0.north east)+(0.28,0.7)$);

    \node[main_node,fill=azure!20] (20) at (6.5,-1.1) {\scriptsize $\lmpredicate{M}(\xvar,\yvar_{l}) \to \lmpredicate{M}(\xvar,\yvar_{w})$};
    \node[main_node,fill=azure!20] (20) at (1.6,1.65) {\scriptsize $\neg \lmpredicate{M}(\xvar,\yvar_{l})$};
    \node[main_node,fill=azure!20] (20) at (-1,-1.1) {\scriptsize $\lmpredicate{M}(\xvar,\yvar_{w})$};
    \node[main_node,fill=azure!20] (20) at (-1.3,1.65) {\scriptsize $\lmpredicate{M}(\xvar,\yvar_{w}) \land \neg \lmpredicate{M}(\xvar,\yvar_{l})$};

    \node[] (30) at (-1.5,-2) {\textbf{most constrained}};
    \node[] (40) at (6.8,-2) {\textbf{least constrained}};
    \draw[->,thick] (3) -- (5);
    \draw[->,thick] (30) -- (40);

    \draw[->,thick] (3) -- (6);
    \draw[->,thick] (3) -- (7);
    \draw[->,thick] (7) -- (8);
    \draw[->,thick] (6) -- (8);
    \draw[->,thick] (8) -- (9);
    \draw[->,thick] (10) edge[bend left=15] (11);
    \draw[->,thick] (6) -- (11);
    \draw[->,thick] (5) -- (11);
    \draw[->,thick] (11) -- (9);
\end{tikzpicture}

\caption{\emph{What other losses are there?} Here we show the loss landscape for single model preference approaches using a \textbf{loss lattice} showing losses (nodes) structured according to strict entailment ($\sqsubset$) and their core formulas \colorbox{azure!20}{$\progvariable$} (boxes) with $\checkmark$ being the known losses. See Appendix~\ref{sec:new_losses} for details of the individual losses and a more exhaustive lattice with \texttt{DPO} variants in Figure~\ref{fig:reference_lattice}.}
\label{fig:lattice}
\end{figure}

\begin{figure*}

\centering

\begin{tikzpicture}[main_node/.style={},scale=0.9,transform shape]

   \draw[color=gray!20,fill=gray!20] (2,1.2) -- (2,2.5) -- (10.8,2.5) -- (10.8,1.2);  
   \draw[color=gray!20,fill=gray!20] (6.8,-1.2) -- (6.8,-3.1) -- (10.8,-3.1) -- (10.8,-1.2) -- (6.8,-1.2); 
   \draw[color=gray!20,fill=gray!20] (2,-1.2) -- (2,-3.1) -- (6.65,-3.1) -- (6.65,-1.2) -- (2,-1.2); 
   \draw[color=gray!20,fill=gray!20] (2,1) -- (10.8,1) -- (10.8,-1) -- (2,-1) -- (2,1);
   \draw[color=gray!20,fill=gray!20] (-2.8,1) -- (1.6,1) -- (1.6,-0.4) -- (-3.1,-0.4) -- (-3.1,1); 
   \draw[color=gray!20,fill=gray!20] (-3.1,1.2) -- (-3.1,1.5) -- (-3.1,2.5) -- (1.6,2.5) -- (1.6,1.2); 
   \draw[color=gray!20,fill=gray!20] (-1.4,-0.8) -- (1.6,-0.8) -- (1.6,-3.1) -- (-1.4,-3.1); 
   \draw[color=gray!20,fill=gray!20] (-6.6,-0.8) -- (-1.5,-0.8) -- (-1.5,-3.1) -- (-6.6,-3.1); 
   \draw[color=gray!20,fill=gray!20] (-6.6,-0.4) -- (-6.6,2.5) -- (-3.6,2.5) -- (-3.6,-0.4);

   \node[rectangle,draw] (0) at (-4.5, 1) {$\texttt{CEUnl}\,\,\checkmark$};
   \node[rectangle,draw] (1) at (-3.4, -1.4) {$\texttt{sCE}$};
   \node[rectangle,draw] (2) at (0.5, -2.3) {$\texttt{CE}\,\,\checkmark$};
   \node[rectangle,draw] (20) at (-0.3, -1.1) {$\texttt{l3}$};
   \node[rectangle,draw] (3) at (-1.4, 0) {$\texttt{cUNL}$};
   \node[rectangle,draw] (4) at (-0.8, 1.5) {$\texttt{l20}$};
   \node[rectangle,draw] (5) at (0.7, .7) {$\texttt{fUnl}$};
   \node[rectangle,draw] (6) at (3.6, -0) {$\texttt{ORPO}\,\,\checkmark$};
   \node[rectangle,draw] (7) at (7.5, -1.5) {$\texttt{CPO}\,\,\checkmark$};
   \node[rectangle,draw] (40) at (9.5, -1.5) {$\texttt{l14}$};
   \node[rectangle,draw] (8) at (7.8,-0.3) {$\texttt{cCPO}$};
   \node[rectangle,draw] (9) at (10,-0.2) {$\texttt{unCPO}$};

   \node[rectangle,draw] (10) at (4, 1.7) {$\texttt{qfUnl}$};
   \node[rectangle,draw] (60) at (4, -1.8) {$\texttt{l5}$};
   \node[rectangle,draw] (50) at (6.5, 1.5) {$\texttt{bCPO}$};
   \node[rectangle,draw] (11) at (6.8, .4) {$\texttt{cfUnl}$};

   \draw[->,thick] (0) edge[bend right=15] (1);
   \draw[->,thick] (0) -- (3);
   \draw[->,thick] (5) edge[bend left=10] (40);
   \draw[->,thick] (60) edge[bend right=10] (40);
   \draw[->,thick] (60) edge[bend right=10] (50);
   \draw[->,thick] (1) -- (60);
   \draw[->,thick] (10) -- (50);
   \draw[->,thick] (50) -- (9);
   \draw[->,thick] (0) -- (4);
   \draw[->,thick] (4) -- (10);
   \draw[->,thick] (2) edge[bend right=9] (50);
   \draw[->,thick] (4) edge[bend left=26] (60);
   \draw[->,thick] (7) -- (40);
   \draw[->,thick] (40) -- (9);
   \draw[->,thick] (20) -- (10);

   \draw[->,thick] (0) edge[bend right=20] (20);
   \draw[->,thick] (20) -- (2);
   \draw[->,thick] (20) -- (6);
   \draw[->,thick] (1) -- (7);
   \draw[->,thick] (1) -- (2);
   \draw[->,thick] (2) -- (8);
   \draw[->,thick] (4) -- (5);

   \draw[azure,thick] ($(1.north west)+(-2.9,0.5)$)  rectangle ($(2.south east)+(0.7,-0.7)$); 
   \draw[azure,thick] ($(4.south east)+(-2.9,-1.8)$)  rectangle ($(5.north east)+(0.5,1.8)$); 
   \draw[azure,thick] ($(7.south west)+(-5,-1.5)$)  rectangle ($(9.north east)+(0.3,2.7)$); 
   \draw[azure,thick] ($(0.south west)+(-1.4,-1.3)$)  rectangle ($(0.north east)+(0.2,1.5)$);

   \node[main_node,fill=azure!20] (20) at (6.5,3.1) {\footnotesize $\lmpredicate{M}(\xvar,\yvar_{l}) \to \lmpredicate{M}(\xvar,\yvar_{w})$};
   \node[main_node,fill=azure!20] (20) at (-0.8,3.1) {\footnotesize $\neg\lmpredicate{M}(\xvar,\yvar_{l})$};
   \node[main_node,fill=azure!20] (20) at (-5.2,-1) {\footnotesize $\lmpredicate{M}(\xvar,\yvar_{w})$};
   \node[main_node,fill=azure!20] (20) at (-5.1,3.1) {\footnotesize $\lmpredicate{M}(\xvar,\yvar_{w}) \land \neg \lmpredicate{M}(\xvar,\yvar_{l})$};
   
   \node[main_node] (20) at (9,-2.6) {\scriptsize $\lmpredicate{Ref}(\xvar,\yvar_{w}) \land \lmpredicate{M}(\xvar,\yvar_{l})$};
   \node[main_node] (20) at (9.2,-2.9) {\scriptsize $\to \lmpredicate{M}(\xvar,\yvar_{w}) \land \lmpredicate{Ref}(\xvar,\yvar_{l})$};

   \node[main_node] (20) at (4.2,-2.3) {\scriptsize $\lmpredicate{Ref}(\xvar,\yvar_{w}) \land (\lmpredicate{M}(\xvar,\yvar_{l}) \lor \neg\lmpredicate{Ref}(\xvar,\yvar_l))$};
   \node[main_node] (20) at (5,-2.6) {\scriptsize $\land (\neg\lmpredicate{M}(\xvar,\yvar_{w}) \lor \neg\lmpredicate{Ref}(\xvar,\yvar_l))$};
   \node[main_node] (20) at (3.5,-2.9) {\scriptsize $\to \lmpredicate{M}(\xvar,\yvar_{w}) \land \neg\lmpredicate{M}(\xvar,\yvar_l)$};
   
   \node[main_node] (20) at (9.5,0.85) {\scriptsize $\lmpredicate{M}(\xvar,\yvar_{l}) \land \lmpredicate{Ref}(\xvar,\yvar_w)$};
   \node[main_node] (20) at (9.5,0.6) {\scriptsize $\to \lmpredicate{M}(\xvar,\yvar_w)$};
   \node[main_node] (20) at (8.5,2.3) {\scriptsize $\lmpredicate{Ref}(\xvar,\yvar_{w}) \land (\lmpredicate{M}(\xvar,\yvar_{l}) \lor \neg\lmpredicate{Ref}(\xvar,\yvar_{l}))$};  
   \node[main_node] (20) at (8.3,2) {\scriptsize $\to \lmpredicate{M}(\xvar,\yvar_{w})$}; 
   \node[main_node] (20) at (-0.7,2.3) {\scriptsize $\lmpredicate{Ref}(\xvar,\yvar_{w}) \land (\lmpredicate{M}(\xvar,\yvar_{l}) \lor \neg\lmpredicate{Ref}(\xvar,\yvar_{l}))$};
   
   \node[main_node] (20) at (-1.25,2) {\scriptsize $\to \lmpredicate{M}(\xvar,\yvar_{w}) \land \neg\lmpredicate{M}(\xvar,\yvar_{l})$};
   \node[main_node] (20) at (-1.5,0.8) {\scriptsize $\lmpredicate{Ref}(\xvar,\yvar_{w}) \to \neg\lmpredicate{M}(\xvar,\yvar_l)$};
   \node[main_node] (20) at (0.2,-2.9) {\scriptsize $\lmpredicate{Ref}(\xvar,\yvar_{w}) \to \lmpredicate{M}(\xvar,\yvar_w)$};

   \node[main_node] (20) at (-5.9,2.3) {\scriptsize $\lmpredicate{Ref}(\xvar,\yvar_{w}) $};
   \node[main_node] (20) at (-5,2) {\scriptsize $\to \lmpredicate{M}(\xvar,\yvar_{w}) \land \neg \lmpredicate{M}(\xvar,\yvar_{l})$};

   \node[main_node] (20) at (-4.2,-2.6) {\scriptsize $\lmpredicate{Ref}(\xvar,\yvar_{w}) \land (\neg \lmpredicate{M}(\xvar,\yvar_{w}) \lor \neg \lmpredicate{Ref}(\xvar,\yvar_{l}))$};
   \node[main_node] (20) at (-5,-2.9) {\scriptsize $\to \lmpredicate{M}(\xvar,\yvar_{w}) \land \neg\lmpredicate{M}(\xvar,\yvar_{l})$};

   \draw[->,thick] (3) -- (5);

   \draw[->,thick] (3) -- (6);
   \draw[->,thick] (3) -- (7);
   \draw[->,thick] (7) -- (8);
   \draw[->,thick] (6) -- (8);
   \draw[->,thick] (8) -- (9);
   \draw[->,thick] (10) edge[bend left=15] (11);
   \draw[->,thick] (6) -- (11);
   \draw[->,thick] (5) -- (11);
   \draw[->,thick] (11) -- (9);

\end{tikzpicture}

\caption{\emph{What are interesting \texttt{DPO} variants to explore?} Extending the loss lattice in Figure~\ref{fig:lattice} to a version of the single model losses with reference models (i.e., their \textbf{reference forms}), showing different (largely unexplored) variants of \texttt{DPO} and the different semantics regions (gray boxes, corresponding to the core semantic formula for $\progvariable$ each set of losses). See Appendix~\ref{sec:new_losses} for details.}
\label{fig:reference_lattice}
\end{figure*}

\paragraph{There are many losses still to explore and we can exhaustively enumerate them.} We  created new losses by modifying the conditioning constraints of existing losses. Figure~\ref{fig:lattice} shows a (non-exhaustive) lattice representation of the loss landscape for single model preference approaches created by mechanically deriving new losses from the $\ell_{\texttt{CEUnl}}$ baseline (the most constrained) and ordering them by strict entailment (terminating in $\ell_{\texttt{unCPO}}$ \textcolor{black}{, our running example}).  We see different \textbf{semantic regions} emerge characterized by different formulas $\progvariable$, notably an unexplored region of unlikelihood losses ($\ell_{\texttt{l20}},\ell_{\texttt{cUNL}},\ell_{\texttt{fUNL}}$) that optimize for the negation of the loser $\neg \lmpredicate{M}(\xvar,\yvar_{l})$. Through compilation, any of these losses are now subject to experimentation. 


\textcolor{black}{
In Figure~\ref{fig:reference_lattice}, we show an extended version of Figure~\ref{fig:lattice} with the reference forms (gray boxes) of all losses (discussed below). Keeping the variables constant, this version exhaustively captures all definable \textcolor{black}{non-trivial} single model losses and preference structures $\overline{\progvariable}$ (nodes in this graph) that lie semantically in-between the baseline $\ell_{\texttt{CEUnl}}$ and $\ell_{\texttt{unCPO}}$, or formally all $\overline{\progvariable}$ s.t.\  $\overline{\progvariable}_{\texttt{CEUnl}} \sqsubseteq \overline{\progvariable} \sqsubseteq \overline{\progvariable}_{\texttt{unCPO}}$. We conjecture that this class of losses captures the most promising single model alternatives to the known losses $\ell_{\texttt{CPO}}$ and $\ell_{\texttt{ORPO}}$ and empirically investigate some of these losses below.
}

\paragraph{Adding a reference model has a clear, though sometimes peculiar, semantics.}  \textcolor{black}{\emph{What happens semantically when we add a reference model or term to our loss?}} As discussed in Section~\ref{sec:dpa} and Obs.~\ref{obs:reference_form}, specific losses can be made into a \texttt{DPO}-style losses with reference information (i.e., the reference form of that loss) by subtracting the log ratio $s_{\text{ref}}(y_{w},y_{l})$ from that loss's core loss equation. The following proposition shows the semantic result of adding reference information and follows directly from Obs.~\ref{obs:reference_form} and the application of Algorithm~\ref{alg:sl_translation}. 
\begin{prop}[semantics of reference forms] 
    Given a loss characterized by the core loss equation $\rho_{\theta}$ equal to  $\log \rho_{\theta}^{t} / \rho_{\theta}^{b}$, the core semantic formula $\progvariable$ for that loss's reference form is logically equivalent to the formula $(\textsc{Sem}(\rho_{\theta}^{b}) \land \lmpredicate{Ref}(\xvar,\yvar_{w})) \to (\textsc{Sem}(\rho_{\theta}^{t}) \land \lmpredicate{Ref}(\xvar,\yvar_{l}))$. 
\label{prop:semantics_ref_form}
\end{prop}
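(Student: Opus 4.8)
The plan is to obtain the claimed formula by chaining together three ingredients already available in the excerpt: Observation~\ref{obs:reference_form}, which computes the numerator and denominator of the reference form's core loss equation; the explicit construction of the core semantic formula inside Algorithm~\ref{alg:sl_translation}; and the compositional (homomorphic) behavior of the translation function $\textsc{Sem}$ from Section~\ref{sec:dpa_decompilation}. First I would apply Obs~\ref{obs:reference_form} to the given loss $\rho_{\theta} = \log \rho_{\theta}^{t}/\rho_{\theta}^{b}$, which tells us that its reference form is itself a single core loss equation $\rho_{\theta}^{\text{ref}} = \log \frac{\rho_{\theta}^{t}\,P_{\text{ref}}(\yvar_{l} \mid \xvar)}{\rho_{\theta}^{b}\,P_{\text{ref}}(\yvar_{w} \mid \xvar)}$, so its numerator is $\rho_{\theta}^{t,\text{ref}} = \rho_{\theta}^{t}\cdot P_{\text{ref}}(\yvar_{l}\mid\xvar)$ and its denominator is $\rho_{\theta}^{b,\text{ref}} = \rho_{\theta}^{b}\cdot P_{\text{ref}}(\yvar_{w}\mid\xvar)$.

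Next I would run Algorithm~\ref{alg:sl_translation} on this reference form. By inspection of the algorithm, the core semantic formula it returns is $\progvariable = \textsc{simplify}(\opimplication(\progvariable_{b},\progvariable_{t}))$, i.e.\ $\progvariable \equiv \progvariable_{b} \to \progvariable_{t}$, where $\progvariable_{t} = \textsc{Sem}(\rho_{\theta}^{t,\text{ref}})$ and $\progvariable_{b} = \textsc{Sem}(\rho_{\theta}^{b,\text{ref}})$. It therefore remains only to evaluate $\textsc{Sem}$ on the two reference-augmented terms. Using the product rule of Table~\ref{tab:translation_rules}, which sends a product of polynomials to the conjunction of their translations and sends the single prediction $P_{\text{ref}}(\cdot)$ to the atom $\lmpredicate{Ref}(\cdot)$, I obtain $\textsc{Sem}(\rho_{\theta}^{t}\cdot P_{\text{ref}}(\yvar_{l}\mid\xvar)) = \textsc{Sem}(\rho_{\theta}^{t}) \land \lmpredicate{Ref}(\xvar,\yvar_{l})$ and, symmetrically, $\textsc{Sem}(\rho_{\theta}^{b}\cdot P_{\text{ref}}(\yvar_{w}\mid\xvar)) = \textsc{Sem}(\rho_{\theta}^{b}) \land \lmpredicate{Ref}(\xvar,\yvar_{w})$. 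Substituting these into $\progvariable_{b}\to\progvariable_{t}$ yields exactly $(\textsc{Sem}(\rho_{\theta}^{b}) \land \lmpredicate{Ref}(\xvar,\yvar_{w})) \to (\textsc{Sem}(\rho_{\theta}^{t}) \land \lmpredicate{Ref}(\xvar,\yvar_{l}))$, which is the claimed formula up to the logical equivalence absorbed by $\textsc{simplify}$.

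The step that most needs justification, and which I expect to be the main (if modest) obstacle, is the legitimacy of applying the product rule in the last step: this presupposes that $\rho_{\theta}^{t}\cdot P_{\text{ref}}(\yvar_{l}\mid\xvar)$ and $\rho_{\theta}^{b}\cdot P_{\text{ref}}(\yvar_{w}\mid\xvar)$ are again disjoint multilinear polynomials lying in the domain of $\textsc{Sem}$. I would discharge this by appealing to Assumption~\ref{build_assumption_second}: the reference predictions are fresh atomic variables that do not occur in $\rho_{\theta}^{t}$ or $\rho_{\theta}^{b}$, so multiplying every monomial of a disjoint multilinear polynomial by one and the same new literal preserves multilinearity and cannot destroy pairwise disjointness (any variable $x_{k}$ witnessing disjointness of two original terms still witnesses it after the common factor is appended). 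Hence the product remains in the class covered by the correctness of $\textsc{Sem}$, the translation rule applies as stated, and the identification is exact. Combined with the fact (noted in the proposition's hypotheses) that this is the output of Algorithm~\ref{alg:sl_translation} applied to the reference form, the result follows directly from the correctness of that algorithm (Lemma~\ref{lem:correctness}).
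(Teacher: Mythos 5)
Your proposal is correct and follows essentially the same route as the paper, which proves this proposition simply by noting that it ``follows directly from Obs.~\ref{obs:reference_form} and the application of Algorithm~\ref{alg:sl_translation}'' --- i.e., exactly your chain of computing the reference form's numerator $\rho_{\theta}^{t}\,P_{\text{ref}}(y_{l}\mid x)$ and denominator $\rho_{\theta}^{b}\,P_{\text{ref}}(y_{w}\mid x)$, translating them via the product rule of Table~\ref{tab:translation_rules}, and reading off the implication form $\progvariable_{b}\to\progvariable_{t}$. Your added check that multiplying each term of a disjoint multilinear polynomial by a fresh reference literal preserves membership in the domain of $\textsc{Sem}$ is a detail the paper leaves implicit, and it is handled correctly.
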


The semantics of \texttt{DPO}, which is the reference form of \texttt{CPO}, is shown in Table~\ref{tab:formalization} and is logically equivalent to a conjunction of two implications: $\lmpredicate{Ref}(\xvar,\yvar_{w}) \land \lmpredicate{M}(\xvar,\yvar_{l}) \to \lmpredicate{M}(\xvar,\yvar_{w})$ and $\lmpredicate{Ref}(\xvar,\yvar_{w}) \land \neg \lmpredicate{Ref}(\xvar,\yvar_{l}) \to \neg \lmpredicate{M}(\xvar,\yvar_{l})$. The first says that \emph{If the reference deems the winner to be valid and the tunable model deems the loser to be valid, then that model should also deem the winner to be valid}, while the second says that \emph{the tunable model should deem the loser to be not valid whenever the reference deems the winner to be valid and the loser to be not valid}. While this semantics makes sense, and complements nicely the semantics of \texttt{CPO} by adding information about the referent model, \texttt{DPO} includes conditioning constraints that are hard to justify from first principles, and that make it semantically disconnected from the \texttt{CE} and \texttt{CEUnl} baselines.

We also note that variants like \texttt{SimPO} and \texttt{DPOP} when formalized maintain exactly the same structure of \texttt{DPO} in Table~\ref{tab:formalization}, with \texttt{DPOP} adding repeated variables that amplify the score of the winner (see Appendix~\ref{sec:dpop}).  Giving the semantic similarity between these variants and \texttt{DPO}, any small semantic change found in one would likely be useful in these others, which motivates general exploration into varying the conditioning constraints. Several such variants of \texttt{DPO} and \texttt{SimPO} are shown in Figure~\ref{fig:reference_lattice} (i.e., the gray regions).

\begin{example}[Novel variant of cross-entropy]
In addition to finding novel variants of \texttt{DPO}, the reference forms can also reveal intriguing variants of standard losses like cross-entropy $\ell_{\texttt{CE}}$. Semantically, $\ell_{\text{CE}}$ can be expressed in an implication form as $\neg\lmpredicate{M}(\xvar,\yvar_{w}) \to \lmpredicate{M}(\xvar,\yvar_{w})$. By adding a reference model according to Prop.~\ref{prop:semantics_ref_form}, this results in the formula $(\neg\lmpredicate{M}(\xvar,\yvar_{w}) \land \lmpredicate{Ref}(\xvar,\yvar_{w})) \to \lmpredicate{M}(\xvar,\yvar_{w}) \land \lmpredicate{Ref}(\xvar,\yvar_{l})$, which simplifies to the logically equivalent formula $\lmpredicate{Ref}(\xvar,\yvar_{w}) \to \lmpredicate{M}(\xvar,\yvar_{w})$. Semantically, this leads to a variant of cross-entropy where updates are made based on signal from the reference model, which seems like a natural variation. The full loss equation results in $\rho_{\theta} = \log \frac{P_{\theta}(y_{w} \mid x)P_{\text{ref}}(y_{l} \mid x)}{ (1  - P_{\theta}(y_{w} \mid x))P_{\text{ref}}(y_{w} \mid x)}$; by modifying the conditioning constraints one can  arrive at different variants of this loss and directly implement each variant for experimentation.
\end{example}

\paragraph{Can we find empirically improved losses using our method?} The ultimate goal of our analysis is to facilitate the discovery of empirically improved DPA losses. As a case study, we implemented single model losses around the known $\ell_{\texttt{CPO}}$ in Figure~\ref{fig:lattice}, treating it as a baseline to improve upon. Using a model-as-judge style evaluation from \citet{hong2024reference} and a \texttt{Qwen-0.5B} LLM (details in Appendix~\ref{sec:experiment_details}), we found one particular loss, $\ell_{\texttt{cCPO}}$ to be competitive with $\ell_{\texttt{CPO}}$, achieving a win-rate of $52.0$ as shown in Table~\ref{tab:win_rate}. We also observe that different losses have markedly different performance across different datasets, suggesting that a one-size-fits-all approach isn't ideal---semantically different tasks are best learned using different losses.

\begin{table}[ht]
\centering 
\resizebox{\linewidth}{!}{
{\footnotesize
\begin{tabular}{| c c | c c c c c |}
\hline 
\textbf{loss} & WR\% ($\ell_{\text{cpo}}$) & evol & false-qa & flan & sharegpt & ultrachat \\ \hline 
$\ell_{\texttt{cfUNL}}$ & 46.1 ($\pm$\textcolor{gray}{0.4}) & 46.1 ($\pm$\textcolor{gray}{2.2}) & 51.6 ($\pm$\textcolor{gray}{2.9}) & 46.4 ($\pm$\textcolor{gray}{1.7}) & 46.2 ($\pm$\textcolor{gray}{1.2}) & 44.1 ($\pm$\textcolor{gray}{1.0}) \\
$\ell_{\texttt{qfUNL}}$ & 48.9 ($\pm$\textcolor{gray}{0.8}) & 45.3 ($\pm$\textcolor{gray}{1.9}) & \colorbox{gray!20}{34.7} ($\pm$\textcolor{gray}{6.3}) & \colorbox{gray!20}{57.9} ($\pm$\textcolor{gray}{1.2}) & 46.8 ($\pm$\textcolor{gray}{2.4}) & \colorbox{gray!20}{41.3} ($\pm$\textcolor{gray}{1.4}) \\
$\ell_{\texttt{cCPO}}$ & \colorbox{gray!20}{52.0} ($\pm$\textcolor{gray}{0.6}) & 50.7 ($\pm$\textcolor{gray}{0.5}) & 50.2 ($\pm$\textcolor{gray}{0.7}) & \colorbox{gray!20}{57.2} ($\pm$\textcolor{gray}{1.1}) & 47.2 ($\pm$\textcolor{gray}{1.8}) & 53.1 ($\pm$\textcolor{gray}{1.9}) \\
$\ell_{\texttt{unCPO}}$ & 46.0 ($\pm$\textcolor{gray}{0.2}) & 45.8 ($\pm$\textcolor{gray}{0.3}) & 52.1 ($\pm$\textcolor{gray}{3.0}) & 45.7 ($\pm$\textcolor{gray}{0.6}) & 46.2 ($\pm$\textcolor{gray}{2.1}) & 44.8 ($\pm$\textcolor{gray}{2.1}) \\
\hline 
\end{tabular}
}}
\caption{Results of a feasibility study involving \texttt{Qwen-0.5B} tuned on the new losses (rows) compared against the known loss $\ell_{\texttt{CPO}}$ (second column) on \texttt{ultrafeedback} test in aggregate (2nd column) and on subsets (right columns). See details in Section~\ref{sec:experiment_details}.}
\label{tab:win_rate}
\end{table} 

While small scale, this study demonstrates the feasibility of using our framework to derive empirically successful losses. Appendix~\ref{sec:experiment_details} reports additional experiments and findings.

\section{Conclusion}

Despite the routine use of a variety of DPA algorithms to align LLMs with human preferences, knowing what exactly the losses underlying these algorithms capture and how they relate to each other remains largely unknown. We presented a new technique for characterizing the semantics of such losses in terms of logical formulas over boolean propositions that capture model predictions. Key to our approach is a \emph{decompilation} procedure, allowing one to compositionally derive provably correct symbolic formulas corresponding to any loss function expressed as a ratio of disjoint multilinear polynomials. Our approach provides a fresh perspective into preference losses, identifying a rich loss landscape and opening up new ways for practitioners to explore new losses by systematically varying the symbolic formulas corresponding to existing successful loss functions.

\section*{Acknowledgements}

Special thanks to the following people for their feedback at various stages of the work (in alphabetical order): Kareem Ahmed, Gregor Betz, Junyan Cheng, Hamish Ivison, Maryna Kavalenka, Emile van Krieken, Nathan Lambert, Robin Manhaeve, Valentina Pyatkin, Antonio Vergari and Gijs Wijnholds, as well as the four anonymous reviewers who read an earlier draft (in particular, we thank one reviewer who provided useful details about standard WMC encodings and feedback related to Assumption~\ref{build_assumption_second}).  Thanks also to the audiences at Ai2, the University of Utah and the University of Stuttgart for listening to a talk version of this paper and for providing helpful feedback and discussion. As usual, any remaining mistakes remain our own.

\bibliography{cited}

\begin{thebibliography}{84}
\providecommand{\natexlab}[1]{#1}
\providecommand{\url}[1]{\texttt{#1}}
\expandafter\ifx\csname urlstyle\endcsname\relax
  \providecommand{\doi}[1]{doi: #1}\else
  \providecommand{\doi}{doi: \begingroup \urlstyle{rm}\Url}\fi

\bibitem[Ahmed et~al.(2022)Ahmed, Teso, Chang, Van~den Broeck, and
  Vergari]{ahmed2022semantic}
Ahmed, K., Teso, S., Chang, K.-W., Van~den Broeck, G., and Vergari, A.
\newblock Semantic probabilistic layers for neuro-symbolic learning.
\newblock \emph{Advances in Neural Information Processing Systems},
  35:\penalty0 29944--29959, 2022.

\bibitem[Ahmed et~al.(2023{\natexlab{a}})Ahmed, Chang, and Van~den
  Broeck]{ahmed2023pseudo}
Ahmed, K., Chang, K.-W., and Van~den Broeck, G.
\newblock A pseudo-semantic loss for deep generative models with logical
  constraints.
\newblock \emph{Advances in Neural Information Processing Systems 36
  (NeurIPS)}, 2023{\natexlab{a}}.

\bibitem[Ahmed et~al.(2023{\natexlab{b}})Ahmed, Teso, Morettin, Di~Liello,
  Ardino, Gobbi, Liang, Wang, Chang, Passerini, et~al.]{ahmed2023semantic}
Ahmed, K., Teso, S., Morettin, P., Di~Liello, L., Ardino, P., Gobbi, J., Liang,
  Y., Wang, E., Chang, K.-W., Passerini, A., et~al.
\newblock Semantic loss functions for neuro-symbolic structured prediction.
\newblock In \emph{Compendium of Neurosymbolic Artificial Intelligence}, pp.\
  485--505. IOS Press, 2023{\natexlab{b}}.

\bibitem[Amini et~al.(2024)Amini, Vieira, and Cotterell]{amini2024direct}
Amini, A., Vieira, T., and Cotterell, R.
\newblock Direct preference optimization with an offset.
\newblock \emph{In Findings of ACL}, 2024.

\bibitem[Asai \& Hajishirzi(2020)Asai and Hajishirzi]{asai2020logic}
Asai, A. and Hajishirzi, H.
\newblock Logic-guided data augmentation and regularization for consistent
  question answering.
\newblock \emph{Proceedings of ACL}, 2020.

\bibitem[Azar et~al.(2023)Azar, Rowland, Piot, Guo, Calandriello, Valko, and
  Munos]{azar2023general}
Azar, M.~G., Rowland, M., Piot, B., Guo, D., Calandriello, D., Valko, M., and
  Munos, R.
\newblock A general theoretical paradigm to understand learning from human
  preferences.
\newblock \emph{arXiv preprint arXiv:2310.12036}, 2023.

\bibitem[Bai et~al.(2023)Bai, Bai, Chu, Cui, Dang, Deng, Fan, Ge, Han, Huang,
  et~al.]{bai2023qwen}
Bai, J., Bai, S., Chu, Y., Cui, Z., Dang, K., Deng, X., Fan, Y., Ge, W., Han,
  Y., Huang, F., et~al.
\newblock Qwen technical report.
\newblock \emph{arXiv preprint arXiv:2309.16609}, 2023.

\bibitem[Beurer-Kellner et~al.(2023)Beurer-Kellner, Fischer, and
  Vechev]{beurer2023prompting}
Beurer-Kellner, L., Fischer, M., and Vechev, M.
\newblock Prompting is programming: A query language for large language models.
\newblock \emph{Proceedings of the ACM on Programming Languages}, 7\penalty0
  (PLDI):\penalty0 1946--1969, 2023.

\bibitem[Cai et~al.(2024)Cai, Cao, Chen, Chen, Chen, Chen, Chen, Chen, Chen,
  Chu, Dong, Duan, Fan, Fei, Gao, Ge, Gu, Gu, Gui, Guo, Guo, He, Hu, Huang,
  Jiang, Jiao, Jin, Lei, Li, Li, Li, Li, Li, Li, Liu, Liu, Hong, Liu, Liu, Liu,
  Lv, Lv, Lv, Ma, Ma, Ma, Ning, Ouyang, Qiu, Qu, Shang, Shao, Song, Song, Sui,
  Sun, Sun, Tang, Wang, Wang, Wang, Wang, Wang, Wang, Wang, Wei, Weng, Wu,
  Xiong, Xu, Xu, Yan, Yan, Yang, Ye, Ying, Yu, Yu, Zang, Zhang, Zhang, Zhang,
  Zhang, Zhang, Zhang, Zhang, Zhang, Zhang, Zhang, Zhang, Zhao, Zhao, Zhao,
  Zhou, Zhou, Zhuo, Zou, Qiu, Qiao, and Lin]{cai2024internlm2}
Cai, Z., Cao, M., Chen, H., Chen, K., Chen, K., Chen, X., Chen, X., Chen, Z.,
  Chen, Z., Chu, P., Dong, X., Duan, H., Fan, Q., Fei, Z., Gao, Y., Ge, J., Gu,
  C., Gu, Y., Gui, T., Guo, A., Guo, Q., He, C., Hu, Y., Huang, T., Jiang, T.,
  Jiao, P., Jin, Z., Lei, Z., Li, J., Li, J., Li, L., Li, S., Li, W., Li, Y.,
  Liu, H., Liu, J., Hong, J., Liu, K., Liu, K., Liu, X., Lv, C., Lv, H., Lv,
  K., Ma, L., Ma, R., Ma, Z., Ning, W., Ouyang, L., Qiu, J., Qu, Y., Shang, F.,
  Shao, Y., Song, D., Song, Z., Sui, Z., Sun, P., Sun, Y., Tang, H., Wang, B.,
  Wang, G., Wang, J., Wang, J., Wang, R., Wang, Y., Wang, Z., Wei, X., Weng,
  Q., Wu, F., Xiong, Y., Xu, C., Xu, R., Yan, H., Yan, Y., Yang, X., Ye, H.,
  Ying, H., Yu, J., Yu, J., Zang, Y., Zhang, C., Zhang, L., Zhang, P., Zhang,
  P., Zhang, R., Zhang, S., Zhang, S., Zhang, W., Zhang, W., Zhang, X., Zhang,
  X., Zhao, H., Zhao, Q., Zhao, X., Zhou, F., Zhou, Z., Zhuo, J., Zou, Y., Qiu,
  X., Qiao, Y., and Lin, D.
\newblock Internlm2 technical report, 2024.

\bibitem[Calanzone et~al.(2024)Calanzone, Teso, and
  Vergari]{calanzone2024logically}
Calanzone, D., Teso, S., and Vergari, A.
\newblock Logically consistent language models via neuro-symbolic integration.
\newblock \emph{arXiv preprint arXiv:2409.13724}, 2024.

\bibitem[Chavira \& Darwiche(2008)Chavira and
  Darwiche]{chavira2008probabilistic}
Chavira, M. and Darwiche, A.
\newblock On probabilistic inference by weighted model counting.
\newblock \emph{Artificial Intelligence}, 172\penalty0 (6-7):\penalty0
  772--799, 2008.

\bibitem[Chen et~al.(2024{\natexlab{a}})Chen, Malladi, Zhang, Chen, Zhang,
  Ranganath, and Cho]{chen2024preference}
Chen, A., Malladi, S., Zhang, L.~H., Chen, X., Zhang, Q., Ranganath, R., and
  Cho, K.
\newblock Preference learning algorithms do not learn preference rankings.
\newblock \emph{Proceedings of NeurIPS}, 2024{\natexlab{a}}.

\bibitem[Chen et~al.(2024{\natexlab{b}})Chen, Deng, Yuan, Ji, and
  Gu]{chen2024self}
Chen, Z., Deng, Y., Yuan, H., Ji, K., and Gu, Q.
\newblock Self-play fine-tuning converts weak language models to strong
  language models.
\newblock \emph{Proceedings of ICML}, 2024{\natexlab{b}}.

\bibitem[Christiano et~al.(2017)Christiano, Leike, Brown, Martic, Legg, and
  Amodei]{christiano2017deep}
Christiano, P.~F., Leike, J., Brown, T., Martic, M., Legg, S., and Amodei, D.
\newblock Deep reinforcement learning from human preferences.
\newblock \emph{Advances in neural information processing systems}, 30, 2017.

\bibitem[Cui et~al.(2023)Cui, Yuan, Ding, Yao, Zhu, Ni, Xie, Liu, and
  Sun]{cui2023ultrafeedback}
Cui, G., Yuan, L., Ding, N., Yao, G., Zhu, W., Ni, Y., Xie, G., Liu, Z., and
  Sun, M.
\newblock Ultrafeedback: Boosting language models with high-quality feedback,
  2023.

\bibitem[Dai et~al.(2024)Dai, Pan, Sun, Ji, Xu, Liu, Wang, and Yang]{safe-rlhf}
Dai, J., Pan, X., Sun, R., Ji, J., Xu, X., Liu, M., Wang, Y., and Yang, Y.
\newblock Safe rlhf: Safe reinforcement learning from human feedback.
\newblock In \emph{Proceedings of ICLR}, 2024.

\bibitem[De~Raedt \& Kimmig(2015)De~Raedt and Kimmig]{de2015probabilistic}
De~Raedt, L. and Kimmig, A.
\newblock Probabilistic (logic) programming concepts.
\newblock \emph{Machine Learning}, 100:\penalty0 5--47, 2015.

\bibitem[De~Raedt et~al.(2007)De~Raedt, Kimmig, and Toivonen]{de2007problog}
De~Raedt, L., Kimmig, A., and Toivonen, H.
\newblock Problog: A probabilistic prolog and its application in link
  discovery.
\newblock In \emph{Proceedings of IJCAI}, pp.\  2462--2467, 2007.

\bibitem[Dohan et~al.(2022)Dohan, Xu, Lewkowycz, Austin, Bieber, Lopes, Wu,
  Michalewski, Saurous, Sohl-Dickstein, et~al.]{dohan2022language}
Dohan, D., Xu, W., Lewkowycz, A., Austin, J., Bieber, D., Lopes, R.~G., Wu, Y.,
  Michalewski, H., Saurous, R.~A., Sohl-Dickstein, J., et~al.
\newblock Language model cascades.
\newblock \emph{arXiv preprint arXiv:2207.10342}, 2022.

\bibitem[Donadello et~al.(2017)Donadello, Serafini, and
  Garcez]{donadello2017logic}
Donadello, I., Serafini, L., and Garcez, A.~D.
\newblock Logic tensor networks for semantic image interpretation.
\newblock \emph{arXiv preprint arXiv:1705.08968}, 2017.

\bibitem[Eisner et~al.(2004)Eisner, Goldlust, and Smith]{eisner2004dyna}
Eisner, J., Goldlust, E., and Smith, N.~A.
\newblock Dyna: A declarative language for implementing dynamic programs.
\newblock In \emph{Proc. of ACL}, 2004.

\bibitem[Ethayarajh et~al.(2024)Ethayarajh, Xu, Muennighoff, Jurafsky, and
  Kiela]{ethayarajh2024kto}
Ethayarajh, K., Xu, W., Muennighoff, N., Jurafsky, D., and Kiela, D.
\newblock Kto: Model alignment as prospect theoretic optimization.
\newblock \emph{Proceedings of ICML}, 2024.

\bibitem[Fierens et~al.(2015)Fierens, Van~den Broeck, Renkens, Shterionov,
  Gutmann, Thon, Janssens, and De~Raedt]{fierens2015inference}
Fierens, D., Van~den Broeck, G., Renkens, J., Shterionov, D., Gutmann, B.,
  Thon, I., Janssens, G., and De~Raedt, L.
\newblock Inference and learning in probabilistic logic programs using weighted
  boolean formulas.
\newblock \emph{Theory and Practice of Logic Programming}, 15\penalty0
  (3):\penalty0 358--401, 2015.

\bibitem[Fischer et~al.(2019)Fischer, Balunovic, Drachsler-Cohen, Gehr, Zhang,
  and Vechev]{fischer2019dl2}
Fischer, M., Balunovic, M., Drachsler-Cohen, D., Gehr, T., Zhang, C., and
  Vechev, M.
\newblock Dl2: training and querying neural networks with logic.
\newblock In \emph{Proceedings of ICML}, 2019.

\bibitem[Friedman et~al.(2024)Friedman, Wettig, and Chen]{friedman2024learning}
Friedman, D., Wettig, A., and Chen, D.
\newblock Learning transformer programs.
\newblock \emph{Advances in Neural Information Processing Systems}, 36, 2024.

\bibitem[Giannini et~al.(2020)Giannini, Marra, Diligenti, Maggini, and
  Gori]{giannini2020relation}
Giannini, F., Marra, G., Diligenti, M., Maggini, M., and Gori, M.
\newblock On the relation between loss functions and t-norms.
\newblock In \emph{Inductive Logic Programming: 29th International Conference},
  2020.

\bibitem[Giannini et~al.(2023)Giannini, Diligenti, Maggini, Gori, and
  Marra]{giannini2023t}
Giannini, F., Diligenti, M., Maggini, M., Gori, M., and Marra, G.
\newblock T-norms driven loss functions for machine learning.
\newblock \emph{Applied Intelligence}, 53\penalty0 (15):\penalty0 18775--18789,
  2023.

\bibitem[Grespan et~al.(2021)Grespan, Gupta, and
  Srikumar]{grespan2021evaluating}
Grespan, M.~M., Gupta, A., and Srikumar, V.
\newblock Evaluating relaxations of logic for neural networks: A comprehensive
  study.
\newblock \emph{Proceedings of IJCAI}, 2021.

\bibitem[Guo et~al.(2024)Guo, Zhang, Liu, Liu, Khalman, Llinares, Rame,
  Mesnard, Zhao, Piot, et~al.]{guo2024direct}
Guo, S., Zhang, B., Liu, T., Liu, T., Khalman, M., Llinares, F., Rame, A.,
  Mesnard, T., Zhao, Y., Piot, B., et~al.
\newblock Direct language model alignment from online ai feedback.
\newblock \emph{arXiv preprint arXiv:2402.04792}, 2024.

\bibitem[Halpern et~al.(2001)Halpern, Harper, Immerman, Kolaitis, Vardi, and
  Vianu]{halpern2001unusual}
Halpern, J.~Y., Harper, R., Immerman, N., Kolaitis, P.~G., Vardi, M.~Y., and
  Vianu, V.
\newblock On the unusual effectiveness of logic in computer science.
\newblock \emph{Bulletin of Symbolic Logic}, 7\penalty0 (2):\penalty0 213--236,
  2001.

\bibitem[Hewitt et~al.(2020)Hewitt, Hahn, Ganguli, Liang, and
  Manning]{hewitt2020rnns}
Hewitt, J., Hahn, M., Ganguli, S., Liang, P., and Manning, C.~D.
\newblock {{RNNs}} can generate bounded hierarchical languages with optimal
  memory.
\newblock In \emph{Proceedings of EMNLP}, 2020.

\bibitem[Hinnerichs et~al.(2024)Hinnerichs, Manhaeve, Marra, and
  Dumancic]{hinnerichs2024towards}
Hinnerichs, T., Manhaeve, R., Marra, G., and Dumancic, S.
\newblock Declarative design of neural predicates in neuro-symbolic systems.
\newblock \emph{arXiv preprint arXiv:2405.09521}, 2024.

\bibitem[Hong et~al.(2024)Hong, Lee, and Thorne]{hong2024reference}
Hong, J., Lee, N., and Thorne, J.
\newblock Reference-free monolithic preference optimization with odds ratio.
\newblock \emph{Proceedings of EMNLP}, 2024.

\bibitem[Hu et~al.(2024)Hu, He, and Wipf]{hu2024new}
Hu, X., He, T., and Wipf, D.
\newblock New desiderata for direct preference optimization.
\newblock \emph{arXiv preprint arXiv:2407.09072}, 2024.

\bibitem[Ivison et~al.(2023)Ivison, Wang, Pyatkin, Lambert, Peters, Dasigi,
  Jang, Wadden, Smith, Beltagy, et~al.]{ivison2023camels}
Ivison, H., Wang, Y., Pyatkin, V., Lambert, N., Peters, M., Dasigi, P., Jang,
  J., Wadden, D., Smith, N.~A., Beltagy, I., et~al.
\newblock Camels in a changing climate: Enhancing lm adaptation with tulu 2.
\newblock \emph{arXiv preprint arXiv:2311.10702}, 2023.

\bibitem[Jeffrey(1965)]{jeffrey1990logic}
Jeffrey, R.~C.
\newblock \emph{The logic of decision}.
\newblock University of Chicago press, 1965.

\bibitem[Ji et~al.(2024)Ji, Liu, Dai, Pan, Zhang, Bian, Chen, Sun, Wang, and
  Yang]{ji2024beavertails}
Ji, J., Liu, M., Dai, J., Pan, X., Zhang, C., Bian, C., Chen, B., Sun, R.,
  Wang, Y., and Yang, Y.
\newblock Beavertails: Towards improved safety alignment of llm via a
  human-preference dataset.
\newblock \emph{Advances in Neural Information Processing Systems}, 36, 2024.

\bibitem[Khattab et~al.(2023)Khattab, Singhvi, Maheshwari, Zhang, Santhanam,
  Vardhamanan, Haq, Sharma, Joshi, Moazam, et~al.]{khattab2023dspy}
Khattab, O., Singhvi, A., Maheshwari, P., Zhang, Z., Santhanam, K.,
  Vardhamanan, S., Haq, S., Sharma, A., Joshi, T.~T., Moazam, H., et~al.
\newblock Dspy: Compiling declarative language model calls into self-improving
  pipelines.
\newblock \emph{arXiv preprint arXiv:2310.03714}, 2023.

\bibitem[Klement et~al.(2013)Klement, Mesiar, and Pap]{klement2013triangular}
Klement, E.~P., Mesiar, R., and Pap, E.
\newblock \emph{Triangular norms}, volume~8.
\newblock Springer Science \& Business Media, 2013.

\bibitem[Li et~al.(2019)Li, Gupta, Mehta, and Srikumar]{li2019logicdriven}
Li, T., Gupta, V., Mehta, M., and Srikumar, V.
\newblock A {{Logic-Driven Framework}} for {{Consistency}} of {{Neural
  Models}}.
\newblock In \emph{Proceedings of EMNLP}, 2019.

\bibitem[Li et~al.(2023)Li, Huang, and Naik]{li2023scallop}
Li, Z., Huang, J., and Naik, M.
\newblock Scallop: A language for neurosymbolic programming.
\newblock \emph{Proceedings of the ACM on Programming Languages}, 7\penalty0
  (PLDI):\penalty0 1463--1487, 2023.

\bibitem[Liu et~al.(2023)Liu, Zeng, He, Jiang, and He]{liu2023makes}
Liu, W., Zeng, W., He, K., Jiang, Y., and He, J.
\newblock What makes good data for alignment? a comprehensive study of
  automatic data selection in instruction tuning.
\newblock \emph{arXiv preprint arXiv:2312.15685}, 2023.

\bibitem[Manhaeve et~al.(2018)Manhaeve, Dumancic, Kimmig, Demeester, and
  De~Raedt]{manhaeve2018deepproblog}
Manhaeve, R., Dumancic, S., Kimmig, A., Demeester, T., and De~Raedt, L.
\newblock Deepproblog: Neural probabilistic logic programming.
\newblock \emph{Advances in neural information processing systems}, 31, 2018.

\bibitem[Marconato et~al.(2024)Marconato, Teso, Vergari, and
  Passerini]{marconato2024not}
Marconato, E., Teso, S., Vergari, A., and Passerini, A.
\newblock Not all neuro-symbolic concepts are created equal: Analysis and
  mitigation of reasoning shortcuts.
\newblock \emph{Advances in Neural Information Processing Systems}, 36, 2024.

\bibitem[Marra et~al.(2019)Marra, Giannini, Diligenti, and
  Gori]{marra2019integrating}
Marra, G., Giannini, F., Diligenti, M., and Gori, M.
\newblock Integrating learning and reasoning with deep logic models.
\newblock In \emph{Joint European Conference on Machine Learning and Knowledge
  Discovery in Databases}, pp.\  517--532. Springer, 2019.

\bibitem[Marra et~al.(2024)Marra, Duman{\v{c}}i{\'c}, Manhaeve, and
  De~Raedt]{marra2024statistical}
Marra, G., Duman{\v{c}}i{\'c}, S., Manhaeve, R., and De~Raedt, L.
\newblock From statistical relational to neurosymbolic artificial intelligence:
  A survey.
\newblock \emph{Artificial Intelligence}, pp.\  104062, 2024.

\bibitem[McCarthy et~al.(1960)]{mccarthy1960programs}
McCarthy, J. et~al.
\newblock \emph{Programs with common sense}.
\newblock RLE and MIT computation center Cambridge, MA, USA, 1960.

\bibitem[McCulloch \& Pitts(1943)McCulloch and Pitts]{mcculloch1943logical}
McCulloch, W.~S. and Pitts, W.
\newblock A logical calculus of the ideas immanent in nervous activity.
\newblock \emph{The bulletin of mathematical biophysics}, 5:\penalty0 115--133,
  1943.

\bibitem[Meng et~al.(2024)Meng, Xia, and Chen]{meng2024simpo}
Meng, Y., Xia, M., and Chen, D.
\newblock Simpo: Simple preference optimization with a reference-free reward.
\newblock \emph{Proceedings of Neurips}, 2024.

\bibitem[Meurer et~al.(2017)Meurer, Smith, Paprocki, {\v{C}}ert{\'\i}k,
  Kirpichev, Rocklin, Kumar, Ivanov, Moore, Singh, et~al.]{meurer2017sympy}
Meurer, A., Smith, C.~P., Paprocki, M., {\v{C}}ert{\'\i}k, O., Kirpichev,
  S.~B., Rocklin, M., Kumar, A., Ivanov, S., Moore, J.~K., Singh, S., et~al.
\newblock Sympy: symbolic computing in python.
\newblock \emph{PeerJ Computer Science}, 3:\penalty0 e103, 2017.

\bibitem[Minervini \& Riedel(2018)Minervini and
  Riedel]{minervini2018adversarially}
Minervini, P. and Riedel, S.
\newblock Adversarially regularising neural nli models to integrate logical
  background knowledge.
\newblock \emph{arXiv preprint arXiv:1808.08609}, 2018.

\bibitem[Miranda et~al.(2024)Miranda, Wang, Elazar, Kumar, Pyatkin, Brahman,
  Smith, Hajishirzi, and Dasigi]{miranda2024hybrid}
Miranda, L. J.~V., Wang, Y., Elazar, Y., Kumar, S., Pyatkin, V., Brahman, F.,
  Smith, N.~A., Hajishirzi, H., and Dasigi, P.
\newblock Hybrid preferences: Learning to route instances for human vs. ai
  feedback.
\newblock \emph{arXiv preprint arXiv:2410.19133}, 2024.

\bibitem[Nilsson(1991)]{nilsson1991logic}
Nilsson, N.~J.
\newblock Logic and artificial intelligence.
\newblock \emph{Artificial intelligence}, 47\penalty0 (1-3):\penalty0 31--56,
  1991.

\bibitem[Ouyang et~al.(2022)Ouyang, Wu, Jiang, Almeida, Wainwright, Mishkin,
  Zhang, Agarwal, Slama, Ray, et~al.]{ouyang2022training}
Ouyang, L., Wu, J., Jiang, X., Almeida, D., Wainwright, C., Mishkin, P., Zhang,
  C., Agarwal, S., Slama, K., Ray, A., et~al.
\newblock Training language models to follow instructions with human feedback.
\newblock \emph{Advances in neural information processing systems},
  35:\penalty0 27730--27744, 2022.

\bibitem[Pal et~al.(2024)Pal, Karkhanis, Dooley, Roberts, Naidu, and
  White]{pal2024smaug}
Pal, A., Karkhanis, D., Dooley, S., Roberts, M., Naidu, S., and White, C.
\newblock Smaug: Fixing failure modes of preference optimisation with
  dpo-positive.
\newblock \emph{arXiv preprint arXiv:2402.13228}, 2024.

\bibitem[Park et~al.(2024)Park, Rafailov, Ermon, and
  Finn]{park2024disentangling}
Park, R., Rafailov, R., Ermon, S., and Finn, C.
\newblock Disentangling length from quality in direct preference optimization.
\newblock \emph{Proceedings of ACL}, 2024.

\bibitem[Qi et~al.(2024)Qi, Li, Li, Gao, Zhang, and Zhou]{qi2024online}
Qi, B., Li, P., Li, F., Gao, J., Zhang, K., and Zhou, B.
\newblock Online dpo: Online direct preference optimization with fast-slow
  chasing.
\newblock \emph{arXiv preprint arXiv:2406.05534}, 2024.

\bibitem[Rafailov et~al.(2023)Rafailov, Sharma, Mitchell, Ermon, Manning, and
  Finn]{rafailov2023direct}
Rafailov, R., Sharma, A., Mitchell, E., Ermon, S., Manning, C.~D., and Finn, C.
\newblock {Direct preference optimization: Your language model is secretly a
  reward model}.
\newblock \emph{Proceedings of NeurIPS}, 2023.

\bibitem[Razin et~al.(2024)Razin, Malladi, Bhaskar, Chen, Arora, and
  Hanin]{razin2024unintentional}
Razin, N., Malladi, S., Bhaskar, A., Chen, D., Arora, S., and Hanin, B.
\newblock Unintentional unalignment: Likelihood displacement in direct
  preference optimization.
\newblock \emph{arXiv preprint arXiv:2410.08847}, 2024.

\bibitem[Rescher(1967)]{rescher2010logic}
Rescher, N.
\newblock Semantic foundations for the logic of preference.
\newblock \emph{The logic of decision and action}, pp.\  37--62, 1967.

\bibitem[Richardson \& Wijnholds(2024)Richardson and Wijnholds]{esslli_24}
Richardson, K. and Wijnholds, G.
\newblock Lectures on language model programming, August 2024.
\newblock URL \url{https://github.com/yakazimir/esslli_2024_llm_programming}.

\bibitem[Rockt{\"a}schel et~al.(2015)Rockt{\"a}schel, Singh, and
  Riedel]{rocktaschel2015injecting}
Rockt{\"a}schel, T., Singh, S., and Riedel, S.
\newblock Injecting logical background knowledge into embeddings for relation
  extraction.
\newblock In \emph{Proceedings of NAACL}, 2015.

\bibitem[Saeidi et~al.(2024)Saeidi, Verma, and Baral]{saeidi2024insights}
Saeidi, A., Verma, S., and Baral, C.
\newblock Insights into alignment: Evaluating dpo and its variants across
  multiple tasks.
\newblock \emph{arXiv preprint arXiv:2404.14723}, 2024.

\bibitem[{\'S}lusarz et~al.(2023){\'S}lusarz, Komendantskaya, Daggitt, Stewart,
  and Stark]{slusarz2023logic}
{\'S}lusarz, N., Komendantskaya, E., Daggitt, M.~L., Stewart, R., and Stark, K.
\newblock Logic of differentiable logics: Towards a uniform semantics of dl.
\newblock \emph{arXiv preprint arXiv:2303.10650}, 2023.

\bibitem[Stoy(1977)]{stoy1977denotational}
Stoy, J.~E.
\newblock \emph{Denotational semantics: the Scott-Strachey approach to
  programming language theory}.
\newblock MIT press, 1977.

\bibitem[Tang et~al.(2024)Tang, Guo, Zheng, Calandriello, Munos, Rowland,
  Richemond, Valko, Pires, and Piot]{tang2024generalized}
Tang, Y., Guo, Z.~D., Zheng, Z., Calandriello, D., Munos, R., Rowland, M.,
  Richemond, P.~H., Valko, M., Pires, B.~{\'A}., and Piot, B.
\newblock Generalized preference optimization: A unified approach to offline
  alignment.
\newblock \emph{arXiv preprint arXiv:2402.05749}, 2024.

\bibitem[van Krieken et~al.(2022)van Krieken, Acar, and van
  Harmelen]{van2022analyzing}
van Krieken, E., Acar, E., and van Harmelen, F.
\newblock Analyzing differentiable fuzzy logic operators.
\newblock \emph{Artificial Intelligence}, 302:\penalty0 103602, 2022.

\bibitem[van Krieken et~al.(2024{\natexlab{a}})van Krieken, Badreddine,
  Manhaeve, and Giunchiglia]{van2024uller}
van Krieken, E., Badreddine, S., Manhaeve, R., and Giunchiglia, E.
\newblock Uller: A unified language for learning and reasoning.
\newblock In \emph{International Conference on Neural-Symbolic Learning and
  Reasoning}, pp.\  219--239. Springer, 2024{\natexlab{a}}.

\bibitem[van Krieken et~al.(2024{\natexlab{b}})van Krieken, Minervini, Ponti,
  and Vergari]{van2024independence}
van Krieken, E., Minervini, P., Ponti, E.~M., and Vergari, A.
\newblock On the independence assumption in neurosymbolic learning.
\newblock \emph{arXiv preprint arXiv:2404.08458}, 2024{\natexlab{b}}.

\bibitem[Vieira et~al.(2017)Vieira, Francis-Landau, Filardo, Khorasani, and
  Eisner]{vieira2017dyna}
Vieira, T., Francis-Landau, M., Filardo, N.~W., Khorasani, F., and Eisner, J.
\newblock Dyna: Toward a self-optimizing declarative language for machine
  learning applications.
\newblock In \emph{Proceedings of the 1st ACM SIGPLAN International Workshop on
  Machine Learning and Programming Languages}, pp.\  8--17, 2017.

\bibitem[von Werra et~al.(2020)von Werra, Belkada, Tunstall, Beeching, Thrush,
  Lambert, Huang, Rasul, and Gallouédec]{vonwerra2022trl}
von Werra, L., Belkada, Y., Tunstall, L., Beeching, E., Thrush, T., Lambert,
  N., Huang, S., Rasul, K., and Gallouédec, Q.
\newblock Trl: Transformer reinforcement learning.
\newblock \url{https://github.com/huggingface/trl}, 2020.

\bibitem[Wang et~al.(2023)Wang, Zhong, Li, Mi, Zeng, Huang, Shang, Jiang, and
  Liu]{wang2023aligning}
Wang, Y., Zhong, W., Li, L., Mi, F., Zeng, X., Huang, W., Shang, L., Jiang, X.,
  and Liu, Q.
\newblock Aligning large language models with human: A survey.
\newblock \emph{arXiv preprint arXiv:2307.12966}, 2023.

\bibitem[Welleck et~al.(2019)Welleck, Kulikov, Roller, Dinan, Cho, and
  Weston]{welleck2019neural}
Welleck, S., Kulikov, I., Roller, S., Dinan, E., Cho, K., and Weston, J.
\newblock Neural text generation with unlikelihood training.
\newblock In \emph{International Conference on Learning Representations}, 2019.

\bibitem[Winata et~al.(2024)Winata, Zhao, Das, Tang, Yao, Zhang, and
  Sahu]{winata2024preference}
Winata, G.~I., Zhao, H., Das, A., Tang, W., Yao, D.~D., Zhang, S.-X., and Sahu,
  S.
\newblock Preference tuning with human feedback on language, speech, and vision
  tasks: A survey.
\newblock \emph{arXiv preprint arXiv:2409.11564}, 2024.

\bibitem[Xu et~al.(2024)Xu, Sharaf, Chen, Tan, Shen, Van~Durme, Murray, and
  Kim]{xu2024contrastive}
Xu, H., Sharaf, A., Chen, Y., Tan, W., Shen, L., Van~Durme, B., Murray, K., and
  Kim, Y.~J.
\newblock Contrastive preference optimization: Pushing the boundaries of llm
  performance in machine translation.
\newblock \emph{arXiv preprint arXiv:2401.08417}, 2024.

\bibitem[Xu et~al.(2018)Xu, Zhang, Friedman, Liang, and Broeck]{xu2018semantic}
Xu, J., Zhang, Z., Friedman, T., Liang, Y., and Broeck, G.
\newblock A {{Semantic Loss Function}} for {{Deep Learning}} with {{Symbolic
  Knowledge}}.
\newblock In \emph{Proceedings of ICML}, pp.\  5498--5507, 2018.

\bibitem[Yixing et~al.(2024)Yixing, Yuxian, Dong, Wang, Cheng, and Wei]{dpkd}
Yixing, L., Yuxian, G., Dong, L., Wang, D., Cheng, Y., and Wei, F.
\newblock Direct preference knowledge distillation for large language models.
\newblock In \emph{arXiv preprint arXiv:2406.19774}, 2024.

\bibitem[Yuan et~al.(2023)Yuan, Yuan, Tan, Wang, Huang, and
  Huang]{yuan2023rrhf}
Yuan, Z., Yuan, H., Tan, C., Wang, W., Huang, S., and Huang, F.
\newblock Rrhf: Rank responses to align language models with human feedback
  without tears.
\newblock \emph{arXiv preprint arXiv:2304.05302}, 2023.

\bibitem[Zadeh(1975)]{zadeh1975fuzzy}
Zadeh, L.~A.
\newblock Fuzzy logic and approximate reasoning.
\newblock \emph{Synthese}, 30\penalty0 (3):\penalty0 407--428, 1975.

\bibitem[Zhang et~al.(2024)Zhang, Yu, Sharma, Zhong, Liu, Yang, Wang, Hassan,
  and Wang]{zhang2024self}
Zhang, S., Yu, D., Sharma, H., Zhong, H., Liu, Z., Yang, Z., Wang, S., Hassan,
  H., and Wang, Z.
\newblock Self-exploring language models: Active preference elicitation for
  online alignment.
\newblock \emph{arXiv preprint arXiv:2405.19332}, 2024.

\bibitem[Zhao et~al.(2025)Zhao, Winata, Das, Zhang, Yao, Tang, and
  Sahu]{zhao2024rainbowpo}
Zhao, H., Winata, G.~I., Das, A., Zhang, S.-X., Yao, D.~D., Tang, W., and Sahu,
  S.
\newblock Rainbowpo: A unified framework for combining improvements in
  preference optimization.
\newblock \emph{Proceedings of ICLR}, 2025.

\bibitem[Zhao et~al.(2022)Zhao, Khalman, Joshi, Narayan, Saleh, and
  Liu]{zhao2022calibrating}
Zhao, Y., Khalman, M., Joshi, R., Narayan, S., Saleh, M., and Liu, P.~J.
\newblock Calibrating sequence likelihood improves conditional language
  generation.
\newblock In \emph{Proceedings of ICLR}, 2022.

\bibitem[Zhao et~al.(2023)Zhao, Joshi, Liu, Khalman, Saleh, and
  Liu]{zhao2023slic}
Zhao, Y., Joshi, R., Liu, T., Khalman, M., Saleh, M., and Liu, P.~J.
\newblock {SLiC-HF: Sequence Likelihood Calibration with Human Feedback}.
\newblock \emph{arXiv preprint arXiv:2305.10425}, 2023.

\bibitem[Ziegler et~al.(2019)Ziegler, Stiennon, Wu, Brown, Radford, Amodei,
  Christiano, and Irving]{ziegler2019fine}
Ziegler, D.~M., Stiennon, N., Wu, J., Brown, T.~B., Radford, A., Amodei, D.,
  Christiano, P., and Irving, G.
\newblock Fine-tuning language models from human preferences.
\newblock \emph{arXiv preprint arXiv:1909.08593}, 2019.

\end{thebibliography}
\bibliographystyle{icml2025}


\appendix


\begin{table*}
\begin{center}
\setlength\arrayrulewidth{1.2pt}
 \resizebox{\linewidth}{!}{%
{\footnotesize
    \begin{tabular}{| l l c c l |}
        \hline 
        {\textbf{Loss name}}  & \multicolumn{1}{c}{\textbf{core loss equation} $\rho_{\theta}$} & {\textbf{CE term}} & \multicolumn{1}{c}{{\textbf{length norm.}}} & \multicolumn{1}{c|}{{\textbf{Extra details and terms}}} \\ 
        \hline 
        \multicolumn{5}{|c|}{{\textbf{common baseline losses}}} \\ \hline 
        $\ell_{\texttt{CE}}$ & $\log \frac{P_{\theta}(y_{w} \mid x)}{1 - P_{\theta}(y_{w} \mid x)}$ & -- & -- & \\ 
        $\ell_{\texttt{CEUnl}}$ {\tiny \citep{rafailov2023direct}} & $\log \frac{P_{\theta}(y_{w} \mid x)  (1 - P_{\theta}(y_{l} \mid x))}{1 - (P_{\theta}(y_{w} \mid x)  (1 - P_{\theta}(y_{l} \mid x)))}$ & -- &  -- & Unlikelihood term weighted by $\alpha$ \\ \hline 
        \multicolumn{5}{|c|}{{\textbf{reference approaches}}} \\  \hline
        $\ell_{\texttt{DPO}}$ {\tiny \citep{rafailov2023direct}}  & $\log \frac{P_{\theta}(y_{w} \mid x) \textcolor{azure}{P_{\text{ref}}(y_{l} \mid x)}}{ \textcolor{azure}{P_{\text{ref}}(y_{w} \mid x)} P_{\theta}(y_{l} \mid x)}$ & $\tikzxmark$ & $\tikzxmark$ &  \\ 
        $\ell_{\texttt{ODPO}}$ {\tiny \citep{amini2024direct}} & $\log \frac{P_{\theta}(y_{w} \mid x) P_{\text{ref}}(y_{l} \mid x)}{P_{\text{ref}}(y_{w} \mid x)P_{\theta}(y_{l} \mid x)}  - \textcolor{azure}{\gamma_{\text{offset}}}$ & $\tikzxmark$ & $\tikzxmark$ & Added offset term $\gamma_{\text{offset}}$ \\ 
        $\ell_{\texttt{DPOP}}$ {\tiny \citep{pal2024smaug}} & $\log \frac{P_{\theta}(y_{w} \mid x) \textcolor{azure}{P_{\theta2}(y_{w} \mid x)} P_{\text{ref}}(y_{l} \mid x)}{P_{\text{ref}}(y_{w} \mid x) \textcolor{azure}{P_{\text{ref2}}(y_{w} \mid x)} P_{\theta}(y_{l} \mid x)}$ & $\tikzxmark$ & $\tikzxmark$ & See Appendix~\ref{sec:dpop} \\ 
        $\ell_{\texttt{R-DPO}}$ {\tiny \cite{park2024disentangling}} & $\log \frac{P_{\theta}(y_{w} \mid x) P_{\text{ref}}(y_{l} \mid x)}{P_{\text{ref}}(y_{w} \mid x)P_{\theta}(y_{l} \mid x)}  + \textcolor{azure}{\gamma_{\text{len}}}$  & $\tikzxmark$ & $\tikzxmark$ & Added length bias term $\gamma_{\text{len}}$ \\
        $\ell_{\texttt{DPKD}}$ {\tiny \citep{dpkd}}  & $\log \frac{P_{\text{student}}(y_{w} \mid x) \textcolor{azure}{P_{\text{teacher}}(y_{l} \mid x)}}{ \textcolor{azure}{P_{\text{teacher}}(y_{w} \mid x)} P_{\text{student}}(y_{l} \mid x)}$ &  $\checkmark$ & $\checkmark$ & Distillation, re-parameterizes \texttt{ref} and $\theta$ \\  
        \hline
        \multicolumn{5}{|c|}{{\textbf{single model (no reference)}, CE weight $\lambda$}} \\ \hline 
        $\ell_{\texttt{CPO}}$ {\tiny \citep{xu2024contrastive}}  &   $\log \frac{P_{\theta}(y_{w} \mid x)}{P_{\theta}(y_{l} \mid x)}$ & $\checkmark$ & $\tikzxmark$ & Removes \texttt{ref} \\
        $\ell_{\texttt{ORPO}}$ {\tiny \citep{hong2024reference}}  &    $\log \frac{P_{\theta}(y_{w} \mid x)\textcolor{azure}{(1-P_{\theta}(y_{l} \mid x))} }{P_{\theta}(y_{l} \mid x) \textcolor{azure}{(1  - P_{\theta}(y_{w} \mid x))} }$ & $\checkmark$ & $\checkmark$ & $\beta = 1$, main loss weighted by $\alpha$, $\lambda=1$ \\ 
        $\ell_{\texttt{SimPO}}$ {\tiny \citep{meng2024simpo}}  & $\log \frac{P_{\theta}(y_{w} \mid x)}{P_{\theta}(y_{l} \mid x)} - \textcolor{azure}{\gamma}$   & $\tikzxmark$ & $\checkmark$ & Added margin term $\gamma$, re-formalized in Table~\ref{tab:comparison}   \\ \hline 
    \end{tabular}

}}
\end{center}
\caption{Details of the original losses from Table~\ref{tab:comparison} and others (adapted from \citet{meng2024simpo}), all of which were originally implemented using the logistic log-loss, i.e., each  $\ell_{x} = -\log \sigma(\beta \rho_{\theta})$. We also include details about whether cross-entropy regularization (\textbf{CE term}) and  length normalization (\textbf{length norm.}) were used (yes $\checkmark$, no $\times$) along with other details (\textbf{Extra details}) (e.g., extra weight terms, specific choices about $\beta$ or cross-entropy weight $\lambda$) that we either exclude or generalize in our analysis and experiments (e.g., extra loss weighting terms $\alpha$). See \citet{winata2024preference} for a comprehensive review and \citet{zhao2024rainbowpo} for an approach further mixes \texttt{DPO} and \texttt{SimPO}. 
}
\label{tab:original_loss_details}
\end{table*}

\section{Original losses}
\label{app:original_losses}

\textcolor{black}{Further details of the original losses in Table~\ref{tab:comparison}, along with other variants such as \texttt{R-DPO} \citep{park2024disentangling}, \texttt{ODPO} \citep{amini2024direct} and \texttt{DPKD} \citep{dpkd}, are shown in Table~\ref{tab:original_loss_details}. While our formalization abstracts over certain  details such as length normalization and additional regularization terms, we include such details from the original studies. In the case of regularization terms, as noted in Table~\ref{tab:original_loss_details} most \textbf{no reference} approaches add an additional cross-entropy term, often making the full losses in these studies equal to $\ell_{x+\texttt{CE}} = \ell_{x} + \lambda \ell_{\texttt{CE}}$ (with weight $\lambda$). In some cases, additional terms $\alpha$ are assumed that we abstract over in our analysis, e.g., in $\ell_{\texttt{orpo}}$ the full loss includes an additional weight term $\alpha$ that is added to the main loss (in our experiments below, $\alpha$ is implicitly set to $1$).}

\section{Compositionality constraint}
\label{app:compositionality}

\CompilationLimitations*   

\begin{proof}
   Taking $\ell_{\texttt{CPO}}$ as an example, the loss equation is based on the ratio $s_{\theta}(y_{w},y_{l})$ consisting of two predictions $P_{\theta}(y_{w} \mid x)$ and $P_{\theta}(y_{l} \mid x)$, which we can translate into the propositional formulas $\progvariable_{t} := \lmpredicate{M}(\xvar,\yvar_{w})$ and $\progvariable_{b} := \lmpredicate{M}(\xvar,\yvar_{l})$, consisting of a total of two atomic propositions. Translating this to the standard semantic loss involves finding a \emph{single} $\progvariable$ such that $\progvariable_{w} = \progvariable$ and $\progvariable_{l} = \neg \progvariable$. To see that no such $\progvariable$ exists, we can enumerate all 16 unique Boolean functions over variables $\lmpredicate{M}(\xvar,\yvar_{w})$ and $\lmpredicate{M}(\xvar,\yvar_{l})$ (the only variables we are allowed under Assumption~\ref{build_assumption_second}) and verify that none yield a single formula $\progvariable$ s.t.\ $\log \frac{\wmc{\progvariable}{}}{\wmc{\neg\progvariable}{}} = s_{\theta}(y_{w},y_{l})$. The same argument can be applied to each of the other non-baseline losses in the table. 
\end{proof}

Without the compositionality assumption, one can encode any $\rho_{\theta}$ as a formula using additional variables and weighting schemes, as is commonly done in standard WMC encodings \citep{chavira2008probabilistic}. However, the semantics of the resulting formulas are less transparent and often hidden in the weights. We instead propose to define below a novel (unweighted) encoding for preference that doesn't require additional variables, thereby facilitating a compositional and transparent translation from loss equations.


\begin{table}
\centering 
    \setlength\arrayrulewidth{1.2pt}
    \begin{tabular}{| c l |}
\hline 
 Input &  \multicolumn{1}{c|}{\textsc{sem}$(\cdot)$} \\ 
 \multicolumn{2}{|c|}{predictions} \\ 
$\probability_{\lmpredicate{M}}(\yvar \mid \xvar)$ & $\progvariable := \lmpredicate{M}(\xvar,\yvar)$ \\ \hline 
\multicolumn{2}{|c|}{formulas $\progvariable$}  \\ 
$\progvariable_{1} \cdot \progvariable_{2}$ & $\progvariable := \opand(\progvariable_{1},\progvariable_{2})$  \\ 
$1 - \progvariable$ &      $\progvariable :=\opneg(\progvariable)$  \\ 
$\progvariable_{1} + \progvariable_{2}$ &     $\progvariable := \opor(\progvariable_{1},\progvariable_{2})$  \\ \hline
\end{tabular}

\caption{Rules for the compositional translation of loss expressions into symbolic formulas. See again example in Figure~\ref{fig:orpo_derivation}.}
\label{tab:translation_rules}
\end{table}
\section{Semantic translation rules} 

In Table~\ref{tab:translation_rules} we show the full translation rules for Algorithm~\ref{alg:sl_translation}.

\begin{figure*}

\small
\centering 
\setlength\arrayrulewidth{1.2pt}
\setlength{\tabcolsep}{2.3pt}

\begin{tabular}{| c c | c c c c c c c c|} 
 \hline
$\lmpredicate{M}(\xvar,\yvar_{w})$ & $\lmpredicate{M}(\xvar,\yvar_{l})$ & $\ell_{ \texttt{ORPO} }$ & $\ell_{ \texttt{cUnl} }$ & $\ell_{ \texttt{l3} }$ & $\ell_{ \texttt{CEUnl} }$ & $\ell_{ \texttt{cCPO} }$ & $\ell_{ \texttt{CPO} }$ & $\ell_{ \texttt{CE} }$ & $\ell_{ \texttt{sCE} }$\\ \hline
T & T &  & \colorbox{red!10}{\tikzxmark} &  & \colorbox{red!10}{\tikzxmark} & \colorbox{gray!20}{\checkmark} & \colorbox{gray!20}{\checkmark}\colorbox{red!10}{\tikzxmark} & \colorbox{gray!20}{\checkmark} & \colorbox{gray!20}{\checkmark}\colorbox{red!10}{\tikzxmark} \\
T & F & \colorbox{gray!20}{\checkmark} & \colorbox{gray!20}{\checkmark} & \colorbox{gray!20}{\checkmark} & \colorbox{gray!20}{\checkmark} & \colorbox{gray!20}{\checkmark} & \colorbox{gray!20}{\checkmark} & \colorbox{gray!20}{\checkmark} & \colorbox{gray!20}{\checkmark} \\
F & T & \colorbox{red!10}{\tikzxmark} & \colorbox{red!10}{\tikzxmark} & \colorbox{red!10}{\tikzxmark} & \colorbox{red!10}{\tikzxmark} & \colorbox{red!10}{\tikzxmark} & \colorbox{red!10}{\tikzxmark} & \colorbox{red!10}{\tikzxmark} & \colorbox{red!10}{\tikzxmark} \\
F & F &  &  & \colorbox{red!10}{\tikzxmark} & \colorbox{red!10}{\tikzxmark} &  &  & \colorbox{red!10}{\tikzxmark} & \colorbox{red!10}{\tikzxmark} \\
\hline
\end{tabular}
\\[.2cm]

\begin{tabular}{| c c | c c c c c c c c|} 
 \hline
$\lmpredicate{M}(\xvar,\yvar_{w})$ & $\lmpredicate{M}(\xvar,\yvar_{l})$ & $\ell_{ \texttt{cfUnl} }$ & $\ell_{ \texttt{fUnl} }$ & $\ell_{ \texttt{qfUnl} }$ & $\ell_{ \texttt{l20} }$ & $\ell_{ \texttt{unCPO} }$ & $\ell_{ \texttt{l14} }$ & $\ell_{ \texttt{bCE} }$ & $\ell_{ \texttt{l5} }$\\ \hline
T & T &  & \colorbox{red!10}{\tikzxmark} &  & \colorbox{red!10}{\tikzxmark} & \colorbox{gray!20}{\checkmark} & \colorbox{gray!20}{\checkmark}\colorbox{red!10}{\tikzxmark} & \colorbox{gray!20}{\checkmark} & \colorbox{gray!20}{\checkmark}\colorbox{red!10}{\tikzxmark} \\
T & F & \colorbox{gray!20}{\checkmark} & \colorbox{gray!20}{\checkmark} & \colorbox{gray!20}{\checkmark} & \colorbox{gray!20}{\checkmark} & \colorbox{gray!20}{\checkmark} & \colorbox{gray!20}{\checkmark} & \colorbox{gray!20}{\checkmark} & \colorbox{gray!20}{\checkmark} \\
F & T & \colorbox{red!10}{\tikzxmark} & \colorbox{red!10}{\tikzxmark} & \colorbox{red!10}{\tikzxmark} & \colorbox{red!10}{\tikzxmark} & \colorbox{red!10}{\tikzxmark} & \colorbox{red!10}{\tikzxmark} & \colorbox{red!10}{\tikzxmark} & \colorbox{red!10}{\tikzxmark} \\
F & F & \colorbox{gray!20}{\checkmark} & \colorbox{gray!20}{\checkmark} & \colorbox{gray!20}{\checkmark}\colorbox{red!10}{\tikzxmark} & \colorbox{gray!20}{\checkmark}\colorbox{red!10}{\tikzxmark} & \colorbox{gray!20}{\checkmark} & \colorbox{gray!20}{\checkmark} & \colorbox{gray!20}{\checkmark}\colorbox{red!10}{\tikzxmark} & \colorbox{gray!20}{\checkmark}\colorbox{red!10}{\tikzxmark} \\
\hline
\end{tabular}

\caption{A Boolean representation (in the style of Figure~\ref{fig:boolean}) of the single model loss functions shown in Figure~\ref{fig:lattice}. See again Figure~\ref{fig:boolean} for how to interpret the corresponding losses.}
\label{fig:tabular}
\end{figure*}

\section{Proofs of propositions}
\label{sec:aux_proofs}

Below we state propositions discussed in Section~\ref{sec:logic_formal} with their proofs. 

\Monotonicity*   

\begin{proof}
By the definition of preference entailment, we have $\overline{\progvariable}_{f}^{(1)} \models \overline{\progvariable}_{f}^{(2)}$. This means that for any $d$, $\overline{\progvariable}^{1}(d) \models \overline{\progvariable}^{2}(d)$, which implies that for any $\theta$, $\wmc{\overline{\progvariable}^{(1)}(d)}{} \leq \wmc{\overline{\progvariable}^{(2)}(d)}{}$.
From the definition of preference entailment, we also have $\overline{\neg\progvariable}^{(2)}(d) \models \overline{\neg \progvariable}^{(1)}(d)$. Following a similar line of reasoning as above, this implies $\wmc{\overline{\neg\progvariable}^{(1)}(d)}{} \geq \wmc{\overline{\neg\progvariable}^{(2)}(d)}{}$.
Thus, for any $d$ and $\theta$, the weighted model counting ratio term in the semantic loss in Table~\ref{tab:sl_variants} is no larger for $\overline{\progvariable}^{(1)}$ than for $\overline{\progvariable}^{(2)}$. It follows that $\ell_{\text{sl}}(\overline{\progvariable}^{(1)},\theta,\{d\}) \geq \ell_{\text{sl}}(\overline{\progvariable}^{(2)},\theta,\{d\})$. Taking the expectation over $d \sim D$, we obtain $\ell_{\text{sl}}(\overline{\progvariable}^{(1)},\theta,D) \geq \ell_{\text{sl}}(\overline{\progvariable}^{(2)},\theta,D)$.
\end{proof}

It follows that equivalent preference structures have identical semantic losses:
\begin{cor}[semantic equivalence] 
    \label{cor:equivalence}
    If $\overline{\progvariable}^{1} \equiv \overline{\progvariable}^{2}$ then $\ell_{\text{sl}}(\overline{\progvariable}^{(1)},\theta,D) = \ell_{\text{sl}}(\overline{\progvariable}^{2},\theta,D)$ for any $\theta,D$. 
\end{cor}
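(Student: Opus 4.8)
The plan is to derive the equality directly from the monotonicity result (Prop~\ref{prop:monotonicity}) by applying it in both directions. The key observation is that preference equivalence $\overline{\progvariable}^{(1)} \equiv \overline{\progvariable}^{(2)}$ was defined to mean that both $\overline{\progvariable}^{(1)} \sqsubseteq \overline{\progvariable}^{(2)}$ and $\overline{\progvariable}^{(2)} \sqsubseteq \overline{\progvariable}^{(1)}$ hold simultaneously. Since the semantic loss is monotone with respect to preference entailment, each of these two containments yields an inequality between the losses, and the two inequalities together pin down an equality.

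First I would unfold the definition of $\equiv$ to obtain the two entailments $\overline{\progvariable}^{(1)} \sqsubseteq \overline{\progvariable}^{(2)}$ and $\overline{\progvariable}^{(2)} \sqsubseteq \overline{\progvariable}^{(1)}$. Next, fixing arbitrary $\theta$ and $D$, I would apply Prop~\ref{prop:monotonicity} to the first containment to get $\ell_{\text{sl}}(\overline{\progvariable}^{(1)},\theta,D) \geq \ell_{\text{sl}}(\overline{\progvariable}^{(2)},\theta,D)$, and then apply it again to the second containment (with the roles of the two structures swapped) to obtain the reverse inequality $\ell_{\text{sl}}(\overline{\progvariable}^{(2)},\theta,D) \geq \ell_{\text{sl}}(\overline{\progvariable}^{(1)},\theta,D)$. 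Combining the two inequalities via antisymmetry of $\geq$ on the reals gives the desired equality.

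This is essentially an antisymmetry argument, so there is no serious obstacle; the only point requiring care is that Prop~\ref{prop:monotonicity} is stated for a single direction of entailment, so I must invoke it twice with the arguments in opposite orders rather than assuming any symmetric form. Since the monotonicity proposition holds for arbitrary $\theta$ and $D$, the conclusion holds for arbitrary $\theta$ and $D$ as claimed, with no need to revisit the expectation over $D$ or the particular choice of $f$ in the generalized semantic loss.
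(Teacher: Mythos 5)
Your proposal is correct and matches the paper's own argument: the paper derives this corollary directly from Prop~\ref{prop:monotonicity}, exactly as you do, by noting that $\overline{\progvariable}^{(1)} \equiv \overline{\progvariable}^{(2)}$ means mutual preference entailment, so monotonicity applied in both directions yields the two inequalities whose conjunction forces equality. Your explicit care about invoking the proposition twice with the arguments swapped is precisely what the paper leaves implicit.
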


The next result is an analogue to the locality property in the original semantic loss \citep{xu2018semantic}, which tells us that unused logical variables in formulas do not affect loss values, which allows us to compare losses with different number of variables.  

\begin{prop}[locality] Let $\overline{\progvariable}$ be a preference structure defined over probabilistic prediction variables $\mathbf{X}$ with parameters $\theta_{x}$. Let $\mathbf{Y}$ be some disjoint set of variables with parameters $\theta_{y}$. Then $\ell_{sl}(\overline{\progvariable},\theta_{x},D) = \ell_{sl}(\overline{\progvariable},[\theta_{x} \, \theta_{y}],D)$ for any $D$.
\label{prop:locality_appendix}
\end{prop}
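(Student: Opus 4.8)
The plan is to reduce the claimed equality of losses to the invariance of the two weighted model counts that appear in the semantic loss ratio $\rho_{\text{sem}}$ under the addition of unused variables. By the definition of the generalized semantic loss $\ell_{\text{sl}}(\overline{\progvariable},\theta,D) = \mathbb{E}_{d \sim D}[f(\rho_{\text{sem}}(d),\beta)]$ (Table~\ref{tab:sl_variants}), where $\rho_{\text{sem}} = \log \big(\wmc{\overline{\progvariable_f}}{} / \wmc{\overline{\neg\progvariable_f}}{}\big)$, it suffices to show that both $\wmc{\overline{\progvariable_f}}{}$ and $\wmc{\overline{\neg\progvariable_f}}{}$ take the same value whether computed with parameters $\theta_x$ over the variable set $\mathbf{X}$ or with $[\theta_x \, \theta_y]$ over the enlarged set $\mathbf{X} \cup \mathbf{Y}$. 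Since the loss is a fixed function of their ratio, the equality of the per-example losses, and hence of their expectation over $D$, follows immediately.

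First I would observe that, by Eq~\ref{eq:eq_forms}, both formula forms $\overline{\progvariable_f}$ and $\overline{\neg\progvariable_f}$ are built solely from $\progvariable$, $\progvariable_{\textbf{A}}$, and $\progvariable_{\textbf{C}}$, each of which is a propositional formula over the prediction variables $\mathbf{X}$ alone (this is exactly what it means for $\overline{\progvariable}$ to be defined over $\mathbf{X}$). Consequently, neither formula mentions any variable in $\mathbf{Y}$. The central step is then to expand the WMC of Eq~\ref{eq:standard_wmc} over the index set $\mathbf{X} \cup \mathbf{Y}$ and note that, because the satisfying assignments impose no constraint on the $\mathbf{Y}$-coordinates, every model $\mathbf{w}|_{\mathbf{X}}$ over $\mathbf{X}$ satisfying the formula extends to all $2^{\lvert \mathbf{Y} \rvert}$ assignments of $\mathbf{Y}$, and the weight factorizes into its $\mathbf{X}$-part and its $\mathbf{Y}$-part. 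Summing the $\mathbf{Y}$-part over all its assignments yields $\prod_{Y_j \in \mathbf{Y}} \big(P_{\theta_y}(Y_j) + (1 - P_{\theta_y}(Y_j))\big) = \prod_{Y_j \in \mathbf{Y}} 1 = 1$, so the enlarged count equals the original one; the identical argument applies to $\overline{\neg\progvariable_f}$.

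Finally I would combine these facts: since both the numerator and denominator of $\rho_{\text{sem}}$ are unchanged, $\rho_{\text{sem}}(d)$ is unchanged for every $d$, whence $f(\rho_{\text{sem}}(d),\beta)$ is unchanged, and taking the expectation over $d \sim D$ gives $\ell_{\text{sl}}(\overline{\progvariable},\theta_x,D) = \ell_{\text{sl}}(\overline{\progvariable},[\theta_x\,\theta_y],D)$. The argument is essentially routine, so the main (mild) obstacle is purely bookkeeping: one must justify carefully the clean split of each satisfying model into disjoint $\mathbf{X}$- and $\mathbf{Y}$-components and the telescoping of the $\mathbf{Y}$-weights to $1$. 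This normalization-to-one mechanism is precisely the standard WMC marginalization underlying the locality property of the original semantic loss \citep{xu2018semantic}, so no new ideas beyond it are required.
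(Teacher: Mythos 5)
Your proof is correct and follows essentially the same route as the paper's: both arguments marginalize out the unused variables $\mathbf{Y}$, factor each world's weight into disjoint $\mathbf{X}$- and $\mathbf{Y}$-parts, and observe that the $\mathbf{Y}$-weights telescope to $1$ via $P_{\theta_y}(Y_j) + (1 - P_{\theta_y}(Y_j)) = 1$, so both weighted model counts (and hence the loss ratio under any $f$) are unchanged. The only cosmetic difference is that the paper phrases the marginalization at the level of world probabilities before concluding WMC equality, whereas you work directly on the WMC sums; the substance is identical.
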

\begin{proof}
Let $\textbf{w}_x$ be any world over variables $\mathbf{X}$ and $\textbf{w}_y$ be any world over (disjoint) variables $\mathbf{Y}$. Let $\textbf{w}_{x,y}$ denote the joint world. By the standard semantic loss, the probability of the world $\textbf{w}_{x,y}$ in the $(\mathbf{X}, \mathbf{Y})$ space can be written as $\probability_{\theta_x, \theta_y}(\textbf{w}_{x,y}) = \prod_{X_i \in \mathbf{X}} Q_{\theta_x,\theta_y}(X_i) \cdot \prod_{Y_j \in \mathbf{Y}} Q_{\theta_x,\theta_y}(Y_j)$ where $Q$ is either $\probability$ or $1 - \probability$. Since the parameters $\theta_x$ and $\theta_y$ refer to disjoint sets of variables, we can simplify this to $\prod_{X_i \in \mathbf{X}} Q_{\theta_x}(X_i) \cdot \prod_{Y_j \in \mathbf{Y}} Q_{\theta_y}(Y_j)$.

It follows that the marginal probability of the world $\textbf{w}_x$ in the $(\mathbf{X}, \mathbf{Y})$ space equals $\probability_{\theta_x,\theta_y}(\textbf{w}_x) = \sum_{\mathbf{Y}} \left(\prod_{X_i \in \mathbf{X}} Q_{\theta_x}(X_i) \cdot \prod_{Y_j \in \mathbf{Y}} Q_{\theta_y}(Y_j)\right) = \prod_{X_i \in \mathbf{X}} Q_{\theta_x}(X_i) \cdot \sum_{\mathbf{Y}} \left( \prod_{Y_j \in \mathbf{Y}} Q_{\theta_y}(Y_j) \right) = \prod_{X_i \in \mathbf{X}} Q_{\theta_x}(X_i) \cdot \prod_{Y_j \in \mathbf{Y}} \left( Q_{\theta_y}(Y_j) + (1 - Q_{\theta_y}(Y_j)) \right) = \prod_{X_i \in \mathbf{X}} Q_{\theta_x}(X_i) = \probability_{\theta_x}(\textbf{w}_x)$. This last expression is precisely the probability of the world $\textbf{w}_x$ in only the $\mathbf{X}$ space. Thus, $\probability_{\theta_x}(\textbf{w}_x) = \probability_{\theta_x,\theta_y}(\textbf{w}_x)$, which implies $\text{WMC}\big(\overline{\progvariable};\theta_x\big) = \text{WMC}\big(\overline{\progvariable};\theta_x,\theta_y\big)$ and similarly for $\overline{\neg \progvariable}$. From this, the claim follows immediately.
\end{proof}

\section{New losses in loss lattice}
\label{sec:new_losses}

To visualize the semantics of the single model losses shown in Figure~\ref{fig:lattice}, we use the Boolean truth table shown in Figure~\ref{fig:tabular}. As already illustrated in Figure~\ref{fig:boolean}, each loss column can be mechanically converted into a preference structure via the following steps: 1) translate \colorbox{gray!20}{$\checkmark$} and \colorbox{red!20}{$\times$} into two standard propositional formulas that are logically consistent with the marks, $\progvariable_{t}$ for $\progvariable_{b}$, respectively, then 2) apply the rules in Algorithm~\ref{alg:sl_translation}  to these formulas to get a preference structure $\overline{\progvariable}$. (Note that the formulas in boxes in Figure~\ref{fig:lattice} show the core formula $\progvariable$ in the resulting preference structure and intentionally hide details about the constraints.) 

\begin{figure*}
\centering 
\footnotesize
\setlength\arrayrulewidth{1.2pt}
\setlength{\tabcolsep}{2.3pt}
\begin{tabular}{| c c c c c c c c|} 
 \hline
$\lmpredicate{(M)Ref}(\xvar,\yvar_{w})$ & $\lmpredicate{M}(\xvar,\yvar_{l})$ & $\lmpredicate{(M)Ref}(\xvar,\yvar_{l})$ & $\lmpredicate{M}(\xvar,\yvar_{w})$ & $\ell_{\texttt{DPO/SimPO}}$ & $\ell_{\texttt{orpo-ref}}$ & $\ell_{\texttt{qfUNL-ref}}$ & $\ell_{\texttt{l5-ref}}$ \\ \hline
F & F & F & F &   &   &   & \colorbox{gray!20}{\checkmark} \\
F & F & F & T &   &   &   & \colorbox{gray!20}{\checkmark} \\
F & F & T & F &   &   & \colorbox{gray!20}{\checkmark} & \colorbox{gray!20}{\checkmark} \\
F & F & T & T & \colorbox{gray!20}{\checkmark} & \colorbox{gray!20}{\checkmark} & \colorbox{gray!20}{\checkmark} & \colorbox{gray!20}{\checkmark} \\
F & T & F & F &   &   &   &   \\
F & T & F & T &   &   &   & \colorbox{gray!20}{\checkmark} \\
F & T & T & F &   &   &   &   \\
F & T & T & T & \colorbox{gray!20}{\checkmark} &   &   & \colorbox{gray!20}{\checkmark} \\
T & F & F & F &   &   & \colorbox{red!10}{\tikzxmark} & \colorbox{gray!20}{\checkmark}\colorbox{red!10}{\tikzxmark} \\
T & F & F & T &   &   &   & \colorbox{gray!20}{\checkmark} \\
T & F & T & F &   &   & \colorbox{gray!20}{\checkmark}\colorbox{red!10}{\tikzxmark} & \colorbox{gray!20}{\checkmark}\colorbox{red!10}{\tikzxmark} \\
T & F & T & T & \colorbox{gray!20}{\checkmark} & \colorbox{gray!20}{\checkmark} & \colorbox{gray!20}{\checkmark} & \colorbox{gray!20}{\checkmark} \\
T & T & F & F & \colorbox{red!10}{\tikzxmark} & \colorbox{red!10}{\tikzxmark} & \colorbox{red!10}{\tikzxmark} & \colorbox{red!10}{\tikzxmark} \\
T & T & F & T & \colorbox{red!10}{\tikzxmark} &   &   & \colorbox{gray!20}{\checkmark}\colorbox{red!10}{\tikzxmark} \\
T & T & T & F & \colorbox{red!10}{\tikzxmark} & \colorbox{red!10}{\tikzxmark} & \colorbox{red!10}{\tikzxmark} & \colorbox{red!10}{\tikzxmark} \\
T & T & T & T & \colorbox{gray!20}{\checkmark}\colorbox{red!10}{\tikzxmark} &   &   & \colorbox{gray!20}{\checkmark}\colorbox{red!10}{\tikzxmark} \\
\hline
\end{tabular}

\caption{Boolean semantics of \texttt{DPO} and \texttt{SimPO} (column 5) and some novel variants of (columns 6-8) representing the different semantic regions in Figure~\ref{fig:reference_lattice}.}
\label{fig:reference_boolean}
\end{figure*}

With these preference structures, we can then obtain a compiled version of the loss by simply applying one of the versions of the semantic loss. In simplified terms, finding the compiled loss equation directly from a truth table for a given version of semantic loss with convex function $f$ (e.g., those listed in Table~\ref{tab:sl_variants}) involves the following
\begin{align*}
f \bigg( \log \frac{ \sum \colorbox{gray!20}{\checkmark} }{\sum {\colorbox{red!20}{$\times$}}} \bigg)
\end{align*}
where we can replace each $\sum .$ with the corresponding WMC equations for each mark, then simplify the resulting equation (i.e., the core loss equation) to arrive at a compact loss equation that can be directly used for implementation. 

\paragraph{Losses used in experiments} Employing the process above, below we show the core loss equations for the losses we used in our experiments in accordance with the form in Table~\ref{tab:comparison}: 
\begin{center}
 \setlength\arrayrulewidth{1.2pt}
\begin{tabular}{| c c |}
\hline 
\textbf{Loss name} & \textbf{Core loss equation} (implementation) \\ 
$\ell_{\texttt{cpo}}$ &  $\log \frac{P_{\theta}(y_{w} \mid x)}{P_{\theta}(y_{l} \mid x)}$  \\ 
$\ell_{\texttt{orpo}}$ & $\log \frac{P_{\theta}(y_{w} \mid x) (1 - p_{\theta}(y_{l} \mid x))}{P_{\theta}(y_{l} \mid x) (1 - p_{\theta}(y_{w} \mid x))}$ \\ \hline 
$\ell_{\texttt{cCPO}}$ & $\log \frac{ P_{\theta}(y_{w} \mid x) }{ (1 - P_{\theta}(y_{w} \mid x))P_{\theta}(y_{l} \mid x) }$ \\ 
$\ell_{\texttt{qfUNL}}$ & $\log \frac{(1 - P_{\theta}(y_{l} \mid x))}{(1 - P_{\theta}(y_{w} \mid x)}$ \\
$\ell_{\texttt{cfUNL}}$ & $\log \frac{(1 - P_{\theta}(y_{l} \mid x))}{(1 - P_{\theta}(y_{w} \mid x))P_{\theta}(y_{l} \mid x)}$ \\ 
$\ell_{\texttt{unCPO}}$ & $\log \frac{P_{\theta}(y_{l} \mid x)P_{\theta}(y_{w}\mid x) + (1 - P_{\theta}(y_{l} \mid x))}{P_{\theta}(y_{l} \mid x)(1 - P_{\theta}(y_{w} \mid x))}$ \\ \hline 

\end{tabular}
 
\end{center}
As described above, the final loss that we implemented was then obtained by applying the logistic loss loss over these equations and adding a $\beta$ term and cross-entropy terms (see details below). We used the \texttt{trl} library for implementation from \cite{vonwerra2022trl}, with assistance from the trainer scripts used in \citet{meng2024simpo}.\footnote{see \url{https://github.com/huggingface/trl} and \url{https://github.com/princeton-nlp/SimPO}.} 

\paragraph{Extending the loss lattice to reference models} While our loss lattice and the subsequent experiments we describe center around novel no reference loss functions, we note that given abstract structure of DPA, we can easily transform a no reference loss function into reference loss function by simply subtracting the reference log win-lose ratio, $s_{\text{ref}}(y_{w},y_{l})$ (either using a real reference ratio or one for simpo) from any single model loss equation (e.g., any of of the loss equations above). Via some algebraic simplification, we can then arrive a new core loss equation with this reference information and straightforwardly generate a preference structure via Algorithm~\ref{alg:sl_translation}. 

Figure~\ref{fig:reference_lattice} shows the result of this process for the single loss functions derived in Figure~\ref{fig:lattice}. This reveals a wide range of novel variants of \texttt{DPO} that we leave for future experiments and study. Figure~\ref{fig:reference_boolean} shows the Boolean semantics of \texttt{DPO}/\texttt{SimPO} and some novel variants based on the reference form of \texttt{ORPO} ($\ell_{\texttt{ORPO-ref}}$), \texttt{qfUNL} ($\ell_{\texttt{qfUNL-ref}}$) and \texttt{l5} ($\ell_{\texttt{l5-ref}}$).

\begin{figure}
\begin{tcblisting}{boxrule=2pt, listing only,listing options={language=iPython},title={}}
from sympy import * 
# winner (W), loser (L), 
#(ref) winner (R_w), loser (R_l)
W,L,R_w,R_l = symbols('W,L,R_w,R_l') 
## equation translation for ORPO 
P_t = And(W,Not(L))
P_b = And(L,Not(W)) 
## pref. structure `$\color{cyan}{\overline{\progvariable}} = (\progvariable,{\progvariable_{\textbf{C}}},{\progvariable_{\textbf{A}}})$'
P = Implies(P_b,P_t).simplify() 
assert P.equals(Implies(L,W))
P_C = Or(P_t,P_b).simplify()     
P_A = And(P_t,P_b).simplify()    
## The reference form formula 
P_ref = Implies( 
    And(P_b,R_w), And(P_t,R_l)
).simplify() 
assert P_ref.equals(
    Implies(And(R_w,L),W)
)
\end{tcblisting}
\caption{An example showing how to compute the simplified symbolic formulas in preference structures for \texttt{ORPO} (see Figure~\ref{fig:orpo_derivation}) in \texttt{Sympy} \citep{meurer2017sympy}. }
\label{fig:sympy_simplification}
\end{figure}

\paragraph{Computing preference structures} \textcolor{black}{Figure~\ref{fig:sympy_simplification} shows how to symbolically compute preference structure representations in Python using the computer algebra library \texttt{Sympy} \citep{meurer2017sympy}. Specifically, lines 8-12 show how to compute a preference structure in the no-reference case, and lines 14-20 show how to compute a reference form of \texttt{ORPO} by adding a reference ratio.}  

\section{Experiments and Case studies} 
\label{sec:experiment_details}


\textcolor{black}{Our formal analysis reveals that the space of DPA losses is large, yet structured in systematic ways that we can now describe through symbolic encodings. Through case studies involving the new losses in Figure~\ref{fig:lattice}, we discuss some empirical results that give tips for how to better navigate this space and look for improved DPA losses using our framework. Specifically, we focus on losses around the known loss $\ell_{\text{CPO}}$, which we treat as a natural baseline to compare against. All experiments are performed using a 0.5 billion parameter LLM, \texttt{Qwen-0.5B} \citep{bai2023qwen}, tuned using \texttt{trl} \citep{vonwerra2022trl} on the \texttt{ultrafeedback} dataset; following standard practice, losses were implemented with a weighted cross-entropy regularizer term.} 

\textcolor{black}{While these experiments are small scale and limited in scope, they are merely meant to suggest possible uses our framework and open questions. We also share some general observations and conjectures that we hope motivates future research in this area.}

Below we provide details of the experiment setting then discuss some results and observations.

\paragraph{Dataset and Model} Following much of the DPA work we cite, we train models on the \texttt{ultrafeedback} dataset \citep{cui2023ultrafeedback}, which contains around 60k binarized preference pairs aggregated from several individual preference datasets (the different categories are listed in Table~\ref{tab:win_rate}). For tuning (detailed below) we used a custom held-out development set containing around 1.3k examples taken from the train set and reserve the test set (containing 2k examples) for final evaluation. 

Standardly, we ran experiments starting from a instruction tuned model (SFT), using a \texttt{Qwen-0.5B} (containing .5 billion parameters) base model \citep{bai2023qwen} that was initially tuned on 6k pairs from the \texttt{deita} dataset of \cite{liu2023makes}. To avoid repeating the process of instruction tuning, we started from the trained \texttt{Qwen} model released in the TRL library\footnote{\url{https://huggingface.co/trl-lib/qwen1.5-0.5b-sft}}.  

\paragraph{Hyper-parameters and model selection}  The following are the standard set of tunable hyper-parameters involved in our experiments: the $\beta$ term for DPA losses (see again Table~\ref{tab:f}), the learning rate, number of epochs, batch size and length normalization. Following other studies, we also regularized our losses with cross-entropy terms (CE) that include a tunable weight parameter $\lambda$ that controls their contribution to the gradient. Specifically, we kept set $\beta$ to 1, and experimented with learning rates in the range \{\texttt{1e-6}, \texttt{3e-6}, \texttt{8e-6}, \texttt{9e-7}\}, number of epochs in the range of $\{3, 5, 8\}$ and  batches sizes in the range \{ \texttt{32}, \texttt{128} \} (for efficiency reasons, most tuning with done with a batch size of 32), which follow many of the suggested ranges in \citet{meng2024simpo}.  Importantly, length normalization was used throughout to make all losses comparable and given that it has been shown to improve training performance \citep{meng2024simpo}. We used $\lambda$s in the range of $\{0.0, 0.01, 0.1, 0.3, 1.0\}$ (we found lower values, around $0.01$ and $0.1$, to be most effective). 

For each loss function we searched the best hyper-parameters by performing a comprehensive grid search over the ranges detailed above. Final model selection was then performed by performing inference with each trained model on our held-out development set and scoring the resulting generating outputs using an off-the-shelf reward model, in particular, a 1.8B parameter reward model from \cite{cai2024internlm2}\footnote{\url{internlm/internlm2-1_8b-reward}}. We then selected the models with the highest average reward score over the development set for comparison.

For the log probability experiments shown in Figure~\ref{fig:constrainedness}, we kept the learning rate, epoch and cross-entropy term constant (with learning rate equal to $1e-6$, 3 epochs, and a low cross-entropy term $0.01$) to directly compare the different approaches and try to bring out their more extreme behavior.

\paragraph{Evaluation protocol and win-rate comparison} We compare models tuned using our different losses using a procedure similar to how model selection is performance, which also follows the setup in  \citet{hong2024reference}. Specifically, we do a instance-level comparison of the reward score given for each generated output, compare that score with the score of our baseline $\ell_{\texttt{cpo}}$ and compute an overall win-rate, i.e., \% of instances where the reward score is higher than or \underline{equal to} the reward score for $\ell_{\texttt{cpo}}$ (we consider cases where items are equal given that some tasks involve generating single token output, such as the identifier of a multiple choice question or \textbf{yes} or \textbf{no}). We report the average win-rate averaged over 3 runs of each models with different generation seeds.

\subsection{Results and discussion}
\label{app:results_discussion}

\paragraph{How does constrainedness relate to loss behavior? Unintentional alignment shortcuts}  \textcolor{black}{Moving left to the right in Figure~\ref{fig:lattice} yields semantically less constrained losses. For example, we see through the Boolean semantics in Figure~\ref{fig:constrainedness} that some unconstrained losses can be satisfied by making the winner and loser both false ($\ell_{\texttt{unCPO}}, \ell_{\texttt{cfUNL}}$) or by making the the winner and loser both true ($\ell_{\texttt{unCPO}}, \ell_{\texttt{cfUNL}}$). \textcolor{black}{One natural question is: \emph{How does constrainedness contribute to a loss functions empirical success?}}} 

\textcolor{black}{We observe, consistent with other recent work on neuro-symbolic modeling \citep{marconato2024not,van2024independence}, that such unconstrainedness can yield extreme behavior as illustrated in Figure~\ref{fig:constrainedness}.  For example, $\ell_{\texttt{unCPO}}$ and $\ell_{\texttt{cfUNL}}$ attempt to make both the winners and losers false by driving their probability in the direction of zero (as shown in in both training (b) and evaluation (c)), whereas $\ell_{\texttt{cfUNL}}$ keeps both probabilities high to make both true. When viewing learning as a constraint satisfaction problem, such behavior makes sense and could help to better understand various spurious training behavior observed elsewhere in the DPA literature, e.g., related to likelihood displacement and \emph{unintentional unalignment} studied in \citet{razin2024unintentional} or issues with preference ranking \citep{chen2024preference}. 
}

\textcolor{black}{These results suggest that understanding the way in which a loss is constrained and whether it gives rise to spurious or \textbf{unintentional alignment shortcuts}  (e.g., making both predictions false) is an important factor when designing new loss functions. We note that existing losses in Figure~\ref{fig:lattice} are in the middle of the two extreme points and seem less susceptible to such extreme behavior, which could explain their success.}

\begin{figure}

\centering 

\setlength{\tabcolsep}{3pt}
\begin{tabular}{l c}

{\footnotesize (\textbf{A})} & \\ 
\multicolumn{2}{c}{
    \includegraphics[scale=0.9]{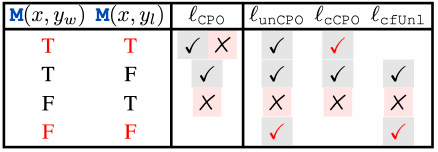}
} \\ 
{\footnotesize \textbf{(B)}} & \\[-0.05cm] 
\includegraphics[scale=0.28]{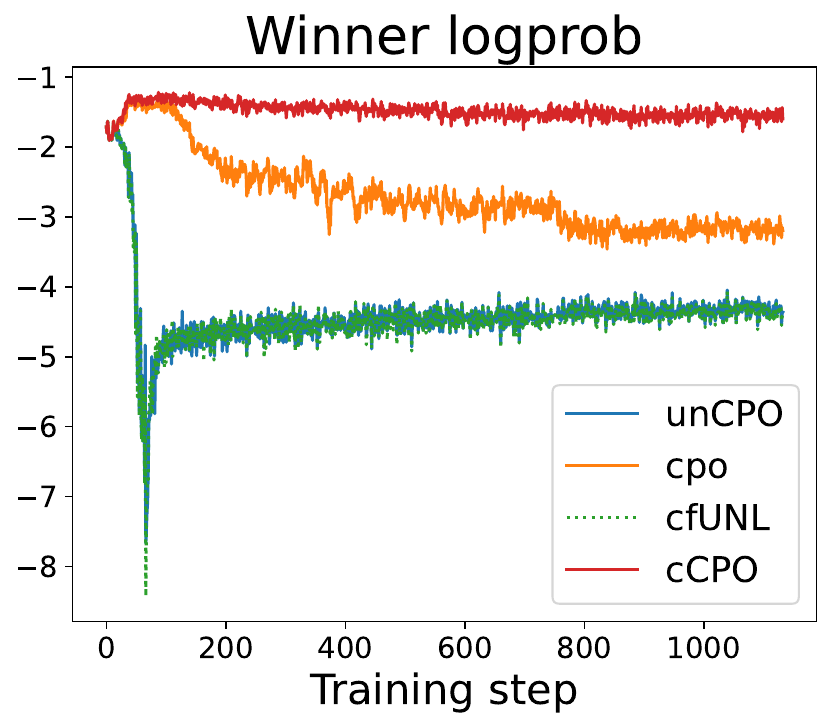} &  \includegraphics[scale=0.28]{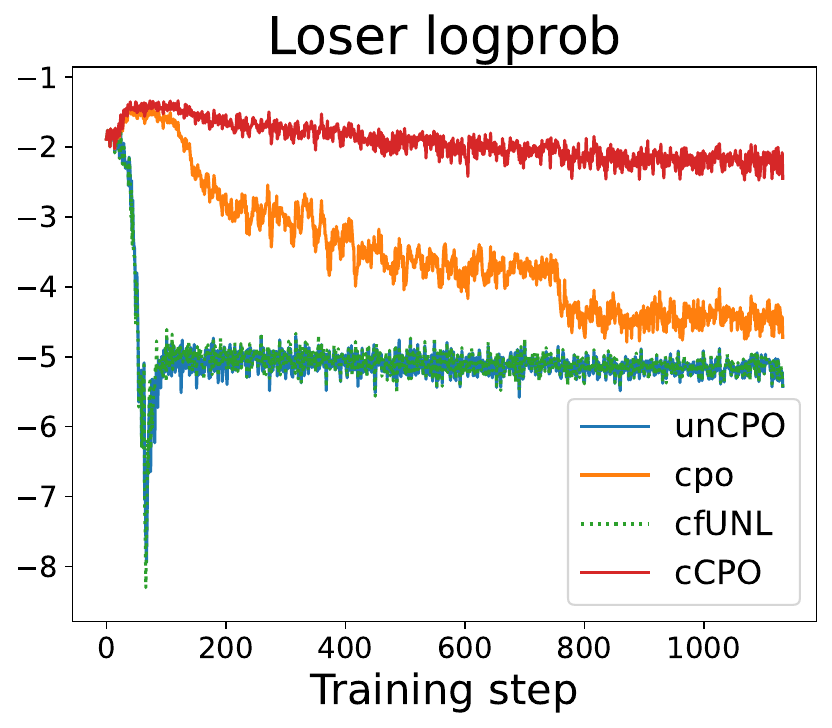} \\[-.2cm] 
{\footnotesize \textbf{(C)}} & \\ 
\includegraphics[scale=0.28]{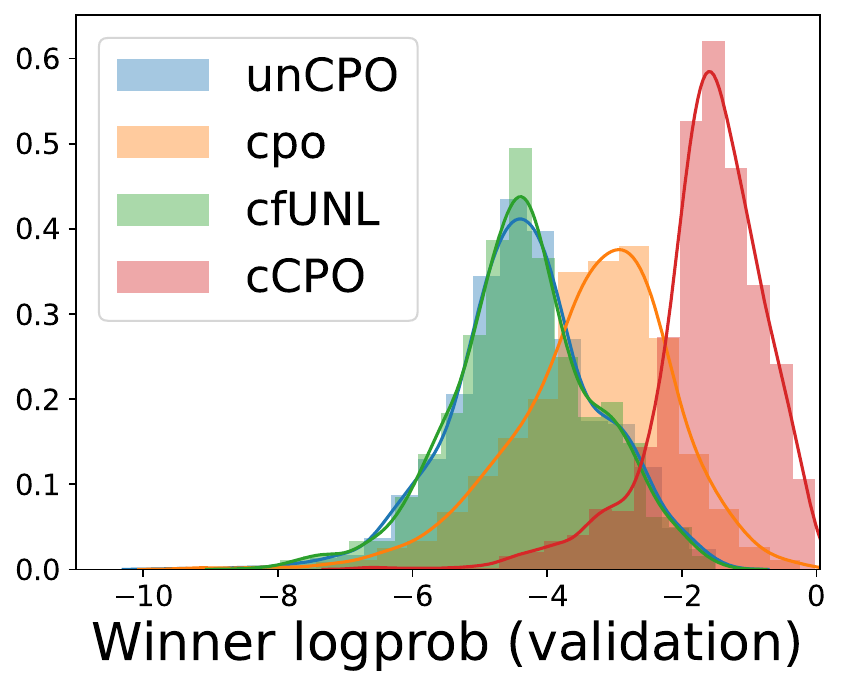} & \includegraphics[scale=0.28]{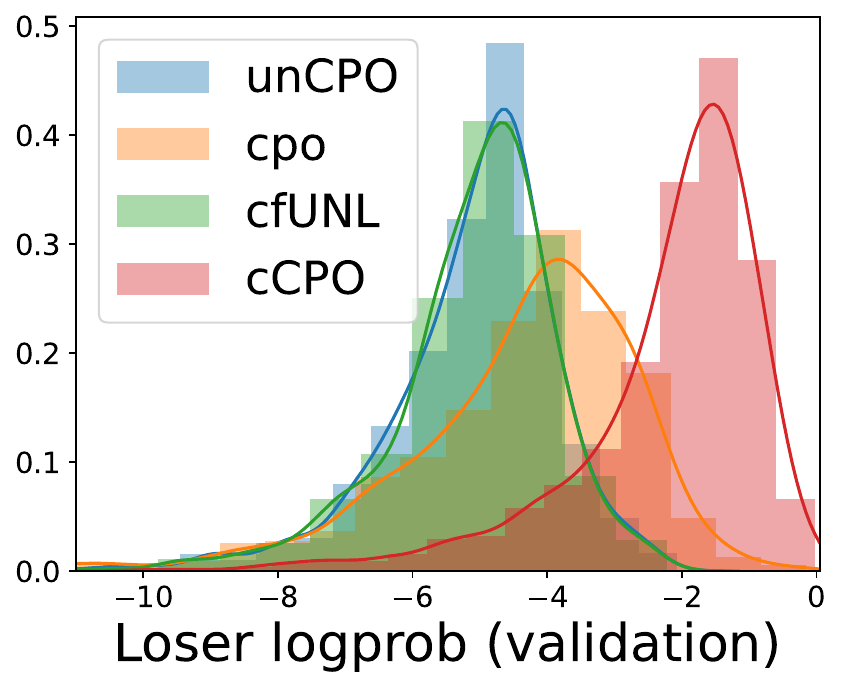} \\ 
\end{tabular}

\caption{An illustration (A) of how to semantically satisfy losses (\colorbox{gray!20}{$\checkmark$}) and the corresponding log probability behavior during training (B) and evaluation (C).}
\vspace{-0.6cm}
\label{fig:constrainedness}
\end{figure}

\paragraph{Can we find empirically improved losses using our method? Formalize and refine} \textcolor{black}{Our ultimate aim to use our framework to help discover new and successful preference algorithms.} \textcolor{black}{Given the spurious behavior of losses $\ell_{\texttt{unCPO}}$ and $\ell_{\texttt{cfUNL}}$, we would expect them to be less empirically successful. To test this and compare against $\ell_{\texttt{CPO}}$, we performed a model-as-judge-style experiment based on \cite{hong2024reference} that uses an off-the-shelf reward model \citep{cai2024internlm2} to score the outputs generated by our new models using the prompts from the  \texttt{ultrafeedback} test set. We then compare these rewards scores against those of $\ell_{\texttt{CPO}}$ to compute a win-rate, which gives an indication of improved or comparable generation quality over $\ell_{\texttt{CPO}}$. Indeed, we see in Table~\ref{tab:win_rate} that in aggregate, $\ell_{\texttt{unCPO}}$ and $\ell_{\texttt{cfUNL}}$ have the lowest win-rate against $\ell_{\texttt{CPO}}$. Interestingly, we see that $\ell_{\texttt{cCPO}}$ has a win-rate that suggests comparable generation quality to $\ell_{\texttt{CPO}}$, which shows the potential of using our framework to derive new and empirically successful losses.}

These experiments are an exercise in an approach we call \textbf{formalize and refine}, i.e., starting from empirically successful losses such as $\ell_{\texttt{CPO}}$, one can formalize such losses then modify the semantics to be more or less constrained based on empirical findings. We think more large scale exploration of the full loss space, especially for \texttt{DPO}, is a promising direction of future research.

\paragraph{Is there a single semantics for all preference learning? The different semantics conjecture} \textcolor{black}{We note that win-rate across different categories in \texttt{ultrafeedback} (i.e., the right most columns in Table~\ref{tab:win_rate}) varies quite considerably across models and loss types. This suggests that different types of preference data rely on a different semantics of preference, which requires a tuning approach that's tailored to those differences. We conjecture that such a phenomenon is likely to be wide spread across different tasks and datasets, and we see more empirical work on understanding the kinds of semantics needed in different scenarios as a promising direction of future work. Such work will benefit for recent attempts as incorporating more fine-grained annotation into preference such, such as in \citet{miranda2024hybrid}.}

\section{DPOP equation}
\label{sec:dpop}

The \texttt{DPOP} loss function in Table~\ref{tab:comparison} adds to the \texttt{DPO} an additional log term $\alpha \cdot \max(0,\log \frac{\probability_{\text{ref}}(y_{w} \mid x)}{\probability_{\theta}(y_{w} \mid x)})$ that aims to ensure that the log-likelihood of preferred example is high relative to the reference model (we simplified this loss by removing the $\max$ and $\alpha$ parameter, the latter of which is set to be a whole number ranging from 5 to 500 in \citet{pal2024smaug}). When translating the full loss into a single $\log$, this results in the equation $$\rho_{\theta} = \log \frac{
   \probability_{\text{ref}}( y_{l} \mid x)\probability_{\theta}(y_{w} \mid x)^{2}
 }{
   \probability_{\text{ref}}(y_{w} \mid x)^{2}\probability_{\theta}(y_{l} \mid x)}$$ for $\alpha=1$. The top and bottom equations are hence not multilinear since they both contain exponents $> 1$. To fix this, we  can simply create copies of these variables, e.g., with $P_{\theta}( y_p \mid x)^{2}$ and $P_{\text{ref}}( y_{l} \mid x)^{2}$ set to $P_{\theta}(y_p \mid x)P_{\theta2}(y_p \mid x)$ and $P_{\text{ref}}( y_{l} \mid x)P_{\text{ref2}}( y_{l} \mid x)$ using the copied prediction variables $P_{\theta2}(\cdot)$ and $P_{\text{ref2}}(\cdot)$. This type of variable copying also allows us to take into account the $\alpha$ and $\max$ above by setting the values of these copied variable to be $1$ whenever the log ratio is less than 0. 

Below we show the core semantic formula for \texttt{DPOP}, which, as noted before, makes a small adjustment to the \texttt{DPO} semantics as shown in Table~\ref{tab:formalization}:
\begin{lstlisting}
`$\progvariable :=$' Implies(
   And(Ref(`$\xvar$',`$\yvar_w$'),Ref`$_{2}$'(`$\xvar$',`$\yvar_w$'),M(`$\xvar$',`$\yvar_l$')), 
   And(Ref(`$\xvar$',`$\yvar_l$'),M(`$\xvar$',`$\yvar_w$'), M`$_{1}$'(`$\xvar$',`$\yvar_w$'))
)  
\end{lstlisting}

\section{Fuzzy derivations and semantics} 
\label{sec:fuzzy}

In contrast to the probabilistic logic approach pursued in our paper, real-valued fuzzy logics \citep{zadeh1975fuzzy} extend and relax classical logic by allowing truth values to have a continuous range. In these systems, traditional Boolean operators take the form of continuous functions, based on the theory of t-norms \citep{klement2013triangular}, which provides a means for directly translating logic into a differentiable form. As such, they have been widely used in machine learning as a way to integrate symbolic knowledge into learning \citep[\emph{inter alia}]{van2022analyzing,rocktaschel2015injecting,donadello2017logic,minervini2018adversarially,marra2019integrating}.

\begin{table} 
    \centering
    \setlength\arrayrulewidth{1.2pt}
    \begin{tabular}{| c c |}
\hline 
\textbf{Boolean logic} & \textbf{$\mathcal{R}$-Product} \\ \hline 
$\opand(a,b)$ & $\mathbf{a} \cdot \mathbf{b}$ \\ 
$\opneg(a)$ & $1 - \mathbf{a}$ \\ 
$\opor(a,b)$ & $\mathbf{a} + \mathbf{b} - \mathbf{a} \cdot \mathbf{b}$ \\
$\opimplication(a,b)$ & $\min(1,\frac{\mathbf{b}}{\mathbf{a}})$ \\ \hline 
\end{tabular}
  \caption{The translation of classical logic operators to $\mathcal{R}$-product logic for Boolean propositions $a,b$ and their relaxed versions $\mathbf{a},\mathbf{b}$.}
  \label{fig:rproduct}
\end{table}
\paragraph{No reference approach} For example, in Table~\ref{fig:rproduct} we define the semantics of the $\mathcal{R}$-product variant of fuzzy logic studied in \citet{grespan2021evaluating,giannini2023t}, which we use to derive fuzzy formulas for our preference losses. For convenience, we will use $\fuzzy{\progvariable}$ to denote the relaxed value of a formula $\progvariable$ under the semantics in Table~\ref{fig:rproduct} (for simplicity, we define this fuzzy function in terms of single formulas $\progvariable$ instead of preference structures). The fuzzy loss for $\progvariable$ is then defined as below: 
\begin{align}
\ell_{\text{fuzz}}(\progvariable,\theta,D) := \mathop{\mathbb{E}}\limits_{d \sim D} \bigg[ -\log \simplefuzzy{\progvariable_{d}} \bigg].
\end{align}

For the single model case, taking the core formulas to be $\progvariable$ to be the following (i.e., the core semantic formula for $\texttt{CPO}$):
\begin{lstlisting}
`$\progvariable_{\text{CPO}} :=$' Implies(M(`$\xvar$',`$\yvar_{l}$'), M(`$\xvar$',`$\yvar_{w}$'))
\end{lstlisting}
we see the following holds (via algebraic manipulation): 
\begin{align*}
-\log \simplefuzzy{\progvariable_\texttt{CPO}} &= -\log \min \bigg(1,  \frac{P_{\theta}(y_{w} \mid x)}{P_{\theta}(y_{l} \mid x)} \bigg) \\
&= \max\bigg(0,-\log \frac{P_{\theta}(y_{w} \mid x)}{P_{\theta}(y_{l} \mid x)} \bigg) 
\end{align*}
Making the $\ell_{\text{fuzz}}(\progvariable_{\text{CPO}},\theta,D)$ equal to perceptron-style loss \texttt{RRHF} \citep{yuan2023rrhf} in Table~\ref{tab:f}. Given that the same core semantic formula above can be recovered between the fuzzy and probabilistic approaches, we see this as giving additional motivation to using our preference structure representations. 


\paragraph{DPO and reference approaches} For \texttt{DPO} we see a similar derivation. Given the same core formula from Table~\ref{tab:formalization}: 
\begin{lstlisting}
`$\progvariable_{\texttt{DPO}} :=$' Implies(
  And(Ref(`$\xvar$',`$\yvar_{w}$'),M(`$\xvar$',`$\yvar_{l}$')), 
  And(Ref(`$\xvar$',`$\yvar_{l}$'),M(`$\xvar$',`$\yvar_{w}$'))
)
\end{lstlisting}
we see the following explicit derivation into fuzzy logic:
\begin{align*}
\simplefuzzy{\progvariable_{\texttt{DPO}}} =  \min\left(1, 
  \frac{
  P_{\text{ref}}\left( y_l \mid x \right)P_{\theta}\left( y_w \mid x \right)
}{
  P_{\text{ref}}\left( y_w \mid x \right)P_{\theta}\left( y_l \mid x \right)
}
\right) 
\end{align*}
and $-\log \,\simplefuzzy{\progvariable_{\texttt{DPO}}}$ equal to:
\begin{align*}
\max\left(0, -\bigg(
\log\frac{P_{\theta}\left( y_w \mid x \right)}{P_{\text{ref}}\left( y_w  \mid x \right)} -
\log\frac{P_{\theta}\left( y_l \mid x \right)}{P_{\text{ref}}\left( y_l  \mid x \right)} 
\right)\bigg)
\end{align*}
yielding once again a perceptron-style version of \texttt{DPO} $\ell_{\text{fuzz}}(\progvariable_{\text{DPO}},\theta,D)$ similar to the RRHF approach.

\paragraph{Derivation for Fuzzy Logic} To perform decompilation with fuzzy logic, we can employ a variant Algorithm~\ref{alg:sl_translation} that removes lines 4 and 5 and makes $\progvariable_{\textbf{C}} := \top$ and $\progvariable_{\textbf{A}} := \bot$ by default. Importantly, given the syntactic nature of fuzzy logic, whether or not \textsc{Simplify} is applied in line 3 will give rise to different fuzzy loss values since the fuzzy loss is not invariant to logical equivalence (see discussion in \citet{marra2024statistical}).

\end{document}